%% file: main.tex
\IfFormatAtLeastTF{2026-06-01}{%
  \RemoveFromHook{package/amsthm/after}[firstaid/aliascounter]%
}{}
\AtBeginDocument{%
  \let\PaperOriginalBibliography\bibliography
  \renewcommand{\bibliography}[1]{\PaperOriginalBibliography{refs}}%
}
\documentclass[11pt]{article}
\input{preamble_arxiv}
\input{macros}
\input{theorems}
\input{generated/result_full_versions}
\input{generated/table_layout_iclr9}

\title{One Spectrum, Two Resources:\\Data--Memory Scaling in Autoregressive Prediction}
\author{%
Chiwun Yang\thanks{\texttt{christiannyang37@gmail.com}. City University of Hong Kong.}
\and
Xiaoyu Li\thanks{\texttt{xiaoyu.li2@unsw.edu.au}. University of New South Wales.}}
\date{}
\hypersetup{
  pdftitle={One Spectrum, Two Resources: Data-Memory Scaling in Autoregressive Prediction},
  pdfauthor={Chiwun Yang, Xiaoyu Li}}

\begin{document}
\maketitle
\begin{abstract}
\input{sections/0_abstract_iclr9}
\end{abstract}

\input{arxiv/sections/paper_body}

\FloatBarrier
\bibliographystyle{plainnat}
\IfFileExists{refs.bib}{\bibliography{refs}}{\bibliography{../refs}}

\newpage
\appendix
\begin{center}
\LARGE \bf APPENDIX
\end{center}
\input{arxiv/sections/appendix_body}

\end{document}

%% file: preamble_arxiv.tex
\usepackage[T1]{fontenc}
\usepackage[utf8]{inputenc}
\usepackage{lmodern}        
\usepackage{iftex}          
\ifPDFTeX                   
  \usepackage[activate={true,nocompatibility},final]{microtype}
\else                       
  \usepackage[protrusion=true,final]{microtype}
\fi

\usepackage[margin=1in]{geometry}

\usepackage{amsmath}
\usepackage{amssymb,amsfonts}
\usepackage{mathtools}      
\usepackage{bm}             
\usepackage{nicefrac}
\allowdisplaybreaks         

\usepackage{amsthm}
\usepackage{thm-restate}    

\usepackage{graphicx}
\graphicspath{{figures/}{./}}                       
\usepackage{booktabs}                               
\usepackage{multirow,makecell,array}
\usepackage[font=small,labelfont=bf]{caption}       
\usepackage{subcaption}                             
\usepackage{placeins}                               
\usepackage{float}                                  
\usepackage{algorithm}
\usepackage{algpseudocode}                          


\usepackage{enumitem}
\setlist[itemize]{leftmargin=2.2em,itemsep=2pt,topsep=2pt}
\setlist[enumerate]{leftmargin=2.2em,itemsep=2pt,topsep=2pt}

\usepackage{xcolor}
\definecolor{LinkColor}{rgb}{0.10,0.40,0.75}        
\definecolor{CiteColor}{rgb}{0.70,0.25,0.20}        
\definecolor{UrlColor} {rgb}{0.20,0.50,0.50}        

\usepackage{tikz}
\usetikzlibrary{positioning,calc,arrows.meta}

\usepackage[round,sort&compress]{natbib}

\usepackage{url}
\usepackage{hyperref}
\hypersetup{
  colorlinks=true,
  linkcolor=LinkColor,
  citecolor=CiteColor,
  urlcolor=UrlColor,
  breaklinks=true,
  bookmarksnumbered=true,
}
\usepackage{bookmark}                               
\usepackage[capitalise,nameinlink,noabbrev,sort&compress]{cleveref}  

\numberwithin{equation}{section}

%% file: macros.tex
\newcommand{\R}{\mathbb{R}}
\newcommand{\N}{\mathbb{N}}

\newcommand{\E}{\mathbb{E}}                       
\renewcommand{\P}{\mathbb{P}}                     

\newcommand{\defeq}{\coloneqq}                    

\DeclarePairedDelimiter{\abs}{\lvert}{\rvert}
\DeclarePairedDelimiter{\norm}{\lVert}{\rVert}

\DeclarePairedDelimiter{\ceil}{\lceil}{\rceil}
\DeclarePairedDelimiter{\floor}{\lfloor}{\rfloor}
\DeclarePairedDelimiterX{\inner}[2]{\langle}{\rangle}{#1,#2}   

\DeclareMathOperator{\KL}{KL}
\DeclareMathOperator{\Ber}{Ber}
\newcommand{\sig}{\sigma}
\newcommand{\Excess}{\mathfrak{E}}
\newcommand{\Linf}{\mathcal{L}_{\infty}}
\newcommand{\ind}{\mathbf{1}}
\newcommand{\dd}{\,\mathrm{d}}

%% file: theorems.tex
\theoremstyle{plain}
\newtheorem{theorem}{Theorem}[section]
\newtheorem{proposition}[theorem]{Proposition}
\newtheorem{lemma}[theorem]{Lemma}
\newtheorem{corollary}[theorem]{Corollary}

\theoremstyle{definition}

\theoremstyle{remark}

\crefname{theorem}{Theorem}{Theorems}          \Crefname{theorem}{Theorem}{Theorems}
\crefname{proposition}{Proposition}{Propositions}
\Crefname{proposition}{Proposition}{Propositions}
\crefname{lemma}{Lemma}{Lemmas}                \Crefname{lemma}{Lemma}{Lemmas}
\crefname{corollary}{Corollary}{Corollaries}   \Crefname{corollary}{Corollary}{Corollaries}
\crefname{conjecture}{Conjecture}{Conjectures} \Crefname{conjecture}{Conjecture}{Conjectures}
\crefname{fact}{Fact}{Facts}                   \Crefname{fact}{Fact}{Facts}
\crefname{definition}{Definition}{Definitions} \Crefname{definition}{Definition}{Definitions}
\crefname{assumption}{Assumption}{Assumptions} \Crefname{assumption}{Assumption}{Assumptions}
\crefname{example}{Example}{Examples}          \Crefname{example}{Example}{Examples}
\crefname{problem}{Problem}{Problems}          \Crefname{problem}{Problem}{Problems}
\crefname{remark}{Remark}{Remarks}             \Crefname{remark}{Remark}{Remarks}
\crefname{claim}{Claim}{Claims}                \Crefname{claim}{Claim}{Claims}
\crefname{algorithm}{Algorithm}{Algorithms}    \Crefname{algorithm}{Algorithm}{Algorithms}

%% file: generated/result_full_versions.tex
\hypersetup{hypertexnames=false}
\addtotheorempostheadhook[theorem]{\leavevmode\phantomsection}
\addtotheorempostheadhook[lemma]{\leavevmode\phantomsection}
\addtotheorempostheadhook[corollary]{\leavevmode\phantomsection}
\addtotheorempostheadhook[proposition]{\leavevmode\phantomsection}

\newcommand{\FullThmSpectrum}{%
\begin{theorem}[One spectrum controls data and memory; formal version of \cref{thm:spectrum}]
\label{full:thm:spectrum}
Let \(q_i>0\), \(\sum_iq_i<\infty\), and let \(\mu\), \(\Phi_\mu\),
and \(\tau_B\) be defined above.  Then
\begin{align}
\mathcal S_q(n)&:=\sum_i\min\{q_i,n^{-1}\}=\Phi_\mu(n^{-1}),
\label{full:eq:spectrum-data}\\
D_q(B)&:=\inf_{\substack{b_i\ge0\\\sum_i b_i\le B}}
\sum_iq_i2^{-2b_i}=\Phi_\mu(\tau_B).
\label{full:eq:spectrum-bits}
\end{align}
For the bounded-logit value experiment,
\begin{equation}
\boxed{\mathfrak R^*_{\rm value}(n,B)
\asymp_R\Phi_\mu(n^{-1})+\Phi_\mu(\tau_B).}
\label{full:eq:spectrum-risk}
\end{equation}
Moreover, for every \(t>0\),
\begin{equation}
\Phi'_{\mu,+}(t)=\mu((t,\infty)),\qquad
\Phi'_{\mu,-}(t)=\mu([t,\infty)).
\label{full:eq:spectrum-derivatives}
\end{equation}
Hence the exact curve \(\Phi_\mu\) recovers the positive spectrum,
including atom multiplicities, up to coordinate relabeling.
\end{theorem}
}

\newcommand{\FullCorSpectrumRates}{%
\begin{corollary}[Spectral scaling exponents; formal version of \cref{cor:spectrum-rates}]
\label{full:cor:spectrum-rates}
If \(\mathcal N_\mu(t)\sim t^{-\rho}L(1/t)\) for \(0<\rho<1\), where
\(L\) is slowly varying, then
\[
\Phi_\mu(n^{-1})\in\operatorname{RV}_{-(1-\rho)},\qquad
\Phi_\mu(\tau_B)\in\operatorname{RV}_{-(1-\rho)/\rho}.
\]
For \(a_j\asymp j^{-(1+\chi)}\) and \(d_j\asymp j^\gamma\),
\(\rho=(1+\gamma)/(1+\chi+\gamma)\), giving rates
\(n^{-\chi/(1+\chi+\gamma)}\) and
\(B^{-\chi/(1+\gamma)}\), up to slowly varying factors.
\end{corollary}
}

\newcommand{\FullThmDyadic}{%
\begin{theorem}[Same block marginals, different scaling laws; formal version of \cref{thm:dyadic}]
\label{full:thm:dyadic}
For every \(\chi,\gamma>0\), there are two positive-conditional-entropy
causal-retrieval sources with exactly the same block-energy multiset and the
same block-dimension multiset at every dyadic level.  They differ only in the
pairing.  All displayed dimensions in this construction are rounded upward
to integers.  With the common fixed-\(K\) route revealed,
\[
\mathfrak R_{\rm easy}(n,B)
\asymp n^{-\chi/(1+\chi)}+B^{-\chi},
\]
whereas
\[
\mathfrak R_{\rm hard}(n,B)
\asymp n^{-\chi/(1+\chi+\gamma)}+B^{-\chi/(1+\gamma)}.
\]
\end{theorem}
}

\newcommand{\FullPropQK}{%
\begin{proposition}[Exact query--key realization; formal version of \cref{prop:qk}]
\label{full:prop:qk}
Under the strict causal mask exposing exactly the \(K\) preceding candidates,
\cref{eq:qk-equivalence} reproduces the route-table scores and attention
probabilities exactly.  If only the query-address block of \(W_Q\) is
trained and \(W_K\) is fixed, Euclidean SGD on \(W_Q\) induces the same
update of \(A\); the attended feature and output-vector gradient are
unchanged.
\end{proposition}
}

\newcommand{\FullPropRouting}{%
\begin{proposition}[Online routing dynamics; formal version of \cref{prop:online-routing}]
\label{full:prop:online-routing}
For \(K\ge2\), zero initialization and softmax cross-entropy steps of size
\(0<\eta\le1\) give the margin and distractor mass after \(t\) row visits:
\[
\beta_{t+1}=\beta_t+\eta\frac{K}{e^{\beta_t}+K-1},\qquad
\delta_t=\frac{K-1}{e^{\beta_t}+K-1}.
\]
After \(b_K=\Theta_\eta(\log(eK))\) visits,
\(\delta_t\asymp_\eta[1+(t-b_K)_+]^{-1}\).  For \(S_A\) independent
calibration blocks and a held-out row,
\[
\mathfrak r_K(S_A)=\E[\delta_N^2],\quad
N\sim\operatorname{Binomial}(S_A,1/K),
\]
and the hitting time to \(\mathfrak r_K(S_A)\le\varepsilon\),
\(0<\varepsilon\le1/16\), is
\(\Theta_\eta(K\{\varepsilon^{-1/2}+\log(eK)\})\).
For \(K=1\), leakage and calibration cost are zero.
\end{proposition}
}

\newcommand{\FullThmCausalRealization}{%
\begin{theorem}[Causal self-attention realization; formal version of \cref{thm:causal-realization}]
\label{full:thm:causal-realization}
For the source in \cref{sec:revision-model}, let Stage A start from zero
and use the routing update in \cref{prop:online-routing} on \(S_A\)
independent calibration blocks.  Freeze the route before Stage V, which uses
\(n\ge1\) independent labeled blocks.  With public \(p_i,\Delta_i\), set
\[
A_n=\{i:q_i\ge n^{-1}\},\quad
\mathcal W=\prod_{i\in A_n}[-\Delta_i,\Delta_i],\quad
D=\operatorname{diag}(p_i)_{i\in A_n},\quad w_1=0.
\]
Apply \cref{eq:actual-output-sgd} to the attended feature restricted to
\(A_n\), with zero output coordinates elsewhere, and use
\[
\mu_{\rm sgd}=\frac9{16}\min_{|z|\le R}\sigma'(z),\qquad
\eta_t=\frac{2}{\mu_{\rm sgd}(t+1)},\qquad
\bar w_n=\frac{2}{n(n+1)}\sum_{t=1}^n t\,w_t.
\]
If \(A_n\) is empty, take \(\bar w_n=0\).
For each public format-range ratio \(\kappa_i\ge1\) and integer width
\(k\ge0\), the scalar decoder is defined as follows.  If
\(\kappa_i2^{-k}\ge1\), its output is zero.  Otherwise quantize uniformly
with \(2^k\) midpoint values on
\([-\kappa_i\Delta_i,\kappa_i\Delta_i]\), then clip the decoded value to
\([-\Delta_i,\Delta_i]\).  The allocation is public and deterministic;
quantize \(\bar w_n\) with total width at most \(B_V\), choosing an allocation
within an arbitrarily small error of the infimum below and flooring its widths.
Let \(\mathcal E_{\rm arithmetic}\) bound the mean squared difference between
the computed and exact logits of this stored network.  Then
\begin{equation}
\E[\mathcal R_\theta(\widehat F)-\Linf]
\lesssim_{R,K,\eta}
\Phi_\mu(n^{-1})+D^{\rm fmt}_{q,\kappa}(B_V)
+\mathfrak r_K(S_A)+\mathcal E_{\rm arithmetic},
\label{full:eq:causal-realization}
\end{equation}
where \(B_V\) is the value-state bit budget and
\[
D^{\rm fmt}_{q,\kappa}(B_V)
=\inf_{b_i\ge0,\,\sum_i b_i\le B_V}
\sum_iq_i\min\{1,\kappa_i^22^{-2b_i}\}.
\]
For \(\kappa_i=1\), this equals \(\Phi_\mu(\tau_{B_V})\), and the network
attains the core surface at total budget \(B\) for power-law spectra, fixed
\(K\), \(o(B)\) routing bits, and lower-order routing/arithmetic errors.
\end{theorem}
}

\newcommand{\FullThmAdaptive}{%
\begin{theorem}[Adaptation with a known ordering; formal version of \cref{thm:adaptive}]
\label{full:thm:adaptive}
Fix \(0<\chi_{\min}\le\chi_{\max}<\infty\), and suppose the canonical mode
order is known.  Assume
\(q_j\asymp j^{-(1+\chi)}\)
with common comparison constants and unknown
\(\chi\in[\chi_{\min},\chi_{\max}]\). For every \(n,B\in\N\), one clipped
likelihood estimator and one deterministic integer bit schedule, both
independent of \(\chi\), satisfy simultaneously throughout this interval
\begin{equation}
  \sup_{\theta_j\in[-\Delta_j,\Delta_j]}
  \E_\theta\!\left[
    \mathcal L_\theta(\widehat f)-\mathcal L_\theta(\theta)
  \right]
  \lesssim
  n^{-\chi/(1+\chi)}+B^{-\chi}.
  \label{full:eq:adaptive-rate}
\end{equation}
\end{theorem}
}

\newcommand{\FullThmTransformer}{%
\begin{theorem}[Finite-precision realization; formal version of \cref{thm:transformer}]
\label{full:thm:transformer}
Fix \(K,M\ge1\), \(R<\infty\), \(\beta\ge0\), and the construction above.

\begin{enumerate}
  \item In exact arithmetic, if \(\abs{w_j^{\rm mode}}\le R\) for every
  \(j\in[M]\), then a query whose matching candidate has any mode \(j\ge1\)
  satisfies
  \begin{equation}
    \abs{z-\vartheta_j}\le2R(K-1)e^{-\beta}.
    \label{full:eq:routing-bound}
  \end{equation}

  \item Set \(w_j^{\rm mode}=\theta_j\) for \(j\in[M]\) and store it with
  \(b_j\in\N_0\) bits on
  \([-\Delta_j,\Delta_j]\).  Assume
  stable max-shifted softmax with exactly representable shifted scores \(0,-\beta\)
  and an exponent range containing their exponentials. Exponentials,
  multiplications, and divisions have relative error at most \(u\);
  sums use balanced pairwise accumulation. For the resulting predictor
  \(f_{\rm fp}\), if
  \((\ceil{\log_2K}+4)u\le1/4\), then the held-out excess cross-entropy obeys
  \begin{equation}
    \mathcal L_\theta(f_{\rm fp})-\Linf
    \le
    C_R\left\{
      (K-1)^2e^{-2\beta}
      +u^2(1+\log K)^2
      +\sum_{j=1}^M q_j2^{-2b_j}
      +\sum_{j>M}q_j
    \right\}.
    \label{full:eq:fp-bound}
  \end{equation}

  \item Within the separable scalar-table representation in which coordinate
  \(j\) has at most \(2^{b_j}\) values, the bit-dependent term in
  \cref{eq:fp-bound} is tight up to constants.
\end{enumerate}
Here and below, \(C_R\) depends only on the logit bound \(R\).
\end{theorem}
}

\newcommand{\FullThmCompute}{%
\begin{theorem}[Precision-aware compute phases; formal version of \cref{thm:compute}]
\label{full:thm:compute}
Let \(\chi>0\) and \(f,\nu\ge0\).  Optimize over configurations using
\(n(M)\asymp M^{1+\chi}\) blocks whose total excess risk and cost satisfy,
uniformly over \(M,b\ge1\),
\[
\mathcal E(M,b)\asymp M^{-\chi}+\kappa(M)^22^{-2b},\qquad
\operatorname{cost}(M,b)\asymp M^{f+1+\chi}b^\nu.
\]
Let \(\mathcal R(C)-\Linf=\inf_{\operatorname{cost}(M,b)\le C}\mathcal E(M,b)\).
Then:
\begin{enumerate}
\item If \(1\lesssim\kappa(M)\lesssim M^\rho\), then
the least model-error-matching precision is \(b=\Theta(\log M)\), and
\begin{equation}
\mathcal R(C)-\Linf\asymp
C^{-\chi/(f+1+\chi)}
(\log C)^{\chi\nu/(f+1+\chi)}
\label{full:eq:poly-conditioning}
\end{equation}
up to other slowly varying factors specified in the assumptions.
\item If \(\kappa(M)=\exp(\Theta(M^\lambda))\), \(\lambda>0\), then
\(b=\Theta(M^\lambda)\), and the loss exponent is
\(\chi/(f+1+\chi+\lambda\nu)\).
\item If \(q_j\asymp j^{-(1+\chi)}\), adaptive per-mode precision has
error \(\Theta(B^{-\chi})\), whereas a uniform-width prefix has optimal
error
\begin{equation}
\Theta((B/\log B)^{-\chi}).
\label{full:eq:uniform-penalty}
\end{equation}
\end{enumerate}
Each converse uses the representation, operation set, storage layout, and
bit-operation cost specified above.
\end{theorem}
}

\newcommand{\FullLemBitComplexity}{%
\begin{lemma}[Transformer forward--backward bit complexity; formal version of \cref{lem:bit-complexity}]
\label{full:lem:bit-complexity}
For the graph above, with positive dimensions and conventional dense matrix
multiplication,
\begin{equation}
  \operatorname{BitOps}_{\rm fb}
  \asymp L\Bigl[
    \bigl(\ell d^2+\ell d m_{\rm ff}+\ell^2d\bigr)
      \mathsf M(b_{\rm mm})
    +H\ell^2\mathsf S(b_{\rm sm})
  \Bigr].
  \label{full:eq:fb-bitops}
\end{equation}
\begin{equation}
  \operatorname{Mem}_{\rm fb}
  \asymp
    b_wL(d^2+dm_{\rm ff})
    +b_aL\ell(d+m_{\rm ff})
    +b_{\rm st}LH\ell^2
    +\operatorname{Mem}_{\rm hp}.
  \label{full:eq:fb-memory}
\end{equation}
The memory \(\operatorname{Mem}_{\rm fb}\) is measured in bits.  The values
\(b_w\), \(b_a\), and \(b_{\rm st}\) count aggregate bits per logical weight,
activation, and saved-attention slot at peak memory; they include any
same-shaped gradient or backward-workspace buffer materialized by this eager
implementation.  The term
\(\operatorname{Mem}_{\rm hp}\) contains master weights, optimizer state, and
all other explicitly retained high-precision data.  A streaming, recomputing
attention implementation replaces the quadratic attention state saved for
backward by linear online-softmax state plus temporary tile workspace, while
leaving the dense attention arithmetic order unchanged.
\end{lemma}
}

\newcommand{\FullThmPrecisionTradeoff}{%
\begin{theorem}[Compute--precision Pareto law; formal version of \cref{thm:precision-tradeoff}]
\label{full:thm:precision-tradeoff}
Assume \cref{eq:mixed-risk-relations,eq:mixed-cost-relations},
\(\kappa_{\rm mm}(M)\asymp M^r\), and
\(\kappa_{\rm sm}(M)\asymp M^{r_{\rm sm}}\), with all displayed relations
two-sided and \(r,r_{\rm sm}\ge0\).  For the specified dense graph,
\(H\le d\) implies \(G\gtrsim S\) and hence \(g\ge s\).
Item 3 concerns general operation counts satisfying the same two-sided
relations; items 1, 2, and 4 apply to both settings.
\begin{enumerate}
  \item The smallest matrix precision that keeps matrix rounding at the model
  error is
  \begin{equation}
    b_{\rm match}(M)
    =\left(r+\frac{\chi}{2}\right)\log_2M+O(1).
    \label{full:eq:matching-bits}
  \end{equation}

  \item If the matrix term dominates the bracket in
  \cref{eq:mixed-cost-relations} at matched precisions---in particular in the
  case \(g\ge s\) when the remaining precision factors do not reverse
  that dominance---then
  \begin{equation}
    \inf_{M,b,b_{\rm sm}:\,\mathrm{cost}\le C}\mathcal E
    \asymp
    C^{-\chi/(1+\chi+g)}
    (\log C)^{\chi\nu/(1+\chi+g)},
    \label{full:eq:matrix-dominated-phase}
  \end{equation}
  up to other declared slowly varying factors.

  \item For general operation counts, if \(s>g\), \(b_{\rm sm}\) is fixed,
  and the softmax term dominates
  computation, then for budgets in the pre-floor range
  \[
    C^{1/(1+\chi+s)}
    \lesssim 2^{2b_{\rm sm}/(\chi+2r_{\rm sm})},
  \]
  \begin{equation}
    \inf_{M,b:\,\mathrm{cost}\le C}\mathcal E
    \asymp C^{-\chi/(1+\chi+s)}.
    \label{full:eq:softmax-dominated-phase}
  \end{equation}
  Reducing matrix precision cannot improve this leading exponent.

  \item At fixed matrix precision \(b\), with softmax precision optimized,
  the model--rounding crossover and the infimum error over model scales are
  \begin{equation}
    M_b\asymp2^{2b/(\chi+2r)},
    \qquad
    \mathcal E_{\rm floor}(b)
    \asymp2^{-2\chi b/(\chi+2r)}.
    \label{full:eq:fixed-bit-floor}
  \end{equation}
  A fixed \(b_{\rm sm}\)-bit softmax similarly supports the scaling phase only
  while
  \begin{equation}
    M\lesssim2^{2b_{\rm sm}/(\chi+2r_{\rm sm})}.
    \label{full:eq:softmax-horizon}
  \end{equation}
\end{enumerate}
The displayed two-sided risk and arithmetic assumptions define the class over
which the infima and matching bounds are taken.
\end{theorem}
}

\newcommand{\FullLemJointCodebook}{%
\begin{lemma}[Arbitrary joint finite-state representations; formal version of \cref{lem:joint-codebook}]
\label{full:lem:joint-codebook}
Let \(\Theta_j\) be independent and uniform on
\([-\Delta_j,\Delta_j]\), let \(p_j>0\), and put
\(q_j=p_j\Delta_j^2\).  For \(B\in\N_0\), every encoder--decoder pair with at most \(2^B\)
reconstruction vectors satisfies
\begin{equation}
  \sum_jp_j\E(\Theta_j-\widehat\Theta_j)^2
  \ge
  \frac{2}{\pi e}
  \inf_{\substack{b_j\ge0\\\sum_jb_j\le B}}
  \sum_jq_j2^{-2b_j}.
  \label{full:eq:joint-codebook}
\end{equation}
The reconstruction vectors may couple all coordinates.
\end{lemma}
}

\newcommand{\FullThmFiniteBit}{%
\begin{theorem}[Finite-bit autoregressive minimax law; formal version of \cref{thm:finite-bit}]
\label{full:thm:finite-bit}
Fix \(R<\infty\).  For every \(n\in\N\), \(B\in\N_0\), and source family
specified by \((p_j,\Delta_j)_{j\ge1}\) as in \cref{sec:source}, there are
constants \(0<c_R\le C_R<\infty\) such that
\begin{equation}
  c_R\{S_q(n)+D_q(B)\}
  \le
  \Excess^*(n,B)
  \le
  C_R\{S_q(n)+D_q(B)\},
  \label{full:eq:finite-bit-law}
\end{equation}
where
\begin{equation}
  S_q(n)\defeq\sum_j\min\{q_j,n^{-1}\},
  \qquad
  D_q(B)\defeq
  \inf_{\substack{b_j\ge0\\\sum_jb_j\le B}}
  \sum_jq_j2^{-2b_j}.
  \label{full:eq:SqDq}
\end{equation}
The lower bound holds for arbitrary \(2^B\)-state joint encoders and arbitrary
fixed shared decoding computation.
\end{theorem}
}

\newcommand{\FullCorPowerProfile}{%
\begin{corollary}[Power-law energies; formal version of \cref{cor:power-profile}]
\label{full:cor:power-profile}
Suppose that \(n,B\ge1\) and, for some \(\chi>0\) and constants
\(0<c_-\le c_+<\infty\),
\[
  c_-j^{-(1+\chi)}
  \le q_j\le
  c_+j^{-(1+\chi)}
  \qquad(j\ge1).
\]
Then
\begin{equation}
  \Excess^*(n,B)
  \asymp
  n^{-\chi/(1+\chi)}+B^{-\chi}.
  \label{full:eq:power-pac}
\end{equation}
The data--bit crossover occurs at
\(B\asymp n^{1/(1+\chi)}\).
\end{corollary}
}

\newcommand{\FullThmMarginalsFail}{%
\begin{theorem}[Marginals do not identify scaling; formal version of \cref{thm:marginals-fail}]
\label{full:thm:marginals-fail}
For every \(\chi,\gamma>0\), there are two instances of
\cref{eq:selection-action} with identical energy multisets and identical cost
multisets at every dyadic level, but
\begin{equation}
  \mathcal A_{\rm easy}(C)\asymp C^{-\chi},
  \qquad
  \mathcal A_{\rm hard}(C)\asymp C^{-\chi/(1+\gamma)}.
  \label{full:eq:coupling-separation}
\end{equation}
The same two exponents hold when selection is replaced by continuous bits
with distortion \(q_j2^{-2b_j}\) and cost \(c_jb_j\).  Hence no summary
that retains only the two marginal multisets identifies the optimal exponent
in either representation class.
\end{theorem}
}

\newcommand{\FullThmJointSpectrum}{%
\begin{theorem}[Joint energy--difficulty characterization; formal version of \cref{thm:joint-spectrum}]
\label{full:thm:joint-spectrum}
For a finite collection of modes, \(\eta\) is a maximal invariant under
permutations of mode labels: it retains all information except the arbitrary
names of the modes.  For a finite or countable collection, consider the
separable problem \cref{eq:weighted-distortion} and assume
\[
  \sum_ja_j<\infty,
  \qquad
  \sum_{j:a_j/c_j>t}c_j<\infty
  \quad\text{for every }t>0.
\]
Then the spectrum \(\mu=\sum_jc_j\delta_{a_j/c_j}\) determines
\(D(B)\) for every \(B\ge0\).  Conversely, the complete curve
\(B\mapsto D(B)\) determines \(\mu\) on \((0,\infty)\).
Consequently, \(\mu\) is a minimal exact unlabeled summary of the complete
budget--distortion curve for this representation class.
\end{theorem}
}

%% file: generated/table_layout_iclr9.tex
\newsavebox{\IclrNineTableBox}
\newcommand{\IclrNineTable}[2]{%
  \sbox{\IclrNineTableBox}{#2}%
  \typeout{ICLR9TABLE #1 natural=\the\wd\IclrNineTableBox available=\the\linewidth}%
  \ifdim\wd\IclrNineTableBox>\linewidth
    \resizebox{\linewidth}{!}{\usebox{\IclrNineTableBox}}%
  \else
    \usebox{\IclrNineTableBox}%
  \fi}

\newsavebox{\IclrPairA}
\newsavebox{\IclrPairB}
\newlength{\IclrPairWidthA}
\newlength{\IclrPairWidthB}
\newlength{\IclrPairGap}
\ExplSyntaxOn
\NewDocumentCommand{\IclrPreparePair}{mm}{
  \sbox{\IclrPairA}{#1}
  \sbox{\IclrPairB}{#2}
  \dim_set:Nn \IclrPairWidthA {
    \fp_to_dim:n { 0.98 * \dim_to_fp:n {\linewidth}
      * \dim_to_fp:n {\wd\IclrPairA}
      / (\dim_to_fp:n {\wd\IclrPairA} + \dim_to_fp:n {\wd\IclrPairB}) }
  }
  \dim_set:Nn \IclrPairGap { \fp_to_dim:n { 0.02 * \dim_to_fp:n {\linewidth} } }
  \setlength{\IclrPairWidthB}{\dimexpr\linewidth-\IclrPairGap-\IclrPairWidthA\relax}
}
\ExplSyntaxOff

%% file: sections/0_abstract_iclr9.tex
How much learned memory is needed to benefit from more data? We show that
the two resources are governed by one predictive-energy spectrum in a
positive-entropy autoregressive retrieval source. Each coordinate contributes
its query probability times the squared radius of its unknown logit.
Writing \(\mu\) for the resulting energy spectrum, we prove the minimax law
$\mathfrak R^*_{\rm value}(n,B)\asymp_R
\Phi_\mu(n^{-1})+\Phi_\mu(\tau_B),
\Phi_\mu(t)=\int\min\{x,t\}\,\mu(\mathrm dx),$
for \(n\) prediction blocks and a learned state with at most \(2^B\) values.
Data set the resolution \(1/n\); memory sets the level \(\tau_B\) reached
by optimal bit allocation. The complete curve also recovers the positive
spectrum. Energy--dimension pairing is essential: two causal sources with
identical block-energy and block-dimension marginals have different data and
memory exponents. A masked query--key attention head learns the route and
values, realizing the law with explicit routing, format, and arithmetic errors. Further
results give exponent-adaptive allocation, finite-precision realization,
and compute--precision laws under two-sided arithmetic assumptions.
Experiments recover the data--memory collapse and coupling exponents,
explain the routing and allocation mechanisms, and examine weight-only
quantization across six pretrained-model scales.

\vspace{-3mm}

%% file: arxiv/sections/paper_body.tex
\input{arxiv/introduction}
\input{arxiv/related_work}
\input{sections/revision_2_model_iclr9}
\input{arxiv/sections/revision_3_unified}
\input{arxiv/sections/revision_4_coupling}
\input{arxiv/sections/revision_5_learning}

\input{arxiv/sections/revision_5b_branches}

\input{arxiv/sections/revision_5c_compute}

\input{arxiv/sections/revision_6_experiments}

\input{arxiv/sections/revision_7_conclusion}

%% file: arxiv/introduction.tex
\section{Introduction}
\label{sec:revision-intro}

More data reveal more predictive structure, but a predictor benefits only
from what its learned state can retain. When should we collect more data,
and when should we increase learned memory? Empirical scaling laws relate
loss to data, model size, and compute
\citep{kaplan2020scaling,hoffmann2022compute}. To understand how these
resources work together, we seek a property of the prediction problem that
determines both the information available in finite data and the
information preserved by a finite learned state.

Consider a prediction task with many independently varying pieces of
knowledge. A piece that is queried often or strongly changes the next-token
probability deserves more learning and storage resources than one that is
rarely used or has little predictive effect. Our model expresses each piece
as an unknown scalar prediction parameter, called a \emph{coordinate}.
The key quantity is predictive energy per independent coordinate. In our
autoregressive source, coordinate \(i\) has query probability \(p_i\) and an
unknown logit, or log-odds of a binary target, in \([-R_i,R_i]\). Its predictive energy
\(q_i=p_iR_i^2\) measures its potential contribution to prediction error.
Collecting these energies, counting repeated values separately, gives the
spectrum. At a chosen error resolution, small coordinates contribute their
whole energy and larger coordinates contribute only the unresolved portion.
Adding these contributions defines a spectral curve. Finite data and finite learned memory read
\emph{the same spectral curve at different resolutions}.

This connection gives a joint resource law. Let \(n\) count independent
training blocks and let \(B\) count the bits retained after training, so
the learner can finish in at most \(2^B\) different states. We measure
excess cross-entropy above the loss of a predictor that knows the true
conditional probabilities. First consider value estimation with the
relevant coordinate revealed. Data set an error resolution of order \(1/n\);
memory allocates bits until retained coordinates reach a common residual-error
level. The smallest worst-case expected excess loss is, up to constants,
the sum of the spectral curve at these two resolutions. This minimax result
allows unrestricted training computation and joint coding across coordinates.
Section~\ref{sec:revision-model} defines the source, risk, and two resolutions
before stating the full formula.

The law makes the resource balance explicit. List coordinate energies in
decreasing order and suppose \(q_{(i)}\asymp i^{-(1+\chi)}\), where
\(\chi>0\) is their decay exponent and \(\asymp\) denotes equality up to
constant factors. The two terms scale as
\(n^{-\chi/(1+\chi)}+B^{-\chi}\) and balance at
\(B\asymp n^{1/(1+\chi)}\). Below this memory scale, retaining learned
information is the bottleneck; above it, observations matter more.
More generally, an energy group can spread its total energy across several
independent coordinates. Increasing that dimension produces more parameters
to learn, each with less predictive energy. The general scaling corollary
quantifies this effect through the distribution of coordinate energies.
The full spectral curve applies beyond power laws and recovers every
positive coordinate energy with its multiplicity.

This perspective reveals information that separate scaling descriptors
discard. Concentrating energy in one coordinate makes it easier to learn
and retain than spreading it across many independent coordinates.
We construct two causal sources with exactly the same block-energy and
block-dimension multisets at every dyadic level. Pairing high energy
with small blocks or large blocks changes the coordinate spectrum and
both resource exponents. The pairing, not either marginal alone,
determines how predictive information is distributed.

The source also makes the learning mechanism explicit. Each regeneration
block contains \(K\) candidate tokens, a query, and a stochastic next token.
An unknown route selects the relevant candidate, whose bounded Bernoulli
logit is also unknown. Calibration blocks reveal the correct candidate;
prediction blocks teach its value; evaluation uses independent held-out
blocks. Fixed embeddings and projections contain neither unknown object.
A fixed-key, trainable-query attention head learns the route, followed
by frequency-preconditioned output-parameter learning. Its masked
query--key scores and gradients match the construction in the proof,
connecting the spectral law to a concrete two-stage learner.

The same resource picture organizes representation and computation.
A universal allocation schedule adapts to an unknown power-law exponent
when coordinate order is known. Finite-precision evaluation quantifies
routing, storage, and arithmetic errors. For the specified dense
computational graph, operation counts and numerical assumptions connect
the precision needed for a target error to bit complexity and conditional
compute phases. These results explain how statistical resolution becomes
a representational and computational requirement.

\begin{itemize}
\item \textbf{One spectrum, two resources.}
An exact truncated-energy curve expresses both terms of arbitrary-state
minimax risk, recovers the positive spectrum, and determines the data and
learned-memory exponents. A masked-attention learner realizes the spectral
surface with explicit routing, format, and arithmetic errors; adaptive
allocation and precision analysis connect this law to implementation.
\item \textbf{Why separate marginals fail.}
Same-marginal causal sources have different coordinate spectra and different
data and memory exponents. Their energy--dimension pairing identifies
the information missing from separate scaling descriptors.
\end{itemize}

Our experiments vary sample and state budgets on a fixed source, test
same-marginal separation, and compare learned, known, and frozen routing.
Interventions and perturbed energy profiles examine the mechanisms behind
the measured curves. Continued training tests the interaction between added
data and deployed storage, while weight-only post-training quantization
across six pretrained Transformer scales measures the effect of numerical
representation at fixed checkpoints.

%% file: arxiv/related_work.tex
\section{Related Work}
\label{sec:related-work}

Our question connects scaling laws, finite-rate representation, and
distribution-specific learning: which source statistic determines the
value of additional observations and additional learned memory?
The following literatures provide the statistical and computational
ingredients. \Cref{tab:prior-comparison} summarizes the closest
theorem-level connections.

\paragraph{Empirical and mechanistic scaling laws.}
Empirical studies found extended power-law regimes in neural prediction
as data, model size, and computation grow
\citep{hestness2017deep,kaplan2020scaling,henighan2020autoregressive}.
Compute-optimal work asks how a training budget should be divided between
parameters and tokens \citep{hoffmann2022compute}, while routed models
distinguish total parameters from parameters used per token
\citep{clark2022routed}. Extrapolation studies, broken-power-law fits,
and replications show that fitted exponents depend on the model family,
resource definition, and training prescription
\citep{alabdulmohsin2022revisiting,caballero2023broken,porian2024discrepancies}.
Theoretical accounts derive learning curves from task dimension,
spectral decay, or simplified dynamics
\citep{sharma2020manifold,bordelon2020spectrum,bahri2024explaining};
solvable random-feature and dynamical models separate data-, width-,
and time-limited phases
\citep{maloney2022solvable,bordelon2024dynamical,paquette2024phases}.
A discrete-skill model obtains aggregate power laws from decreasing
skill frequencies \citep{michaud2023quantization}.
Knowledge-capacity studies instead translate factual prediction loss into
retained information and estimate bits per parameter
\citep{allenzhu2025capacity,morris2025memorize}. A compression model with
frequency-ranked knowledge derives both data and model scaling laws
\citep{pan2025compression}, while an optimization analysis of multilayer
Transformers derives resource-dependent generalization phases
\citep{yang2025scaling}. We connect the data and learned-memory axes through
a common source spectrum. Its tail determines both rates, and its construction
identifies the energy--dimension pairing on which those rates depend.

\input{arxiv/prior_comparison}

\paragraph{Finite-rate estimation and memory.}
Rate--distortion theory characterizes the smallest coding rate compatible
with a prescribed distortion, and classical quantization theory develops
the geometry of finite codebooks
\citep{shannon1959fidelity,gray1998quantization}.
Quantized Gaussian sequence estimation gives sharp arbitrary-codebook
storage--risk trade-offs, smoothness adaptation over Sobolev classes,
and multiple data--communication regimes
\citep{zhu2014quantized,zhu2018sobolev,zhu2018distributed}.
Related theories bound working memory, sequential estimator states,
retained sample information, or Bayesian minimum excess risk
\citep{steinhardt2015memory,berg2021finitememory,
feldman2025memorization,hafezkolahi2021ratedistortion}.
Writing \(B\) for retained bits, we constrain the final learned state to at most \(2^B\) values while
allowing unrestricted training computation.
Functional quantization is especially close to our spectrum calculation:
regularly varying covariance eigenvalues determine sharp Hilbert-space
quantization rates, and product quantizers attain them
\citep{luschgy2004sharp,luschgy2010optimal}.
Our additional step is a uniform finite-resource risk bound for random
coordinate observations: each observation queries one unknown binary
prediction parameter, and all logits vary independently within bounded intervals. It accounts
for coordinates with few or no visits while allowing any final \(2^B\)-state
encoder. The same curve then expresses sample and retained-state error.
The same-marginal construction resolves an
identification question: separate block-energy and block-dimension
distributions do not determine the coordinate spectrum.

\paragraph{Neural quantization and precision.}
Mixed-precision training retains high precision where it is numerically
useful \citep{micikevicius2018mixed}. Mean-field, Hessian-aware, and
learned-precision methods quantify depth, sensitivity, and heterogeneous
allocation effects
\citep{blumenfeld2019meanfield,dong2020hawqv2,savarese2022heterogeneous}.
Per-tensor analyses and integer programs contain the
dynamic-range-squared times \(4^{-b}\) distortion law and
resource-constrained bit allocation
\citep{chen2020lowbit,yao2021hawqv3}.
Practical large language model (LLM) quantization uses outlier handling,
second-order rounding, activation rescaling, and data-dependent
discrepancy control
\citep{dettmers2022llmint8,frantar2023gptq,xiao2023smoothquant,
chee2025discquant}.
Precision-aware scaling studies measure how these effects change with
model size
\citep{dettmers2023fourbit,kumar2025precision,sun2025floating,
zhang2026precision}; kernel-regime analysis establishes width-dependent
convergence for 1-bit networks \citep{daliri2025onebit}.
Our finite-precision analysis composes local numerical distortion
with finite-sample uncertainty and the learned-state budget.
For the specified computational graph, it also quantifies the
arithmetic cost associated with the required precision.

\paragraph{Minimax generalization and attention optimization.}
Worst-case neural-network theory controls function classes through
Vapnik--Chervonenkis dimension, pseudo-dimension, norms, margins,
and covering numbers
\citep{bartlett2019vc,bartlett2017spectral}.
Compression and description-length analyses relate concise predictors
to generalization, including probably approximately correct
(PAC)--Bayes bounds
\citep{arora2018compression,daniely2019description,lotfi2022pacbayes}.
Distribution-specific minimax theory ties rates to compositional
smoothness, Besov regularity, or intrinsic dimension
\citep{schmidthieber2020nonparametric,suzuki2019adaptivity,
nakada2020intrinsic}.
Our statistical result follows this distribution-specific approach,
with held-out autoregressive cross-entropy and an arbitrary finite
learned state.

For sequence models, norm-based Transformer bounds, analyses of Markov
next-token prediction, dependent-token pretraining, and nearly tight
sample complexity address generalization under architecture or dependence
\citep{trauger2024sequence,yuksel2025sample,li2025generalization,
yang2026transformer}.
Attention optimization studies max-margin token selection, retrieval
learning, and frequency-dependent associative memory
\citep{tarzanagh2023maxmargin,li2024mechanics,
cabannes2024associative}.
Structure-aware preconditioning gives a data--compute law for Gaussian
softmax regression under an expected empirical-gradient oracle
\citep{goel2026softmax}; other softmax-regression objectives admit efficient
gradient methods with structured output constraints \citep{chu2024copyright}.
Theory for prefix tuning and next-step forecasting identifies how attention
training and its generalization depend on the learned representation
\citep{liang2025prefix,ke2025curse}. Complementary work asks which token state
can be removed while preserving attention or long-context prediction:
sparse approximation controls attention-matrix error, while parallel and
value-aware compressors allocate a limited context state
\citep{deng2024sparse,xiong2025parallelcomp,yang2026compressible}.
We count the samples used to learn the relevant candidate and its
prediction parameter, then the bits used to retain the fitted state.
Appendix~\ref{app:discussion} gives the detailed
comparison of the statistical, representational, and computational settings.

\FloatBarrier

%% file: arxiv/prior_comparison.tex
\input{sections/joint_prior_comparison}

%% file: sections/joint_prior_comparison.tex
\begin{table}[tbp]
\centering
\caption{\textbf{Closest results and the resource question they answer.}
Here \(m\) is Gaussian sequence dimension, \(N_c\) is codebook size, and
\(b\) is bits per Gaussian coordinate, \(B\) is total retained bits, and
\(n\) is the number of training blocks in our source. CE denotes cross-entropy. The last row identifies
the observation model and finite-resource conclusion needed here.}
\label{tab:prior-comparison}
\small
\setlength{\tabcolsep}{4pt}
\begin{tabular}{@{}>{\raggedright\arraybackslash}p{.16\linewidth}
>{\raggedright\arraybackslash}p{.27\linewidth}
>{\raggedright\arraybackslash}p{.25\linewidth}
>{\raggedright\arraybackslash}p{\dimexpr.32\linewidth-6\tabcolsep\relax}@{}}
\toprule
Study & Observations and parameters & Code and risk & Conclusion \\
\midrule
\citet{zhu2014quantized} & Gaussian sequence; deterministic means in a
Euclidean ball & Arbitrary codebook, \(m b\) bits; normalized squared error &
Sharp asymptotic estimation--storage trade-off as \(m\to\infty\). \\
\citet{luschgy2004sharp} & Gaussian random element; covariance eigenvalues &
\(N_c\)-point quantizer; expected squared Hilbert error &
High-resolution distortion from eigenvalue decay; \(B=\log_2 N_c\). \\
\citet{allenzhu2025capacity} & Random discrete factual tuples; trained language
model weights & Finite weight state; factual generation negative log-likelihood &
Achieved loss lower-bounds the information that the weights must retain. \\
\citet{pan2025compression} & Hierarchical syntax--knowledge mixture;
frequency-ranked knowledge & Bayesian/universal code; predictive redundancy
under data or model capacity & Power-law knowledge frequencies generate data
and model scaling laws. \\
\citet{goel2026softmax} & Gaussian linear regression; softmax attention &
Expected empirical-gradient oracle; squared loss; no bit constraint &
Geometric optimization with a finite-context statistical term. \\
This work & Random coordinate observations; independently bounded scalar logits &
Any \(2^B\)-state learner; held-out excess Bernoulli CE &
Uniform \((n,B)\) risk through one spectrum; pairing changes both exponents. \\
\bottomrule
\end{tabular}
\end{table}

%% file: sections/revision_2_model_iclr9.tex
\section{The Energy-per-Coordinate Spectrum: Minimax Risk and Identifiability}
\label{sec:revision-model}
\vspace{-3mm}
We first define the source and finite-state value-estimation problem;
Section~\ref{sec:revision-learning} additionally learns the relevant route.

\vspace{-3mm}
\subsection*{Problem setup}
\paragraph{Coordinates and unknown knowledge.}
A coordinate is an independent scalar prediction parameter; mode \(j\)
groups \(d_j\) coordinates \(i=(j,r)\), \(r\in[d_j]\), where
\([m]=\{1,\ldots,m\}\). Separating energy from parameter count,
the finite/countable set \(\mathcal I\) has public frequencies
\(p_i>0\), \(\sum_i p_i=1\), and radii \(0<R_i\le R<\infty\).
Unknown logits
\(\theta\in\Theta:=\prod_{i\in\mathcal I}[-R_i,R_i]\) specify binary
target probabilities.

\vspace{-3mm}
\paragraph{One causal prediction block.}
Sample candidates \(C_1,\ldots,C_K\) independently from \(p\), then an
independent query row \(Q\), uniform on \([K]\), followed by target \(Y\).
A fixed unknown matching map \(\pi:[K]\to[K]\) selects
\(I=C_{\pi(Q)}\), with
\(Y\mid I\sim\operatorname{Bernoulli}(\sigma(\theta_I))\).
Here \(\sigma(t)=(1+e^{-t})^{-1}\) is the sigmoid, so \(\theta_i\)
is the log-odds at \(I=i\), and \(\Pr(I=i)=p_i\).
Delimiters separate independent, causally ordered blocks; \(n\) counts
blocks, not tokens. Calibration reveals \((Q,\pi(Q))\), prediction
targets teach logits, and testing uses independent blocks. Revealed
routes/coordinates give value data \(\mathcal D_n=((I_t,Y_t))_{t=1}^n\),
separating estimation from retrieval.

\vspace{-3mm}
\paragraph{Loss and the retained-state constraint.}
For coordinate predictions \(f=(f_i)\), define
\(\ell(y,z)=-y\log\sigma(z)-(1-y)\log(1-\sigma(z))\) and
\(\mathcal L_\theta(f)=\mathbb E_\theta[\ell(Y,f_I)]\).
Loss is in nats, storage in bits. Bounded logits give positive conditional
entropy, Bayes loss \(\mathcal L_\infty=\mathcal L_\theta(\theta)>0\), and
excess loss comparable to frequency-weighted squared logit error, with
\(R\)-dependent constants.
A learner encodes \(\mathcal D_n\) into
\(S=\phi(\mathcal D_n)\in\mathcal S\), \(|\mathcal S|\le2^B\),
and predicts with a fixed decoder \(g(S,i)\in[-R,R]\).
Arbitrary computation and joint coordinate coding are allowed, with all
data-dependent output in \(S\). For integer \(B\ge0\), minimax excess risk is
\[
\mathfrak R^*_{\rm value}(n,B)
=\inf_{\phi,g:\,|\mathcal S|\le2^B}\;
\sup_{\theta\in\Theta}
\mathbb E_\theta\!\left[
\mathcal L_\theta\bigl(g(\phi(\mathcal D_n),\cdot)\bigr)
-\mathcal L_\theta(\theta)\right].
\]
Expectations average training/learner and independent test randomness.
Encoder/decoder side information is fixed and public; learned bits exclude
fixed network storage. Constants in \(\asymp_R\) depend only on \(R\).

\paragraph{Predictive energy and its spectrum.}
Define \(q_i=p_iR_i^2\), with \(\sum_iq_i<\infty\).
Doubling frequency doubles energy; halving logit radius quarters it.
Equal energies can have different frequencies and ranges. With \(\delta_x\)
a unit point mass,
\[
\mu=\sum_i\delta_{q_i},\quad
\mathcal N_\mu(t)=\mu((t,\infty)),\quad
\Phi_\mu(t)=\int\min\{x,t\}\,\mu(\mathrm dx)
=\int_0^t\mathcal N_\mu(s)\,\mathrm ds.
\]
Thus \(\mathcal N_\mu(t)\) counts energies above \(t\), including repeats;
\(\Phi_\mu(t)=\sum_i\min\{q_i,t\}\) caps unresolved contributions.
For water level \(\tau>0\), let
\(\mathcal B_\mu(\tau)=\frac12\int\log_2(x/\tau)_+\mu(\mathrm dx)\)
and define \(\tau_B\) as the generalized solution of
\(\mathcal B_\mu(\tau_B)=B\), with \(\tau_0=\max_iq_i\).
Here \((x)_+=\max\{x,0\}\) applies after the logarithm: coordinates
above \(\tau\) receive \(\frac12\log_2(q_i/\tau)\) bits, others zero.
\(\mathcal B_\mu\) counts continuous bits needed to reach \(\tau\);
finitely many energies exceed each positive level, even in countable sets.

\paragraph{Energy groups and scaling profiles.}
An energy block is a coordinate group, distinct from a sampled token block.
For total energy \(a_j\), dimension \(d_j\), and equal-energy coordinates,
\(q_{jr}=a_j/d_j\) and \(\mu=\sum_jd_j\delta_{a_j/d_j}\): dimension
sets both atom count and size. Profiles \(a_j\asymp j^{-(1+\chi)}\),
\(d_j\asymp j^\gamma\), with \(\chi>0\), \(\gamma\ge0\), describe
energy decay and dimension growth; the theorem itself needs no power law.

%% file: arxiv/sections/revision_3_unified.tex
\subsection{One Spectrum Controls Data and Learned State}
\label{sec:revision-unified}

\begin{restatable}[One spectrum controls data and memory; informal version of \cref{full:thm:spectrum}]{theorem}{ThmSpectrum}
\label{thm:spectrum}
Let \(q_i>0\), \(\sum_iq_i<\infty\), and let \(\mu\), \(\Phi_\mu\),
and \(\tau_B\) be defined above.  Then
\begin{align}
\mathcal S_q(n)&:=\sum_i\min\{q_i,n^{-1}\}=\Phi_\mu(n^{-1}),
\label{eq:spectrum-data}\\
D_q(B)&:=\inf_{\substack{b_i\ge0\\\sum_i b_i\le B}}
\sum_iq_i2^{-2b_i}=\Phi_\mu(\tau_B).
\label{eq:spectrum-bits}
\end{align}
For the bounded-logit value experiment,
\begin{equation}
\boxed{\mathfrak R^*_{\rm value}(n,B)
\asymp_R\Phi_\mu(n^{-1})+\Phi_\mu(\tau_B).}
\label{eq:spectrum-risk}
\end{equation}
Moreover, for every \(t>0\),
\begin{equation}
\Phi'_{\mu,+}(t)=\mu((t,\infty)),\qquad
\Phi'_{\mu,-}(t)=\mu([t,\infty)).
\label{eq:spectrum-derivatives}
\end{equation}
Hence the exact curve \(\Phi_\mu\) recovers the positive spectrum,
including atom multiplicities, up to coordinate relabeling.
\end{restatable}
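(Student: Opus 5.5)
The plan splits the statement into three parts: two exact identities, a minimax sandwich, and differentiation of \(\Phi_\mu\). The identities and the derivatives are elementary measure-theoretic facts. The risk bound \cref{eq:spectrum-risk} then follows by rewriting the finite-bit law \cref{full:thm:finite-bit} in spectral form. That theorem's source family is specified by \((p_j,\Delta_j)\) with \(q_j=p_j\Delta_j^2\), which is exactly our \(q_i=p_iR_i^2\) with \(\Delta_i=R_i\). So once \cref{eq:spectrum-data,eq:spectrum-bits} hold, \cref{eq:spectrum-risk} is \cref{full:eq:finite-bit-law} with \(S_q(n)\) and \(D_q(B)\) replaced by \(\Phi_\mu(n^{-1})\) and \(\Phi_\mu(\tau_B)\).

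\textbf{Data identity.} The data identity \cref{eq:spectrum-data} is immediate from the definition \(\Phi_\mu(t)=\int\min\{x,t\}\,\mu(\mathrm dx)=\sum_i\min\{q_i,t\}\), evaluated at \(t=n^{-1}\). The equality \(\int\min\{x,t\}\mu(\mathrm dx)=\int_0^t\mathcal N_\mu(s)\,\mathrm ds\) is Tonelli applied to \(\min\{x,t\}=\int_0^t\ind\{s<x\}\,\mathrm ds\).

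\textbf{Bit identity.} For \cref{eq:spectrum-bits}, I treat the reverse water-filling problem as a convex program. The objective \(\sum_iq_i2^{-2b_i}\) is convex in \(b\), and the constraint set is convex.

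\emph{Upper bound.} Take \(b_i=\tfrac12\log_2(q_i/\tau_B)_+\). Its total is \(\mathcal B_\mu(\tau_B)=B\), and its value is \(\sum_i\min\{q_i,\tau_B\}=\Phi_\mu(\tau_B)\). Only finitely many \(b_i\) are positive, because finitely many \(q_i\) exceed \(\tau_B>0\).

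\emph{Lower bound.} I use a Lagrangian bound with multiplier \(\lambda=2\tau_B\ln2\). For each \(i\) and each \(b_i\ge0\), convexity of \(b\mapsto q_i2^{-2b}\) gives the tangent-line inequality at the water-filling point:
\[q_i2^{-2b_i}\ge\min\{q_i,\tau_B\}-2\ln 2\,\tau_B\,(b_i-b_i^*).\]
When \(q_i\le\tau_B\), the point \(b_i^*=0\) is a boundary point. There the slope of the objective, \(-2\ln2\,q_i\), is at least \(-2\ln2\,\tau_B\), so the inequality still holds for \(b_i\ge0\). Summing over \(i\) and using \(\sum_ib_i\le B=\sum_ib_i^*\) gives value \(\ge\Phi_\mu(\tau_B)\).

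\emph{Generalized solution.} \(\mathcal B_\mu\) is continuous and nonincreasing in \(\tau\), and tends to \(\infty\) as \(\tau\downarrow0\) when \(\mu\) is infinite. For finite \(\mu\) it reaches a finite value at \(\tau=0\). If \(B\) exceeds that value we set \(\tau_B=0\) and \(\Phi_\mu(0)=0\); either way a solution exists. For \(B=0\) we have \(\tau_0=\max q_i\) and \(\Phi_\mu(\tau_0)=\sum q_i\), which is consistent.

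\textbf{Derivatives.} For \cref{eq:spectrum-derivatives}, I write \(\Phi_\mu(t)=\int_0^t\mathcal N_\mu(s)\,\mathrm ds\) with \(\mathcal N_\mu\) nonincreasing and right-continuous. Only finitely many atoms lie above any \(t>0\), so \(\mathcal N_\mu(t)<\infty\). The fundamental theorem of calculus for monotone integrands then gives the right derivative \(\mathcal N_\mu(t^+)=\mu((t,\infty))\) and the left derivative \(\mathcal N_\mu(t^-)=\mu([t,\infty))\).

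\textbf{Recovering the spectrum.} The jump \(\Phi'_{\mu,-}(t)-\Phi'_{\mu,+}(t)=\mu(\{t\})\) gives each atom together with its multiplicity, and knowing \(\mathcal N_\mu\) on \((0,\infty)\) determines \(\mu\) there. Since \(q_i>0\), this recovers the multiset \(\{q_i\}\), which is the positive spectrum up to relabeling.

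\textbf{Main obstacle.} The only real difficulty is the minimax sandwich itself, and it is delegated to \cref{full:thm:finite-bit}; I take it as given. Within this proof, the delicate step is the bit identity in the countable case. There I must check that the infimum over infinitely many \(b_i\) is not below \(\Phi_\mu(\tau_B)\). The per-coordinate tangent inequality handles this because it is summable: \(\sum_i\min\{q_i,\tau_B\}<\infty\), and only finitely many \(b_i^*\) are nonzero. I must also verify that \(\tau_B\) is well defined, including the plateau and \(\tau=0\) edge cases.
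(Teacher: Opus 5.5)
Your proposal is correct and follows the paper's proof essentially step for step: Tonelli for the data identity, the water-filling allocation together with the summed tangent-line (Lagrangian) inequality at \(\lambda=2(\ln 2)\tau_B\) for the countable bit identity, the finite-bit theorem for the risk sandwich, and one-sided derivatives of \(\Phi_\mu\). For the derivatives you use the FTC with the monotone integrand \(\mathcal N_\mu\), where the paper differentiates termwise; the two steps are equivalent.

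One correction: your \(\tau_B=0\) edge case never arises, and your treatment of it would be false. Since every \(q_i>0\), already the single term \(\frac12\log_2(q_i/\tau)\) diverges as \(\tau\downarrow0\), so \(\mathcal B_\mu(\tau)\to\infty\) even for finitely many coordinates. Moreover \(\mathcal B_\mu\) is strictly decreasing on \((0,\max_iq_i]\), so there is a unique \(\tau_B\in(0,\max_iq_i)\) for every \(B>0\) and no plateau to handle. Setting \(\tau_B=0\) would assert \(D_q(B)=0\), which fails because \(D_q(B)\ge q_i2^{-2B}>0\).
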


The statistical term limits what observations resolve; the representation
term limits what any final state can preserve. The latter is a continuous
allocation problem; the finite-state minimax comparison accounts for the
rounding needed to implement an integer-bit code.

Binary testing lower-bounds the sample error \(\min\{q_i,1/n\}\);
an entropy argument lower-bounds representation error even for joint codes.
Estimation followed by quantization attains their sum up to constants.

Geometrically, \(\mathcal N_\mu(t)\) counts coordinates whose energy exceeds
resolution \(t\), while \(\Phi_\mu(t)\) accumulates the unresolved energy.
Data read this curve at \(t=1/n\).  Storage pays
\(\frac12\log_2(q_i/\tau_B)\) bits for every coordinate above its water
level and reads the same curve at \(t=\tau_B\). For one coordinate, the
two spectral terms are \(\min\{q,1/n\}\) and \(q2^{-2B}\).
Water filling equalizes remaining error on coordinates receiving bits.
The derivatives recover every positive energy and its multiplicity.

\begin{restatable}[Spectral scaling exponents; informal version of \cref{full:cor:spectrum-rates}]{corollary}{CorSpectrumRates}
\label{cor:spectrum-rates}
If \(\mathcal N_\mu(t)\sim t^{-\rho}L(1/t)\) for \(0<\rho<1\), where
\(L\) is slowly varying, then
\[
\Phi_\mu(n^{-1})\in\operatorname{RV}_{-(1-\rho)},\qquad
\Phi_\mu(\tau_B)\in\operatorname{RV}_{-(1-\rho)/\rho}.
\]
For \(a_j\asymp j^{-(1+\chi)}\) and \(d_j\asymp j^\gamma\),
\(\rho=(1+\gamma)/(1+\chi+\gamma)\), giving rates
\(n^{-\chi/(1+\chi+\gamma)}\) and
\(B^{-\chi/(1+\gamma)}\), up to slowly varying factors.
\end{restatable}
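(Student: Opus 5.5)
The plan is to work entirely with the two integral representations of \cref{thm:spectrum}. The first is \(\Phi_\mu(t)=\int_0^t\mathcal N_\mu(s)\,\mathrm ds\). The second comes from rewriting the bit functional:
\[
\mathcal B_\mu(\tau)=\tfrac12\int\log_2(x/\tau)_+\,\mu(\mathrm dx)=\frac{1}{2\ln 2}\int_\tau^\infty \frac{\mathcal N_\mu(s)}{s}\,\mathrm ds .
\]
This identity holds because \(\log(x/\tau)_+=\int_\tau^\infty s^{-1}\ind\{x>s\}\,\mathrm ds\), followed by Tonelli. The integral is effectively over \([\tau,\max_iq_i]\), since \(\mathcal N_\mu\) vanishes beyond the largest energy. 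Both resource terms are then Karamata-type integrals of the single counting function \(\mathcal N_\mu\).

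\emph{Data term.} With \(u=1/s\), the hypothesis reads \(\mathcal N_\mu(1/u)\sim u^{\rho}L(u)\) as \(u\to\infty\). Since \(\rho<1\), Karamata's theorem applies to the integral over \(s\in(0,t]\), i.e.\ the tail integral in \(u\) of index \(\rho-2<-1\). This gives
\[
\Phi_\mu(t)\sim \frac{t^{1-\rho}L(1/t)}{1-\rho}\qquad(t\downarrow0),
\]
so \(n\mapsto\Phi_\mu(n^{-1})\) lies in \(\operatorname{RV}_{-(1-\rho)}\).

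\emph{Memory term.}
\begin{enumerate}
\item Karamata's theorem applied to \(\int_\tau^{c}s^{-1-\rho}L(1/s)\,\mathrm ds\) (the divergent case, since \(\rho>0\)) gives \(\mathcal B_\mu(\tau)\sim \tau^{-\rho}L(1/\tau)/(2\rho\ln2)\). Hence \(u\mapsto \mathcal B_\mu(1/u)\) is in \(\operatorname{RV}_\rho\).
\item \(\mathcal B_\mu\) is continuous and nonincreasing, and strictly decreasing wherever \(\mathcal N_\mu>0\). So \(\tau_B\) is a genuine inverse for \(B>0\), and \(\tau_B\downarrow0\) as \(B\to\infty\).
\item By the asymptotic-inverse theorem for regularly varying functions (de Bruijn conjugates), \(B\mapsto 1/\tau_B\) lies in \(\operatorname{RV}_{1/\rho}\).
\item Composition of regularly varying functions multiplies indices. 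Composing \(u\mapsto\Phi_\mu(1/u)\in\operatorname{RV}_{-(1-\rho)}\) with \(B\mapsto1/\tau_B\) gives \(\Phi_\mu(\tau_B)\in\operatorname{RV}_{-(1-\rho)/\rho}\).
\end{enumerate}
I expect this inversion-and-composition step to be the only delicate point. It needs the local uniformity of \(L\) (the uniform convergence theorem) and the exact monotone inversion, not merely asymptotic equivalence.

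\emph{Power-law profiles.} Here \(\mu=\sum_jd_j\delta_{a_j/d_j}\) with \(a_j/d_j\asymp j^{-(1+\chi+\gamma)}\), so \(\mathcal N_\mu(t)=\sum_{j:\,a_j/d_j>t}d_j\). The relevant \(j\) run roughly up to \(J(t)\asymp t^{-1/(1+\chi+\gamma)}\), and summing \(d_j\asymp j^\gamma\) gives \(\mathcal N_\mu(t)\asymp J(t)^{1+\gamma}\asymp t^{-\rho}\) with \(\rho=(1+\gamma)/(1+\chi+\gamma)\in(0,1)\). The index set is only sandwiched between two initial segments \(\{j\le c_1J(t)\}\) and \(\{j\le c_2J(t)\}\), so this relation holds only up to constants, not as \(\sim\).

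In this case I would not invoke Karamata. Instead I would integrate the two-sided bound directly:
\begin{itemize}
\item \(\Phi_\mu(t)\asymp t^{1-\rho}\), using \(\rho<1\);
\item \(\mathcal B_\mu(\tau)\asymp\tau^{-\rho}\) for small \(\tau\), using \(\rho>0\) (the upper limit contributes a bounded amount).
\end{itemize}
Monotonicity of \(\mathcal B_\mu\) then transfers the second bound to \(\tau_B\asymp B^{-1/\rho}\). Substituting gives \(\Phi_\mu(n^{-1})\asymp n^{-(1-\rho)}=n^{-\chi/(1+\chi+\gamma)}\) and \(\Phi_\mu(\tau_B)\asymp B^{-(1-\rho)/\rho}=B^{-\chi/(1+\gamma)}\). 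These are the stated rates, with constant factors playing the role of the slowly varying factors.
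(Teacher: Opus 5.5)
Your proposal is correct and follows essentially the same route as the paper. Both write \(\Phi_\mu\) and \(\mathcal B_\mu\) as layer integrals of \(\mathcal N_\mu\), derive their Karamata-type asymptotics, invert \(\mathcal B_\mu\) to control \(\tau_B\), and compose; for the block profile, both integrate the two-sided bound \(\mathcal N_\mu(t)\asymp t^{-\rho}\) directly rather than asserting regular variation. The only difference is presentational. You cite Karamata's theorem, the asymptotic-inverse theorem and the composition rule, whereas the paper derives each step by hand from dyadic Potter-type bounds, dominated convergence, and monotone sandwiches that use the monotonicity of \(\mathcal B_\mu\) and \(\Phi_\mu\).
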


Here \(\operatorname{RV}_\alpha\) denotes regular variation with index
\(\alpha\): a function \(g\) satisfies \(g(cx)/g(x)\to c^\alpha\)
as \(x\to\infty\), for every \(c>0\). Thus the density of small spectral
atoms determines both exponents and relates them through one tail index.

%% file: arxiv/sections/revision_4_coupling.tex
\subsection{Why Separate Marginals Cannot Identify Scaling}
\label{sec:revision-coupling}

A fixed amount of predictive energy is harder to learn and retain when it
is spread across more independent coordinates. Imagine two lists: the total
energy assigned to each group and the number of independent coordinates in
each group. A marginal descriptor retains these lists but forgets which
energy goes with which dimension. We compare two assignments of the same
lists: concentrate the high energies in one-coordinate groups, or spread
them across the larger groups. The next theorem makes this comparison exact
at every dyadic level, a level indexed by an integer \(\ell\) with group
counts proportional to \(2^\ell\).

\begin{restatable}[Same block marginals, different scaling laws; informal version of \cref{full:thm:dyadic}]{theorem}{ThmDyadic}
\label{thm:dyadic}
For every \(\chi,\gamma>0\), there are two positive-conditional-entropy
causal-retrieval sources with exactly the same block-energy multiset and the
same block-dimension multiset at every dyadic level.  They differ only in the
pairing.  All displayed dimensions in this construction are rounded upward
to integers.  With the common fixed-\(K\) route revealed,
\[
\mathfrak R_{\rm easy}(n,B)
\asymp n^{-\chi/(1+\chi)}+B^{-\chi},
\]
whereas
\[
\mathfrak R_{\rm hard}(n,B)
\asymp n^{-\chi/(1+\chi+\gamma)}+B^{-\chi/(1+\gamma)}.
\]
\end{restatable}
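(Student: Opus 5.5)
Both risk formulas will follow from \cref{thm:spectrum} and \cref{cor:spectrum-rates} once we exhibit two explicit sources and compute their coordinate spectra. With the route revealed and \(K\) fixed, each source is an instance of the bounded-logit value experiment. Its risk is therefore \(\asymp_R\Phi_\mu(n^{-1})+\Phi_\mu(\tau_B)\), and everything reduces to the counting function \(\mathcal N_\mu\).

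\textbf{Construction.} At dyadic level \(\ell\ge0\) we take \(2^\ell\) energy groups. Half of them, \(2^{\ell-1}\), carry the level's high energy \(a^{\rm hi}_\ell\asymp 2^{-\ell(1+\chi)}\). The other half carry a much smaller energy \(a^{\rm lo}_\ell\), chosen small enough that it is negligible in both spectra, say \(a^{\rm lo}_\ell= 2^{-\ell(1+\chi)}2^{-2^{\ell}}\), or simply small enough that these atoms cannot change the tails computed below. The dimension multiset at level \(\ell\) consists of \(2^{\ell-1}\) copies of \(1\) and \(2^{\ell-1}\) copies of \(\lceil 2^{\ell\gamma}\rceil\).

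The easy source pairs high energy with dimension \(1\) and low energy with \(\lceil 2^{\ell\gamma}\rceil\). The hard source uses the reverse pairing. Both sources therefore have the same energy and dimension multisets at every level, and only the pairing differs.

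To make each source a genuine causal-retrieval source, we normalize frequencies \(p_i\) to sum to one and take radii \(R_i\le R\) with \(q_i=p_iR_i^2\). One convenient choice is a common radius with \(p_i\propto q_i\); the overall normalizing constant only rescales the \(q_i\) by a constant. Bounded logits give positive conditional entropy.

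\textbf{Spectra.} In the easy source, the high groups contribute atoms of size \(\asymp 2^{-\ell(1+\chi)}\) with multiplicity \(\asymp2^\ell\). This gives \(\mathcal N_\mu(t)\asymp t^{-1/(1+\chi)}\), up to the negligible low part.

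In the hard source, the high groups contribute atoms of size \(a_\ell/d_\ell\asymp2^{-\ell(1+\chi+\gamma)}\) with multiplicity \(\asymp 2^{\ell(1+\gamma)}\). This gives \(\mathcal N_\mu(t)\asymp t^{-(1+\gamma)/(1+\chi+\gamma)}\).

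In both cases the low atoms must be shown not to perturb these bounds. A sufficient condition is that the total low energy above \(t\), and the low count above \(t\), are dominated by the high contributions. With super-exponentially small \(a^{\rm lo}_\ell\), the atoms \(a^{\rm lo}_\ell/d\) exceed \(t\) only when \(\ell\lesssim\log\log(1/t)\). There are then at most \(\sum_{\ell\lesssim\log\log(1/t)} 2^{\ell(1+\gamma)}\) such atoms, which is polylogarithmic in \(1/t\) and hence negligible.

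The power-law case of \cref{cor:spectrum-rates} needs exact asymptotics \(\sim\). Two-sided \(\asymp\) bounds on \(\mathcal N_\mu\) are enough here, because \(\Phi_\mu(t)=\int_0^t\mathcal N_\mu\) and \(\mathcal B_\mu\) transfer two-sided bounds directly. Concretely, \(\Phi_\mu(t)\asymp t^{1-\rho}\) and \(\mathcal B_\mu(\tau)\asymp \tau^{-\rho}\). The latter follows from \(\frac12\int\log_2(x/\tau)_+\mu(\mathrm dx)=\frac1{2\ln2}\int_\tau^\infty \mathcal N_\mu(s)s^{-1}\mathrm ds\).

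\textbf{Rates.} It follows that \(\tau_B\asymp B^{-1/\rho}\) and \(\Phi_\mu(\tau_B)\asymp B^{-(1-\rho)/\rho}\). For the easy source, \(\rho=1/(1+\chi)\), which gives \(n^{-\chi/(1+\chi)}+B^{-\chi}\). For the hard source, \(\rho=(1+\gamma)/(1+\chi+\gamma)\), which gives \(n^{-\chi/(1+\chi+\gamma)}+B^{-\chi/(1+\gamma)}\).

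\textbf{Main obstacle.} The delicate step is the bookkeeping for the low-energy half. It must be small enough to leave both tails unchanged, yet the multisets must stay literally identical, and the ceiling rounding of dimensions must be handled uniformly, including small levels and summability of \(\sum q_i\). If the super-exponential choice complicates the normalization, the fallback is \(a^{\rm lo}_\ell=2^{-\ell(1+\chi)}2^{-\ell M}\) for a large constant \(M\), with a direct check that its contribution to \(\mathcal N_\mu\) is of lower order.
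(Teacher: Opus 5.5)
Your proposal is correct and follows essentially the paper's own argument. The paper uses the same dyadic high/low construction with dimensions $1$ and $\lceil 2^{\gamma\ell}\rceil$ swapped between the two pairings, the same normalization $p_i\propto q_i$ at a common radius, and the same two-sided counting bounds $\mathcal N_\mu(t)\asymp t^{-\rho}$ transferred through $\Phi_\mu(t)=\int_0^t\mathcal N_\mu$ and $\mathcal B_\mu$ in place of regular variation, before concluding with \cref{thm:spectrum}. The differences are cosmetic: the paper takes the low energy to be $2^{-(1+\chi+\delta)\ell}$ with $\delta>\gamma\chi$, which is your fallback, and your direct check there yields exactly the condition $M>\gamma\chi$; the paper also treats the two components separately instead of bounding the merged counting function; and you should start at $\ell\ge1$ so that $2^{\ell-1}$ is an integer.
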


At dyadic level \(\ell\), both sources have \(2^\ell\) high- and
low-energy blocks with energies proportional to
\(2^{-(1+\chi)\ell}\) and
\(2^{-(1+\chi+\delta)\ell}\), and \(2^\ell\) copies each of dimensions
\(1\) and \(2^{\gamma\ell}\).  Choose \(\delta>\gamma\chi\).  The easy
source pairs high energy with dimension one; the hard source pairs it with
dimension \(2^{\gamma\ell}\).  Thus both block marginals agree exactly, not
merely asymptotically.

A component with block-energy exponent \(s\) and dimension exponent \(g\)
has spectral tail index \((1+g)/(1+s+g)\).  The high-energy component
therefore has index \(1/(1+\chi)\) in the easy source and
\((1+\gamma)/(1+\chi+\gamma)\) in the hard source; the condition on
\(\delta\) makes the low-energy component lower order.  Applying
\cref{cor:spectrum-rates} yields all four exponents.  The full construction
and converse are in Appendix~\ref{app:joint-summary}.

Pairing changes the coordinate-energy distribution while preserving both
block marginals. This is the information a scaling descriptor must retain;
\cref{eq:spectrum-derivatives} recovers it from the integrated tail.

%% file: arxiv/sections/revision_5_learning.tex
\section{Learning the Causal Self-Attention Predictor}
\label{sec:revision-learning}

The value theorem reveals the relevant coordinate. We now learn how to
select it from the candidate tokens, then learn its prediction parameter.
Training has two stages: route calibration (Stage A) followed by value
learning (Stage V). Both use exact-arithmetic stochastic gradient descent
(SGD); their sample and state costs are accounted for separately.

\paragraph{From a route table to query--key attention.}
Let \(x_q,x_s\in\mathbb R^d\) be the query token and candidate token at
position \(s\). Their fixed embeddings include orthogonal address blocks:
\(P_Q\in\mathbb R^{K\times d}\) extracts the query row and
\(P_C\in\mathbb R^{K\times d}\) extracts the candidate position, so
\(P_Qx_q=e_q\) and \(P_Cx_s=e_s\), where \(e_s\) is the \(s\)-th
standard basis vector. Write \(d_h\ge K\) for the attention-head dimension.
An isometric embedding \(J\in\mathbb R^{d_h\times K}\), satisfying
\(J^\top J=I_K\), places these addresses in the head. The trainable matrix
\(A\in\mathbb R^{K\times K}\) stores one score for each query--position
pair. Define the fixed key map and trainable query map by
\begin{equation}
W_K=\sqrt{d_h}JP_C,\qquad W_Q(A)=JA^\top P_Q,\qquad
\frac{\langle W_Q(A)x_q,W_Kx_s\rangle}{\sqrt{d_h}}=A_{qs}.
\label{eq:qk-equivalence}
\end{equation}

\begin{restatable}[Exact query--key realization; informal version of \cref{full:prop:qk}]{proposition}{PropQK}
\label{prop:qk}
With a strict causal mask over the \(K\) preceding candidates,
\cref{eq:qk-equivalence} exactly realizes the route scores and attention.
Fix \(W_K\) and train only \(W_Q\)'s query-address block. Euclidean SGD then
gives the same \(A\) update, attended feature, and output-vector gradient.
\end{restatable}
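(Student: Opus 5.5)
The statement has two parts, and I would prove them separately. The first is exact reproduction of scores and attention by \cref{eq:qk-equivalence}. The second is that training only the query-address block of \(W_Q\) induces the same SGD update on \(A\).

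\textbf{Forward pass.} Using \(W_K=\sqrt{d_h}JP_C\) and \(W_Q(A)=JA^\top P_Q\),
\[
\frac{\langle W_Q(A)x_q,W_Kx_s\rangle}{\sqrt{d_h}}
=(A^\top e_q)^\top J^\top J\, e_s
=e_q^\top A e_s=A_{qs},
\]
where \(J^\top J=I_K\), \(P_Qx_q=e_q\) and \(P_Cx_s=e_s\) are used. The strict causal mask exposes exactly the \(K\) preceding candidate positions and assigns \(-\infty\) to all other positions. The masked softmax is therefore the softmax of the row \((A_{q1},\dots,A_{qK})\), which is the route-table attention distribution. Since the values and attended feature are built from the same fixed embeddings, the attended feature is identical too. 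By the chain rule, the output-vector gradient depends only on the attended feature and the loss, so it is unchanged.

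\textbf{Backward pass.} I make precise what "query-address block" means: parametrize \(W_Q=JMP_Q\) with \(M\in\mathbb R^{K\times K}\) trainable, so \(M=A^\top\). Equivalently, \(W_Q\) restricted to the span of \(J\) on the left and the rows of \(P_Q\) on the right. Because the loss depends on \(W_Q\) only through \(s_{qs}=e_q^\top M^\top e_s\), the Euclidean gradient with respect to \(M\) is \(\nabla_M\mathcal L=(\nabla_A\mathcal L)^\top\). This holds because \(M\mapsto M^\top\) is a linear isometry for the Frobenius inner product. The SGD step \(M\leftarrow M-\eta\nabla_M\mathcal L\) is then exactly \(A\leftarrow A-\eta\nabla_A\mathcal L\).

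The one subtle point is making sure that the gradient with respect to the full block of \(W_Q\), then projected back, gives no extra factor or cross term. Concretely, \(\nabla_{W_Q}\mathcal L=\sum_s g_s W_Kx_s x_q^\top/\sqrt{d_h}\), where \(g_s=\partial\mathcal L/\partial s_{qs}\) are the score gradients. Substituting \(W_Kx_s=\sqrt{d_h}Je_s\), this equals \(J\bigl(\sum_s g_s e_s\bigr)x_q^\top\). Applying \(J^\top(\cdot)P_Q^\top\) gives \(\bigl(\sum_s g_s e_s\bigr)(P_Qx_q)^\top\). This recovers \((\nabla_A\mathcal L)^\top\) provided \(P_QP_Q^\top=I_K\), which holds because the address blocks are orthogonal coordinate blocks. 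The \(\sqrt{d_h}\) scaling cancels exactly, so there is no learning-rate rescaling.

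The gradient of the remaining entries of \(W_Q\) is not needed, since they are frozen. The updated \(W_Q\) stays of the form \(JA^\top P_Q\), and by induction over steps the two dynamics coincide.
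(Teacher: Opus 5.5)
Your proposal is correct and follows the paper's proof: direct substitution with \(J^\top J=I_K\), \(P_Qx_q=e_q\), \(P_Cx_s=e_s\) gives the scores, and the chain-rule identity that the query-block gradient equals \(J(\nabla_A\ell)^\top P_Q\) means one step lands on \(W_Q(A-\eta\nabla_A\ell)\), so the attended feature and the \(w\)-gradient coincide. Two small refinements: the identity \((\sum_sg_se_s)(P_Qx_q)^\top=(\nabla_A\mathcal L)^\top\) needs only \(P_Qx_q=e_q\), whereas \(P_QP_Q^\top=I_K\) (together with \(J^\top J=I_K\)) is what makes \(M\mapsto JMP_Q\) an isometry, so that Euclidean SGD on the block equals SGD on \(M\); and the paper also notes that \(w\) is orthogonal to the residual subspace, so any residual-stream contribution to the logit vanishes.
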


The fixed embedding, key projection, and value extraction contain neither the
unknown route nor the unknown logits.  Those quantities enter only the
trained query-address block and output coordinates.

\paragraph{Learning the route.}
For a fixed query row, calibration increases the matching score relative
to its distractors. Zero initialization keeps the distractor scores equal.
Their common gap from the matching score is the margin \(\beta_t\) after
\(t\) visits to that row; \(\delta_t\) is the total attention probability
on incorrect candidates. The following result quantifies how calibration
reduces this leakage.

\begin{restatable}[Online routing dynamics; informal version of \cref{full:prop:online-routing}]{proposition}{PropRouting}
\label{prop:online-routing}
For \(K\ge2\), zero initialization and softmax cross-entropy steps of size
\(0<\eta\le1\) give the margin and distractor mass after \(t\) row visits:
\[
\beta_{t+1}=\beta_t+\eta\frac{K}{e^{\beta_t}+K-1},\qquad
\delta_t=\frac{K-1}{e^{\beta_t}+K-1}.
\]
After \(b_K=\Theta_\eta(\log(eK))\) visits,
\(\delta_t\asymp_\eta[1+(t-b_K)_+]^{-1}\).  For \(S_A\) independent
calibration blocks and a held-out row,
\[
\mathfrak r_K(S_A)=\E[\delta_N^2],\quad
N\sim\operatorname{Binomial}(S_A,1/K).
\]
For \(0<\varepsilon\le1/16\), the time to \(\mathfrak r_K\le\varepsilon\)
is \(\Theta_\eta(K\{\varepsilon^{-1/2}+\log(eK)\})\);
at \(K=1\), leakage and calibration cost vanish.
\end{restatable}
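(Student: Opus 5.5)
The plan has three parts: reduce the per-row softmax cross-entropy dynamics to a one-dimensional recursion for the margin, solve that recursion in two phases, and average over the random number of visits a row receives. Throughout, fix a query row \(q\) with correct position \(s^*=\pi(q)\), and write \(a_t\in\mathbb R^K\) for row \(q\) of \(A\) after \(t\) visits.

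\textbf{Step 1: reduce to the margin recursion.} On a calibration block with \(Q=q\), the loss is \(-\log\operatorname{softmax}(a_t)_{s^*}\). Its gradient step raises the correct score by \(\eta(1-p_{s^*})\) and lowers each distractor score by \(\eta p_s\). At zero initialization all distractor scores are equal, and this update preserves that equality. So by induction the distractors share one score, and \(a_t\) is described by the single margin \(\beta_t\).

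\begin{itemize}
\item With \(x=e^{\beta_t}\), the probabilities are \(p_{s^*}=x/(x+K-1)\) and \(p_s=1/(x+K-1)\) for each distractor.
\item The margin therefore changes by \(\eta(1-p_{s^*})+\eta p_s=\eta K/(e^{\beta_t}+K-1)\), which is the stated recursion.
\item The leakage is \(\delta_t=1-p_{s^*}=(K-1)/(e^{\beta_t}+K-1)\).
\item Rows are updated only on visits to that row, so "row visits" is the right time index.
\end{itemize}

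\textbf{Step 2: solve the recursion in two phases.} Track \(x_t=e^{\beta_t}\), which satisfies \(x_{t+1}=x_t\exp\{\eta K/(x_t+K-1)\}\).

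\begin{itemize}
\item \emph{Warm-up phase} (\(x_t\le K\)). The increment of \(\beta\) lies in \([\eta/2,\eta K/(K-1)]\subseteq[\eta/2,2\eta]\). So \(\beta\) grows linearly, and the first time \(b_K\) with \(x_t\ge K\) is \(\Theta_\eta(\log(eK))\). During this phase \(\delta_t\ge(K-1)/(2K-1)\ge 1/3\).
\item \emph{Linear phase} (after \(b_K\)). The exponent \(u=\eta K/(x+K-1)\) is at most \(\eta\le1\). Using \(1+u\le e^u\le 1+eu\) gives
\[
x_t+\tfrac{\eta K x_t}{x_t+K-1}\le x_{t+1}\le x_t+\tfrac{e\eta K x_t}{x_t+K-1}.
\]
Since \(x_t/(x_t+K-1)\in[1/2,1]\) once \(x_t\ge K\), each step increases \(x\) by between \(\eta K/2\) and \(e\eta K\).
\item Hence \(x_t\asymp_\eta K\,[1+(t-b_K)]\), and \(\delta_t\asymp(K-1)/x_t\asymp_\eta[1+(t-b_K)_+]^{-1}\).
\item For \(t\le b_K\) the warm-up bound gives \(\delta_t\asymp 1\), so the stated form holds for all \(t\).
\end{itemize}

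\textbf{Step 3: average over visits and bound the hitting time.} The query row of each calibration block is uniform on \([K]\) and independent across blocks. So a fixed row receives \(N\sim\operatorname{Binomial}(S_A,1/K)\) visits. An independent held-out row has the same law by symmetry, and its squared leakage is \(\delta_N^2\), giving \(\mathfrak r_K(S_A)=\E[\delta_N^2]\).

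For the upper bound on the hitting time:
\begin{itemize}
\item Take \(S_A=CK\{\varepsilon^{-1/2}+\log(eK)\}\).
\item Split \(\E[\delta_N^2]\le \Pr(N<S_A/(2K))+\delta_{S_A/(2K)}^2\), using that \(\delta_t\) is decreasing and at most \(1\).
\item A Chernoff bound gives \(\Pr(N<S_A/(2K))\le e^{-S_A/(8K)}\), which is at most \(\varepsilon/2\) because \(\varepsilon^{-1/2}\gtrsim\log(1/\varepsilon)\).
\item Step 2 gives \(\delta_{S_A/(2K)}^2\le\varepsilon/2\) for \(C=C(\eta)\) large.
\end{itemize}

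For the lower bound on the hitting time:
\begin{itemize}
\item By Markov's inequality, \(\Pr(N\le 2S_A/K)\ge1/2\), so \(\mathfrak r_K\ge\tfrac12\delta^2_{\lfloor 2S_A/K\rfloor}\).
\item If \(2S_A/K\le b_K\), this is at least \(1/18\). That exceeds \(\varepsilon\) since \(\varepsilon\le 1/16\), which forces \(S_A\gtrsim K\log(eK)\).
\item Otherwise \(\delta^2\gtrsim(2S_A/K-b_K)^{-2}\), which forces \(S_A\gtrsim K\varepsilon^{-1/2}\).
\item Combining the two cases gives the \(\Theta_\eta\) claim.
\end{itemize}

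For \(K=1\) there are no distractors, so \(\delta\equiv0\) and no calibration is needed.

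\textbf{Main obstacle.} The delicate part is Step 2's uniformity: the constants must not depend on \(K\) across the transition at \(x\approx K\). The threshold \(x_t\ge K\) is chosen so that both the factor \(x/(x+K-1)\) and the increment of \(\beta\) are pinned to constants. The Step 3 lower bound also depends on the \(1/18\) constant beating \(\varepsilon\le 1/16\). If that margin proves tight, I would instead lower \(b_K\) to the first time \(x_t\ge K/2\).
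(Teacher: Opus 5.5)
Your route is essentially the paper's. It uses the same reduction to a single margin, the same burn-in/linear-growth analysis of $e^{\beta_t}$, and the same Chernoff split for the upper bound. The paper works with $x_t=e^{\beta_t}/(K-1)$ and puts burn-in at $\beta_t\ge\ln(2K)$. It gets $b_K\asymp_\eta\ln(eK)$ from the potential $\Psi_K(\beta)=(e^\beta-1+(K-1)\beta)/K$, which satisfies $\eta t\le\Psi_K(\beta_t)\le e^\eta\eta t$. In Step~2 you should state the overshoot bound $x_{b_K}\le e^{\eta}x_{b_K-1}<e^{\eta}K$, since your claim $x_t\lesssim_\eta K[1+(t-b_K)]$ depends on it.

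\textbf{The lower bound fails as written.} We have $\tfrac12\cdot\tfrac19=\tfrac1{18}<\tfrac1{16}$, so your first case gives no contradiction when $\varepsilon\in(1/18,1/16]$. For those $\varepsilon$ you get no lower bound at all, so the contingency you mention is necessary, not optional.
\begin{itemize}
\item The case must also read $\lfloor 2S_A/K\rfloor<b_K$ rather than $\le b_K$. Because of the overshoot, $\delta_{b_K}$ can fall below $1/3$: for $K=2$, $\eta=1$ one gets $b_K=1$ and $\delta_1=1/(1+e)$.
\item The cheapest repair keeps your $b_K$ and applies Markov at level $3S_A/K$. Then $\Pr(N\le 3S_A/K)\ge\tfrac23$, and $\tfrac23\cdot\tfrac19=\tfrac2{27}>\tfrac1{16}$. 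The second case, $\lfloor 3S_A/K\rfloor\ge b_K\ge1$, still gives both $S_A\gtrsim_\eta K\log(eK)$ and $S_A\gtrsim_\eta K\varepsilon^{-1/2}$ as before.
\item Do not implement your fix by redefining $b_K$ as the first time $x_t\ge K/2$. That makes $b_K=0$ at $K=2$, contradicting $b_K\gtrsim\log(eK)$. Keep any lower threshold separate from the dynamics claim.
\end{itemize}

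\textbf{How the paper handles it.} The paper takes the lower-threshold route for the logarithmic term. Since $\beta_t\le2\eta t$, if $m=S_A/K\le\ln(K-1)/(8\eta)$ then on the Markov event $\{N\le 2m\}$ one has $\beta_N\le\tfrac12\ln(K-1)$. Hence $\delta_N\ge\tfrac12$ and $\mathfrak r_K\ge\tfrac18>\tfrac1{16}$; bounded $K$ is absorbed by the other bound.

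For the $\varepsilon^{-1/2}$ term the paper avoids case analysis altogether. From $x_0\le1$ it gets $\delta_t\ge(2+2\eta e^\eta t)^{-1}$ for every $t$, and Jensen gives $\mathfrak r_K\ge(2+2\eta e^\eta m)^{-2}$. The condition $\varepsilon\le1/16$ is exactly what makes $\varepsilon^{-1/2}-2\ge\tfrac12\varepsilon^{-1/2}$ in this step.
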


\paragraph{Learning and storing the values.}
After calibration, we freeze the route and fit an output vector on
\(A_n=\{i:q_i\ge n^{-1}\}\). This active set contains coordinates whose
energy is large enough to resolve at sample size \(n\). The natural box is
\(\mathcal W=\prod_{i\in A_n}[-R_i,R_i]\); output-vector entries
outside \(A_n\) are fixed at zero. The attended feature \(h_t\) contains the
attention-weighted coordinate indicators restricted to \(A_n\).
Let \(\mu_{\rm sgd}=\frac9{16}\min_{|z|\le R}\sigma'(z)>0\) be the
step-size constant used in the formal theorem. Starting from
\(w_1=0\), use \(\eta_t=2/[\mu_{\rm sgd}(t+1)]\) and return
\(\bar w_n=2\sum_{t=1}^n t w_t/[n(n+1)]\). If \(A_n\) is empty,
return zero. Projection \(\Pi_{\mathcal W}^D\) uses the frequency-weighted
norm \(\|v\|_D^2=\sum_{i\in A_n}p_i v_i^2\). On the active coordinates,
\(D\) is restricted to \(A_n\), and the update is
\begin{equation}
w_{t+1}=\Pi_{\mathcal W}^{D}\!\left[
w_t-\eta_tD^{-1}\{\sigma(w_t^\top h_t)-Y_t\}h_t\right],
\qquad D=\operatorname{diag}(p).
\label{eq:actual-output-sgd}
\end{equation}

The final vector is encoded in coordinatewise scalar formats using
\(B_V\) value-state bits. The format-range ratio \(\kappa_i\) compares
its stored range with the logit radius \(R_i\); unit ratios recover the
representation problem of Section~\ref{sec:revision-model}. We write
\(\widehat F\) for the resulting causal predictor and
\(\mathcal R_\theta(\widehat F)\) for its population cross-entropy on a
complete prediction block. The term \(\mathcal E_{\rm arithmetic}\)
accounts for deployment arithmetic error under the formal theorem's
numerical assumptions.

\begin{restatable}[Causal self-attention realization; informal version of \cref{full:thm:causal-realization}]{theorem}{ThmCausalRealization}
\label{thm:causal-realization}
The weighted-average SGD with scalar formats in
\cref{full:thm:causal-realization} satisfies
\begin{equation}
\E[\mathcal R_\theta(\widehat F)-\Linf]
\lesssim_{R,K,\eta}
\Phi_\mu(n^{-1})+D^{\rm fmt}_{q,\kappa}(B_V)
+\mathfrak r_K(S_A)+\mathcal E_{\rm arithmetic},
\label{eq:causal-realization}
\end{equation}
where \(B_V\) is the value-state budget and \(\kappa_i\ge1\) the public
format-range ratio, with
\[
D^{\rm fmt}_{q,\kappa}(B_V)
=\inf_{b_i\ge0,\,\sum_i b_i\le B_V}
\sum_iq_i\min\{1,\kappa_i^22^{-2b_i}\}.
\]
For \(\kappa_i=1\), this is \(\Phi_\mu(\tau_{B_V})\), and the core
surface holds at total budget \(B\) for power-law spectra, fixed \(K\),
\(o(B)\) routing bits and lower-order routing/arithmetic errors.
\end{restatable}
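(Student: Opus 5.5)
The plan is to bound the population excess cross-entropy of \(\widehat F\) by four additive pieces. The route, the stored output vector and the arithmetic each perturb the logit fed to the sigmoid, and the loss is locally quadratic around the true logit. With all logits confined to \([-R,R]\), a second-order Taylor expansion of \(\ell(y,\cdot)\) about \(\theta_I\) gives
\[
\mathcal L_\theta(f)-\Linf\le C_R\,\E[(z-\theta_I)^2]
\]
for any predictor whose logit \(z\) lies in a bounded range. We write the deployed logit as
\[
z=(z-z^{\rm exact})+(\widehat w_{I}-\bar w_{n,I})+(\bar w_{n,I}-\theta_I)\ind\{I\in A_n\}-\theta_I\ind\{I\notin A_n\}+(\text{leakage}),
\]
where \(\widehat w\) is the quantized vector. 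The leakage term is \(\sum_{s}\mathrm{attn}_s\widehat w_{C_s}-\widehat w_I\). Applying \((a_1+\dots+a_5)^2\le5\sum a_k^2\) then separates the bound into the four terms of \cref{full:eq:causal-realization}.

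The first two pieces are routing and arithmetic. Since \(\abs{\widehat w_i}\le R\) by the clipping in the decoder, the leakage is at most \(2R\delta\). Here \(\delta\) is the distractor mass of the frozen row \(Q\). By \cref{full:prop:online-routing}, together with independence of the calibration blocks from the held-out block, its second moment is exactly \(\mathfrak r_K(S_A)\). By \cref{full:prop:qk}, the masked query--key head computes the same route-table scores, so nothing further is lost there. The arithmetic piece is \(\mathcal E_{\rm arithmetic}\) by definition. The inactive coordinates contribute
\[
\sum_{i\notin A_n}p_i\theta_i^2\le\sum_{q_i<1/n}q_i\le\Phi_\mu(n^{-1}).
\]

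The statistical piece is the weighted-average SGD error on \(A_n\). We view \eqref{eq:actual-output-sgd} as preconditioned projected SGD on the population objective \(F(w)=\E[\ell(Y,w^\top h)]\) in the \(D\)-geometry. One subtlety: during Stage V the feature \(h_t\) is the attended, possibly leaky, indicator rather than \(e_{I_t}\). I would first treat the exact-route case \(h=e_I\). There
\[
F(w)-F(\theta)=\sum_{i\in A_n}p_i\,[\text{Bernoulli KL in }w_i].
\]
This is \(\mu_{\rm sgd}\)-strongly convex in \(\|\cdot\|_D\) on the box: the factor \(\tfrac{9}{16}\) leaves slack for curvature and for the leakage perturbation. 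The preconditioned stochastic gradient \(D^{-1}(\sigma-Y)e_I\) has \(D\)-norm squared \((\sigma-Y)^2/p_I\), so its expectation is at most \(\abs{A_n}\). The standard \(2/(\mu(t+1))\) weighted-averaging analysis of Lacoste-Julien--Schmidt--Bach then gives
\[
\E\|\bar w_n-\theta\|_D^2\lesssim \abs{A_n}/(\mu^2 n).
\]
Since \(\abs{A_n}/n=\sum_i\min\{q_i,1/n\}\) restricted to \(A_n\), this is at most \(\Phi_\mu(n^{-1})\). The leaky-route case adds a bias whose effect I would absorb into \(\mathfrak r_K(S_A)\) via a perturbation argument. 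I expect this to be the main obstacle: the population optimum shifts by an amount controlled by \(\E\delta^2\), and the constant depends on \(K\) and \(\eta\).

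The last piece is quantization. For each \(i\), the decoder with \(b_i\) bits on \([-\kappa_iR_i,\kappa_iR_i]\) followed by clipping has error at most \(\min\{2R_i,\kappa_iR_i2^{-b_i}\}\). Weighting by \(p_i\) gives \(\lesssim q_i\min\{1,\kappa_i^22^{-2b_i}\}\). Flooring the near-optimal real allocation loses at most a factor \(4\), so the total is \(\lesssim D^{\rm fmt}_{q,\kappa}(B_V)\). For \(\kappa_i=1\), \cref{full:thm:spectrum} identifies \(D^{\rm fmt}\) with \(D_q=\Phi_\mu(\tau_{B_V})\), since \(\min\{1,2^{-2b}\}=2^{-2b}\) when \(b\ge0\). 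The closing claim then follows from \cref{full:cor:spectrum-rates}. For power-law spectra, \(\Phi_\mu(\tau_{B-o(B)})\asymp\Phi_\mu(\tau_B)\) by regular variation, and the routing and arithmetic terms are assumed lower order.
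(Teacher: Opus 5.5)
Your logit decomposition is sound, and several pieces go through as written: the routing term (test-time leakage at most $2R\delta_Q$, with second moment $\mathfrak r_K(S_A)$), the omitted tail, the quantization step, and the $\kappa_i=1$ identification. For quantization, you charge the target coordinate and let the clipped distractors fall into the leakage term. This is a valid alternative to the paper's Jensen bound $\E[(e^\top H)^2]\le\sum_ip_ie_i^2$. The gap is the statistical piece, which is where the paper's proof actually lives. For every $K\ge2$, Stage V runs on the leaky features $h_t=(1-\delta_{Q_t})e_{I_t}+\frac{\delta_{Q_t}}{K-1}\sum_se_{I_{t,s}}$. Your exact-route SGD analysis therefore never describes the algorithm in the theorem. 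The deferred ``perturbation argument'' must redo the SGD analysis itself, not just locate the shifted optimum. The step size is fixed at $\eta_t=2/[\mu_{\rm sgd}(t+1)]$, so the Lacoste-Julien--Schmidt--Bach argument needs the leaky objective $F$ to be $\mu_{\rm sgd}$-strongly convex in $\|\cdot\|_D$. Computing $\E[hh^\top\mid Q=q]=c_{d,q}D+(1-c_{d,q})p_Ap_A^\top$ with $c_{d,q}=(1-\delta_q)^2+\delta_q^2/(K-1)$ shows the curvature is only $\bar c_d\min_{|z|\le R}\sigma'(z)$. Under uniform attention, $\bar c_d=1/K<9/16$. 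So the factor $9/16$ is not slack. It is exactly the bound $\bar c_d\ge(1-K^{-1}\sum_q\delta_q)^2\ge9/16$, which holds only on the calibration event $\mathcal G=\{r_{\rm cal}\le1/16\}$, where $r_{\rm cal}=K^{-1}\sum_q\delta_q^2$.

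The missing steps are these. First, condition on Stage A and on $\mathcal G$. Second, bound the preconditioned gradient second moment for leaky features by $\operatorname{tr}(D^{-1}\E hh^\top)=\bar c_dm+(1-\bar c_d)\sum_{i\in A_n}p_i\le m$. Third, run the weighted-average analysis on $F$ itself to get $\E[F(\bar w_n)-F(w^*)]\le2m/[\mu_{\rm sgd}(n+1)]$. Fourth, handle the bias by the comparison $F(w^*)\le F(\theta^A)$ and the pointwise bound $|\theta_I-(\theta^A)^\top h|\le|\theta_I|\mathbf 1\{I\notin A_n\}+2R\delta_q$. This gives $F(w^*)-\Linf\le C_R(\sum_{i\notin A_n}q_i+r_{\rm cal})$. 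If you prefer your parameter-distance route, strong convexity on $\mathcal G$ gives $\|w^*-\theta^A\|_D^2\le2(F(\theta^A)-\Linf)/\mu_{\rm sgd}$, which closes it. Off $\mathcal G$, no SGD rate is claimed. The excess loss there is bounded by a constant depending on $R$, and Markov plus independence of the held-out query give $\Pr(\mathcal G^c)\le16\,\E r_{\rm cal}=16\,\mathfrak r_K(S_A)$. Without such a split, the prescribed step size is not justified, because a poor calibration draw makes $F$ less curved than $\mu_{\rm sgd}$ assumes. A minor point: comparisons $q_{(r)}\asymp r^{-(1+\xi)}$ need not be regularly varying. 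Justify $D_q(B-o(B))\asymp D_q(B)$ by the two-sided bound $D_q(b)\asymp b^{-\xi}$, not by regular variation.
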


Conditioned on the route, output risk is strongly convex on fitted
coordinates in the frequency-weighted norm; preconditioning balances their learning rates.
Appendix~\ref{app:secondary-results} gives gradient, state, and format details.
\paragraph{The total resources.}
Write \(B_A\) for the bits encoding the route, so total learned state is
\(B=B_V+B_A\), and total training blocks are \(N=n+S_A\).
Let \(q_{(r)}\) list coordinate energies in decreasing order, with decay
exponent \(\xi>0\). Charging both stages preserves the leading resource
exponents: for \(q_{(r)}\asymp r^{-(1+\xi)}\), fixed \(K\), unit midpoint formats and
the deployment-error condition in \cref{prop:total-resources}, risk
\(\varepsilon\) is achieved with
\(N=n+S_A=O(\varepsilon^{-(1+\xi)/\xi})\) total blocks and
\(B=B_V+B_A=O(\varepsilon^{-1/\xi})\) total learned bits.
Calibration blocks and route-state bits are lower-order terms.

%% file: arxiv/sections/revision_5b_branches.tex
\section{Adaptive Representation and Finite-Precision Realization}
\label{sec:revision-branches}

Two representation questions remain after the resource law. How should
bits be allocated when the energy decay is unknown? How accurately does a
finite-precision attention network evaluate the retained parameters?
The first result constructs an estimator and a storage schedule; the second
maps supplied parameters to prediction error after numerical evaluation.
Here scalar coordinates are indexed by \(j\), and \(\Delta_j\) denotes
the radius written \(R_i\) in Section~\ref{sec:revision-model}.
The loss \(\mathcal L_\theta(f)\) is the same population cross-entropy.

\paragraph{Adaptive representation.}
\label{sec:adaptive}
\begin{restatable}[Adaptation with a known ordering; informal version of \cref{full:thm:adaptive}]{theorem}{ThmAdaptive}
\label{thm:adaptive}
Fix \(0<\chi_{\min}\le\chi_{\max}<\infty\), and suppose the canonical mode
order is known.  Assume
\(q_j\asymp j^{-(1+\chi)}\)
with common comparison constants and unknown
\(\chi\in[\chi_{\min},\chi_{\max}]\). For every \(n,B\in\N\), one clipped
likelihood estimator and one deterministic integer bit schedule, both
independent of \(\chi\), satisfy simultaneously throughout this interval
\begin{equation}
  \sup_{\theta_j\in[-\Delta_j,\Delta_j]}
  \E_\theta\!\left[
    \mathcal L_\theta(\widehat f)-\mathcal L_\theta(\theta)
  \right]
  \lesssim
  n^{-\chi/(1+\chi)}+B^{-\chi}.
  \label{eq:adaptive-rate}
\end{equation}
\end{restatable}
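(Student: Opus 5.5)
The proof splits the risk into a statistical term and a representation term, bounds each by a $\chi$-free rule, and checks that each rule is within constants of the oracle rate for every $\chi\in[\chi_{\min},\chi_{\max}]$. Write $\widehat\theta_j$ for the clipped per-coordinate likelihood estimator, so $\widehat\theta_j$ is the empirical logit clipped to $[-\Delta_j,\Delta_j]$, set to $0$ when mode $j$ is unvisited. Let $\widetilde\theta$ be its quantized version under a fixed integer schedule $(b_j)$, with the midpoint quantizer on $[-\Delta_j,\Delta_j]$. As in the upper bound of \cref{thm:spectrum}, the excess cross-entropy with bounded logits is at most $C_R\sum_jp_j\E(\widetilde\theta_j-\theta_j)^2$. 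Using $(a+b)^2\le2a^2+2b^2$, this is at most $2C_R\sum_jp_j\E(\widehat\theta_j-\theta_j)^2+2C_R\sum_jq_j2^{-2b_j}$. So it suffices to show that the statistical sum is $\lesssim n^{-\chi/(1+\chi)}$ and the quantization sum is $\lesssim B^{-\chi}$, uniformly over the interval.

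\textbf{Statistical term.} The clipped likelihood estimator does not depend on $\chi$ at all. The per-coordinate bound used in the achievability half of \cref{full:thm:finite-bit} gives $p_j\E(\widehat\theta_j-\theta_j)^2\lesssim_R\min\{q_j,n^{-1}\}$. Clipping caps the error at $q_j$, which also handles coordinates with few or no visits. When $np_j\gtrsim1$, the clipped MLE has variance of order $1/(np_j)$, since the Fisher information is bounded below on $[-R,R]$. Summing gives $S_q(n)=\Phi_\mu(n^{-1})$. With common comparison constants, \cref{full:cor:power-profile} gives $S_q(n)\asymp n^{-\chi/(1+\chi)}$, with constants depending only on $c_\pm$ and $\chi_{\min}$. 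The only care needed is that the tail-sum constant $\sum_{j>J}j^{-(1+\chi)}\le J^{-\chi}/\chi$ degrades only as $\chi\to0$, and $\chi\ge\chi_{\min}$ prevents this. So no adaptation is needed here.

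\textbf{Representation term.} This is the main step, because water-filling needs the unknown $\chi$. The plan is a $\chi$-free schedule with logarithmically decaying widths. Let $J=\lfloor cB\rfloor$ for a small absolute constant $c$. For $j\le J$ set
\[
b_j=\Big\lceil \tfrac{\kappa}{2}\log_2(eJ/j)\Big\rceil ,
\]
and set $b_j=0$ for $j>J$, where $\kappa=1+\chi_{\max}$. The total width is $\sum_{j\le J}b_j\le J\bigl(1+\tfrac{\kappa}{2}\log_2 e\cdot O(1)\bigr)$, because $\sum_{j\le J}\log(eJ/j)=O(J)$ by Stirling. This is at most $B$ once $c$ is chosen depending on $\chi_{\max}$, and the widths are integers by construction. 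The distortion splits into two sums.
\begin{itemize}
\item The tail contributes $\sum_{j>J}q_j\lesssim J^{-\chi}/\chi_{\min}\asymp B^{-\chi}$.
\item The head contributes
\[
\sum_{j\le J}j^{-(1+\chi)}(j/eJ)^{\kappa}=(eJ)^{-\kappa}\sum_{j\le J}j^{\kappa-1-\chi}\lesssim J^{-\chi},
\]
since $\kappa-1-\chi\ge0$ makes the sum $\lesssim J^{\kappa-\chi}/(\kappa-\chi)$, and the borderline $\chi=\chi_{\max}$ gives exponent $0$ and sum $J$.
\end{itemize}
Hence the quantization sum is $\lesssim B^{-\chi}$ uniformly over the interval. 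The expected obstacle is exactly this balance. The exponent $\kappa$ must exceed $\chi$ for all admissible $\chi$, so the head sum is dominated by its largest index, while the bit cost must stay linear in $J$. Choosing $\kappa=1+\chi_{\max}$ achieves both, at a constant-factor cost in $c$. Small $B$, including the case $J=0$, is absorbed into constants since the risk is at most $\sum_jq_j$. Adding the two terms gives \cref{full:eq:adaptive-rate}.
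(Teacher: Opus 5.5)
Your proof is correct and follows essentially the same route as the paper: the \(\chi\)-free clipped likelihood estimator from the finite-bit upper bound handles the statistical term, and a logarithmic schedule \(b_j\approx A\log_2(m/j)\) with slope \(2A>\chi_{\max}\) on a prefix \(m\asymp B\) makes both the head distortion and the omitted tail \(O(m^{-\chi})\) uniformly over \([\chi_{\min},\chi_{\max}]\). The only differences are cosmetic: you round up and fix \(J=\lfloor cB\rfloor\) in advance, which spares you the paper's check that flooring still leaves the maximal feasible prefix linear in \(B\), and you take \(\kappa=1+\chi_{\max}\) rather than an arbitrary \(2A>\chi_{\max}\).
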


Choose \(A>\chi_{\max}/2\) and the largest prefix \(m\) satisfying
\begin{equation}
b_j=\floor*{A\log_2(m/j)}_+\quad(j\le m),\qquad
\sum_{j\le m}b_j\le B.
\label{eq:universal-schedule}
\end{equation}
Then \(m\asymp B\), and both quantization and omitted-tail errors are
\(O(m^{-\chi})\) uniformly over the exponent interval. The logarithmic
schedule spends most bits on the leading coordinates while matching the
tail error for every allowed decay rate.

\paragraph{Finite-precision self-attention.}
\label{sec:transformer}
We construct a network that evaluates a supplied logit table for the first
\(M\) coordinates and uses zero for the rest. Here \(M\) is the retained
prefix length, \(t_\star\) is the matching candidate position, and
\(\beta\ge0\) controls the matching attention score. The candidate's mode
label \(j_t\) identifies its scalar table entry. Use public basis addresses \(a_t\in\R^K\), mode vectors
\(e_j^{(M)}=e_j\) for \(j\le M\) and zero otherwise, and
\(d=K+M+1\).  Candidate and query tokens are
\(x_t=(a_t,e_{j_t}^{(M)},0)\) and \(x_q=(a_{t_\star},0,1)\).
The fixed-key construction in \cref{prop:qk} gives
\begin{equation}
\frac{\langle W_Qx_q,W_Kx_t\rangle}{\sqrt{d_h}}
=\beta\inner{a_{t_\star}}{a_t}.
\label{eq:attention-score}
\end{equation}
With the value projection onto mode coordinates, a strict causal mask, and
an output vector \(w\) supported only on those coordinates,
\begin{equation}
h_q=\sum_{t=1}^K\alpha_t\bar e_{j_t}^{(M)},\qquad
\alpha_t=\frac{\exp(\beta\ind\{t=t_\star\})}{e^\beta+K-1},\qquad
z=w^\top h_q,
\label{eq:attention-map}
\end{equation}
where \(\bar e_j^{(M)}=(0,e_j^{(M)},0)\).  Write
\(w_j^{\rm mode}=w_{K+j}\), and set \(\vartheta_j=w_j^{\rm mode}\)
for \(j\le M\) and \(\vartheta_j=0\) otherwise.

The numerical calculation below uses unit roundoff \(u\), an upper bound
on the relative error of the listed scalar operations. Its four error terms
respectively measure selecting a distractor, arithmetic rounding, storing
the retained logits, and predicting omitted coordinates with zero.

\begin{restatable}[Finite-precision realization; informal version of \cref{full:thm:transformer}]{theorem}{ThmTransformer}
\label{thm:transformer}
Fix \(K,M\ge1\), \(R<\infty\), \(\beta\ge0\), and the construction above.

\begin{enumerate}
  \item In exact arithmetic, if \(\abs{w_j^{\rm mode}}\le R\) for every
  \(j\in[M]\), then a query whose matching candidate has any mode \(j\ge1\)
  satisfies
  \begin{equation}
    \abs{z-\vartheta_j}\le2R(K-1)e^{-\beta}.
    \label{eq:routing-bound}
  \end{equation}

  \item Set \(w_j^{\rm mode}=\theta_j\) for \(j\in[M]\) and store it with
  \(b_j\in\N_0\) bits on
  \([-\Delta_j,\Delta_j]\).  Assume
  stable max-shifted softmax with exactly representable shifted scores \(0,-\beta\)
  and an exponent range containing their exponentials. Exponentials,
  multiplications, and divisions have relative error at most \(u\);
  sums use balanced pairwise accumulation. For the resulting predictor
  \(f_{\rm fp}\), if
  \((\ceil{\log_2K}+4)u\le1/4\), then the held-out excess cross-entropy obeys
  \begin{equation}
    \mathcal L_\theta(f_{\rm fp})-\Linf
    \le
    C_R\left\{
      (K-1)^2e^{-2\beta}
      +u^2(1+\log K)^2
      +\sum_{j=1}^M q_j2^{-2b_j}
      +\sum_{j>M}q_j
    \right\}.
    \label{eq:fp-bound}
  \end{equation}

  \item Within the separable scalar-table representation in which coordinate
  \(j\) has at most \(2^{b_j}\) values, the bit-dependent term in
  \cref{eq:fp-bound} is tight up to constants.
\end{enumerate}
Here and below, \(C_R\) depends only on the logit bound \(R\).
\end{restatable}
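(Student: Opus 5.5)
The proof splits along the three items, with a common reduction from excess cross-entropy to a weighted squared logit error. Since all logits involved lie in \([-R,R]\) (the true \(\theta_j\), the clipped stored values, and the computed \(z\) after we show it stays in a slightly enlarged interval), the excess Bernoulli cross-entropy at a query with coordinate \(j\) equals the KL divergence \(\KL(\Ber(\sigma(\theta_j))\,\|\,\Ber(\sigma(z)))\). This is at most \(\frac18(z-\theta_j)^2\), and at least \(c_R(z-\theta_j)^2\) on the bounded range. Averaging over \(\Pr(I=j)=p_j\) therefore reduces everything to bounding \(\sum_j p_j\E(z_j-\theta_j)^2\), where \(z_j\) is the produced logit when the matching candidate has mode \(j\).

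\textbf{Item 1.} From \cref{eq:attention-map}, \(z=\sum_t\alpha_t\vartheta_{j_t}\). Subtract \(\vartheta_j=\sum_t\alpha_t\vartheta_j\) to get
\[
z-\vartheta_j=\sum_{t\ne t_\star}\alpha_t(\vartheta_{j_t}-\vartheta_j).
\]
Each difference is at most \(2R\) in absolute value. The total distractor mass is \((K-1)/(e^\beta+K-1)\le(K-1)e^{-\beta}\). Together these give \cref{eq:routing-bound}.

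\textbf{Item 2.} Decompose \(z_{\rm fp}-\theta_j\) into three pieces:
\[
z_{\rm fp}-\theta_j=(z_{\rm fp}-z_{\rm exact})+(z_{\rm exact}-\hat\vartheta_j)+(\hat\vartheta_j-\theta_j),
\]
where \(\hat\vartheta_j\) is the stored (quantized) value, equal to \(0\) for \(j>M\). Squaring and using \((a+b+c)^2\le3(a^2+b^2+c^2)\) separates the terms.
\begin{itemize}
\item The routing piece is controlled by item 1 applied to the quantized table, which stays in \([-R,R]\). This yields \((K-1)^2e^{-2\beta}\).
\item The storage piece is handled by the midpoint quantizer on \([-\Delta_j,\Delta_j]\) with \(2^{b_j}\) cells. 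Its error is at most \(\Delta_j2^{-b_j}\), contributing \(q_j2^{-2b_j}\) after weighting by \(p_j\). For \(j>M\) the error is \(|\theta_j|\le\Delta_j\), giving the tail \(\sum_{j>M}q_j\).
\item The arithmetic piece is the main obstacle. It is a standard forward error analysis of the max-shifted softmax. The shifted scores \(0,-\beta\) are exact. Each exponential carries a relative error \(u\). Pairwise summation of \(K\) positive terms has relative error \(\le\ceil{\log_2K}u/(1-\ceil{\log_2K}u)\). The division adds one more \(u\), and the weighted sum \(\sum_t\alpha_t\vartheta_{j_t}\) (a multiplication by \(w\), then pairwise accumulation of \(K\) products) adds another \(\lceil\log_2K\rceil u\) relative factor on absolute values. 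The condition \((\ceil{\log_2K}+4)u\le1/4\) keeps all the \((1+u)^k\) factors within \(1+O(ku)\) and bounds each computed \(\alpha_t\) by a constant multiple of the exact one. The result is \(|z_{\rm fp}-z_{\rm exact}|\lesssim R(1+\log K)u\), which also keeps \(z_{\rm fp}\) in \([-2R,2R]\). That last fact is what justifies the KL upper bound with \(R\)-constants. Squaring gives \(u^2(1+\log K)^2\).

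Careful bookkeeping of which errors are relative to \(\sum|\alpha_t\vartheta|\le R\) is where I expect the effort to go.
\end{itemize}

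\textbf{Item 3.} Tightness follows from a lower bound under a prior. Take \(\theta_j\) independent and uniform on \([-\Delta_j,\Delta_j]\), and note that the loss lower bound is \(c_R\sum_jp_j\E(\hat\theta_j-\theta_j)^2\) when predictions are confined to \([-R,R]\); clipping only helps. For a scalar with at most \(2^{b_j}\) reconstruction values, the optimal quantizer of a uniform variable has mean squared error \(\Delta_j^2 4^{-b_j}/3\), since equal cells are optimal for a uniform source. Weighting by \(p_j\) gives \(\gtrsim q_j2^{-2b_j}\), matching the third term. Alternatively, one may invoke \cref{full:lem:joint-codebook} coordinatewise with \(B=b_j\). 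The worst case over \(\theta\) dominates the Bayes average, so the sup-risk lower bound follows.
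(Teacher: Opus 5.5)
Your proposal is correct and follows essentially the paper's argument: the distractor-mass bound for item~1; for item~2, the routing/storage/arithmetic triangle decomposition, with the balanced-summation forward error measured against $\sum_t\alpha_t\abs{\vartheta_{j_t}}\le R$ and the upper KL curvature bound; and for item~3, a product-uniform-prior Bayes lower bound, where your exact uniform-source quantizer error $\Delta_j^2 4^{-b_j}/3$ replaces the paper's one-dimensional entropy lemma and gives a slightly better constant. The one unnecessary step is keeping $z_{\rm fp}$ in $[-2R,2R]$. Your crude constants do not quite guarantee that interval, but you do not need it: $\sigma(z)(1-\sigma(z))\le 1/4$ on all of $\R$ gives $\KL\le\frac18(z-\theta_j)^2$ for every real $z$, exactly as the paper observes.
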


The bound separates routing, arithmetic, quantization, and omitted energy.
Pairwise summation gives logarithmic dependence on \(K\) in arithmetic
error before squaring. This result evaluates supplied logits;
\cref{thm:causal-realization} controls their learning. Dense storage
accounting and proofs appear in
Appendices~\ref{app:secondary-results}, \ref{app:adaptive}, and
\ref{app:transformer}.

%% file: arxiv/sections/revision_5c_compute.tex
\section{Bit Complexity and Compute--Precision Scaling}
\label{sec:revision-compute}
\label{sec:compute}
\label{sec:compute-exp}

\begin{figure}[htb]
\centering
\includegraphics[width=\linewidth]{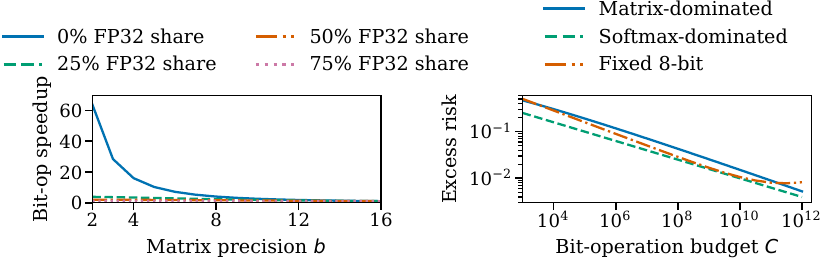}

\caption{\textbf{Retained precision sets resource ceilings and error floors.}
Left: analytic bit-operation ratio with retained high-precision work.
Right: risk--cost laws under the stated arithmetic model; fixed softmax
precision creates a numerical floor.}
\label{fig:compute-summary}

\end{figure}

Representation accuracy is useful only if the chosen arithmetic can
preserve it. This section separates the precision needed as the represented
prefix grows, the work of a specified dense graph, and the savings from
lowering precision at a fixed shape.

\paragraph{Error and cost as model size grows.}
Here \(M\) is the number of represented scalar modes, \(b\) is arithmetic
precision, and \(\chi>0\) is the model-error decay exponent. The operation
exponent \(f\ge0\) describes per-block work; \(\nu\ge0\) describes how
scalar-operation cost grows with bit width. The function \(\kappa(M)\)
amplifies stored perturbations into prediction error. Suppose representing
\(M\) modes has error \(\Theta(M^{-\chi})\), each processed block costs
\(\Theta(M^f)\) scalar operations, \(\Theta(M^{1+\chi})\) blocks are
necessary and sufficient, a \(b\)-bit operation costs \(\Theta(b^\nu)\),
and stored perturbation \(2^{-b}\) produces error
\(\Theta(\kappa(M)^22^{-2b})\). Under these two-sided assumptions,
\(\mathcal R(C)\) is the best held-out risk at arithmetic
budget \(C\).

\paragraph{Counting a dense forward--backward pass.}
To relate these abstract costs to an implementation, fix a conventional
Transformer graph. Its matrix arithmetic and softmax arithmetic can use
different precisions, denoted \(b_{\rm mm}\) and \(b_{\rm sm}\).
For a dense Transformer with sequence length \(\ell\), width \(d\), feed-forward width
\(m_{\rm ff}\), \(H\) heads, and \(L\) layers, its conventional
forward--backward arithmetic is, up to constants,
\[
L\!\left[
(\ell d^2+\ell d m_{\rm ff}+\ell^2d)\mathsf M(b_{\rm mm})
+H\ell^2\mathsf S(b_{\rm sm})
\right].
\]
Here \(\mathsf M(b_{\rm mm})\) and \(\mathsf S(b_{\rm sm})\) are the
bit-operation costs of a matrix multiply--accumulate and a softmax scalar
operation at their respective precisions. This forward--backward count
excludes optimizer updates. Recomputation reduces saved attention state
while preserving the dense arithmetic order.

\begin{restatable}[Precision-aware compute phases; informal version of \cref{full:thm:compute}]{theorem}{ThmCompute}
\label{thm:compute}
For the two-sided risk--cost family of \cref{full:thm:compute}, with
\(\chi>0\), \(f,\nu\ge0\), and \(n(M)\asymp M^{1+\chi}\):
\begin{enumerate}
\item If \(1\lesssim\kappa(M)\lesssim M^\rho\), then
the least model-error-matching precision is \(b=\Theta(\log M)\), and
\begin{equation}
\mathcal R(C)-\Linf\asymp
C^{-\chi/(f+1+\chi)}
(\log C)^{\chi\nu/(f+1+\chi)}
\label{eq:poly-conditioning}
\end{equation}
up to other slowly varying factors specified in the assumptions.
\item If \(\kappa(M)=\exp(\Theta(M^\lambda))\), \(\lambda>0\), then
\(b=\Theta(M^\lambda)\), and the loss exponent is
\(\chi/(f+1+\chi+\lambda\nu)\).
\item If \(q_j\asymp j^{-(1+\chi)}\), adaptive per-mode precision has
error \(\Theta(B^{-\chi})\), whereas a uniform-width prefix has optimal
error
\begin{equation}
\Theta((B/\log B)^{-\chi}).
\label{eq:uniform-penalty}
\end{equation}
\end{enumerate}
Each converse uses the representation, operation set, storage layout, and
bit-operation cost specified above.
\end{restatable}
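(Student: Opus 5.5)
The plan is to reduce all three items to a one-dimensional optimization over the model scale \(M\), after choosing the precision \(b\) as a function of \(M\). Fix \(M\). The error \(\mathcal E(M,b)\asymp M^{-\chi}+\kappa(M)^22^{-2b}\) cannot go below order \(M^{-\chi}\). Increasing \(b\) beyond the matching point only raises the cost factor \(b^\nu\). So the optimal precision is, up to constants, the least \(b\) with \(\kappa(M)^22^{-2b}\lesssim M^{-\chi}\), namely
\[
b^*(M)=\log_2\kappa(M)+\tfrac{\chi}{2}\log_2M+O(1).
\]
A precision more than \(O(1)\) below this makes the rounding term dominate and inflates the error by a factor \(2^{2(b^*-b)}\). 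That loss can only be recovered by enlarging \(M\), which costs polynomially more. I will make this precise by showing that any feasible pair \((M,b)\) can be replaced by \((M',b^*(M'))\) with \(M'\le M\), comparable error, and no larger cost. If \(b<b^*(M)\), take \(M'\) so that \(M'^{-\chi}\asymp\kappa(M)^22^{-2b}\). Monotonicity of \(\kappa\) and the polynomial cost in \(M\) then show the cost does not increase.

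For item~1, \(1\lesssim\kappa\lesssim M^\rho\) gives \(b^*=\Theta(\log M)\). The cost at matched precision is then \(C\asymp M^{f+1+\chi}(\log M)^\nu\). Inverting gives \(M\asymp (C/(\log C)^\nu)^{1/(f+1+\chi)}\), using \(\log M\asymp\log C\). Substituting into \(M^{-\chi}\) yields \cref{eq:poly-conditioning}. The lower bound uses the reduction above: any configuration with error \(\varepsilon\) must have \(M\gtrsim\varepsilon^{-1/\chi}\) and \(b\ge b^*(M)-O(1)\gtrsim\log M\). The \(\log M\) lower bound holds once \(\chi>0\), even when \(\kappa\) is merely \(\gtrsim1\).

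For item~2, \(\log_2\kappa(M)=\Theta(M^\lambda)\) dominates \(\tfrac{\chi}{2}\log M\), so \(b^*=\Theta(M^\lambda)\). The cost becomes \(M^{f+1+\chi+\lambda\nu}\), and the exponent follows by the same inversion with matching converse.

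Item~3 is a separate bit-allocation computation. For adaptive per-mode precision, the upper bound is \cref{eq:universal-schedule} from \cref{thm:adaptive}, or directly \(\Phi_\mu(\tau_B)\) via \cref{cor:power-profile}. The lower bound \(B^{-\chi}\) comes from \(D_q(B)\) in \cref{thm:finite-bit}. For a uniform-width prefix of \(M\) modes at \(b\) bits each, the budget is \(Mb\le B\), and the error is \(\asymp M^{-\chi}+\sum_{j\le M}q_j2^{-2b}\asymp M^{-\chi}+2^{-2b}\), since \(\sum q_j\) is of order one. Balancing forces \(b\ge\tfrac{\chi}{2}\log_2M-O(1)\). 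Hence \(M\log M\lesssim B\), which gives \(M\lesssim B/\log B\) and error \(\gtrsim(B/\log B)^{-\chi}\). The choice \(b=\lceil\tfrac{\chi}{2}\log_2 M\rceil\) with \(M\asymp B/\log B\) attains this. For the converse, if \(b\) is smaller, then \(2^{-2b}\) alone already exceeds the target.

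The main obstacle is the converse in items~1–2. We must rule out every configuration, not just matched-precision ones, and the \((\log C)^\nu\) factor has to be tracked without losing it to the slowly varying slack. I would first try the explicit case split \(b\ge b^*(M)-c\) versus \(b<b^*(M)-c\), with the replacement argument in the second case. In item~2, this replacement needs \(\kappa\) to be eventually monotone up to constants, which I would take as part of the two-sided assumptions.
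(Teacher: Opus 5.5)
Your upper bounds are the paper's: matched precision \(b^*(M)\), inversion of \(C\asymp M^{f+1+\chi}(\log M)^\nu\), \(b=\Theta(M^\lambda)\) in item~2, and a uniform prefix with \(M\asymp B/\log B\). The converse is where you diverge, and as written it contains a false step and an unneeded hypothesis.

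\textbf{The false step.} It is not true that every configuration with error \(\varepsilon\) has \(b\ge b^*(M)-O(1)\). An oversized \(M\) lets \(b\) sit near \(\log_2\kappa(M)+\tfrac12\log_2(1/\varepsilon)\), far below \(b^*(M)\). Your lower bound therefore rests on the replacement lemma.

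\textbf{The unneeded hypothesis.} You justify the replacement lemma by monotonicity of \(\kappa\), which the statement does not provide. In item~1, \(\kappa\) may jump anywhere in \([1,M^\rho]\). In item~2, \(\exp(\Theta(M^\lambda))\) need not be monotone even up to constant factors, so adding that assumption proves a weaker item~2. The lemma survives without it. The definition of \(M'\) gives \(b=\log_2\kappa(M)+\tfrac\chi2\log_2M'+O(1)\), and this bounds \(b^*(M')\) by a constant multiple of \(b\) in both regimes. That is all the \(b^\nu\) factor needs.

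\textbf{The paper's direct converse.} The lemma is not needed at all. Error at most \(\varepsilon\) forces \(M\gtrsim\varepsilon^{-1/\chi}\) from the model term. From the rounding term it forces \(b\gtrsim\log(1/\varepsilon)\) in item~1, since \(\kappa\gtrsim1\) gives \(2^{-2b}\lesssim\kappa(M)^22^{-2b}\lesssim\varepsilon\). In item~2 it forces \(b\gtrsim\log\kappa(M)\gtrsim M^\lambda\). The cost is increasing in both \(M\) and \(b\), so \(C\gtrsim\varepsilon^{-(f+1+\chi)/\chi}[\log(1/\varepsilon)]^\nu\) in item~1 and \(C\gtrsim\varepsilon^{-(f+1+\chi+\lambda\nu)/\chi}\) in item~2. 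Inverting keeps the \((\log C)^\nu\) factor exactly. The ``main obstacle'' you anticipate does not arise.

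\textbf{Item~3.} Your converse has the same soft spot. The claim that for smaller \(b\) ``\(2^{-2b}\) alone already exceeds the target'' is not automatic. The condition \(b<\tfrac\chi2\log_2M-c\) only makes \(2^{-2b}\) exceed \(M^{-\chi}\), which lies far below \((B/\log B)^{-\chi}\) when \(M\) is large. You must invoke the budget: \(B\ge Mb>2^{2(b+c)/\chi}b\) does yield \(2^{-2b}\gtrsim(B/\log B)^{-\chi}\) after a short case check on the size of \(b\). The paper instead splits on \(M\). If \(M\le LB/\log_2B\), the tail already pays \(\gtrsim(B/\log B)^{-\chi}\). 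If \(M>LB/\log_2B\), then \(b\le B/M<\log_2B/L\), so \(2^{-2b}>B^{-2/L}\gg(B/\log B)^{-\chi}\) once \(L>2/\chi\).
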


The polynomial-conditioning exponent \(\rho\) in this theorem describes
\(\kappa(M)\), independently of the spectral tail index used in
Section~\ref{sec:revision-model}. Matching numerical error to \(M^{-\chi}\) requires
\begin{equation}
b\ge\log_2\kappa(M)+\frac{\chi}{2}\log_2M-O(1).
\label{eq:precision-requirement}
\end{equation}

\paragraph{Savings at a fixed shape.}
We now hold model dimensions fixed and change only the numerical formats.
This compares two implementations of the same graph. Retained high-precision
work produces an Amdahl-type ceiling: reducing one part's cost leaves the
other part unchanged. At fixed shape,
let \(G,S\) count matrix and softmax operations, \(\nu,\nu_{\rm sm}\)
their precision-cost exponents, \(Q\) the quantized scalar count, and
\(R_{\rm hp}\) the retained high-precision bits. Lowering the matrix and
stored-scalar precision from baseline \(b_0\) to \(b\) gives
\[
\operatorname{BitOpsRatio}
=\frac{G b_0^\nu+S b_{\rm sm}^{\nu_{\rm sm}}}
       {G b^\nu+S b_{\rm sm}^{\nu_{\rm sm}}},
\qquad
\operatorname{Compression}_{\rm bit}
=\frac{Qb_0+R_{\rm hp}}{Qb+R_{\rm hp}}.
\]
In this calculation, \(Q\) counts quantized scalars rather than query rows,
and \(S\) counts softmax operations rather than learned states. Ratios
larger than one denote reduced bit operations or storage.
Thus fixed-precision softmax, master weights, optimizer state, or saved
high-precision activations can dominate even when matrix precision falls.

Appendix~\ref{app:secondary-results} derives the dense forward--backward
count, storage requirements, and precision--risk trade-off.

%% file: arxiv/sections/revision_6_experiments.tex
\section{Experiments}
\label{sec:revision-experiments}

\paragraph{Experimental design.}
We test spectral risk, coupling, and attention learning. Training and held-out blocks are
independent, and comparisons are paired by seed or evaluation item.
The fixed-source synthetic studies evaluate excess cross-entropy (CE), the
loss above the source's Bayes predictor and use 95\% bias-corrected and accelerated (BCa) bootstrap
intervals over paired seeds. Complementary mini-batch and terminal-iterate
studies test robustness to the learning implementation. Appendices~\ref{app:experiment-details}
and~\ref{app:joint-resource-evaluation} specify the algorithms,
evaluation units, and uncertainty procedures.

\begin{figure}[htb]
\centering
\includegraphics[width=\linewidth]{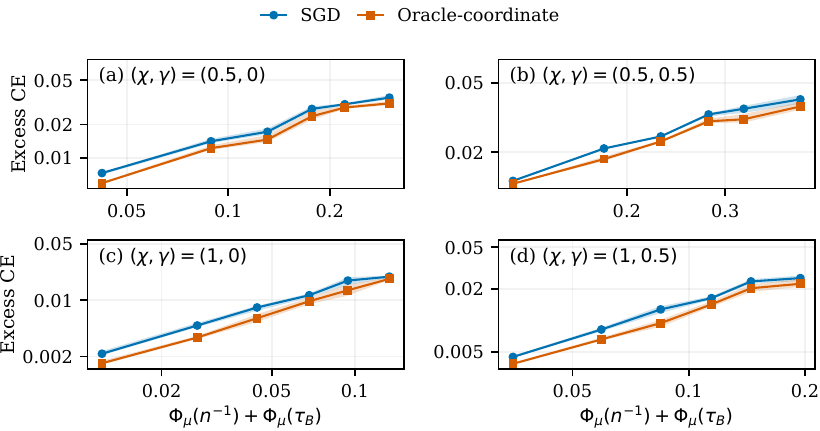}

\caption{\textbf{One spectrum organizes data and memory.}
Four fixed sources, each with 256 energy groups, at \(K=32\): held-out
excess CE versus
\(G_\mu(n,B)=\Phi_\mu(n^{-1})+\Phi_\mu(\tau_B)\).
Blue circles: weighted-average SGD; orange squares: the oracle-coordinate
clipped estimator. Each curve averages 30 budget pairs into six bins chosen
only from \(G_\mu\), within each seed. Bands: 95\% BCa over eight paired seeds.}
\label{fig:spectrum-collapse}

\end{figure}

\paragraph{Data--memory surface and exponent agreement.}
Does the same source statistic organize both resources in the stated learner?
We fix four sources, each with 256 energy groups rather than 256 training
examples, and cross five sample budgets with six value-state
budgets. Each sample budget restarts the theorem's weighted-average SGD in FP64;
all state budgets encode that same fitted vector. Full-source excess CE follows the
resource ordering of \(G_\mu\) for both SGD and the oracle-coordinate estimator
(\cref{fig:spectrum-collapse}). Unbinned risk-to-\(G_\mu\) ratios reveal the
remaining finite-resource variation (\cref{fig:joint-fixed-diagnostics}).
The six-profile exponent grid and candidate-count replication appear
in Appendix~\ref{app:extended-results}, alongside the spectral slopes
(\cref{tab:spectrum-slopes}).

\paragraph{Same-marginal coupling.}
Can identical block-energy and block-dimension marginals produce different
data laws as well as different memory laws? On the same normalized dyadic
source, the fitted easy/hard data exponents are \(0.509/0.339\) for \(\chi=1\)
and \(0.681/0.489\) for \(\chi=2\), against predictions \(1/2,1/3\)
and \(2/3,1/2\), respectively. Fits use the same last four sample budgets for
all 16 seeds. \Cref{fig:mechanisms}(a,d) shows every budget;
panels (b,e) and \cref{tab:coupling-main} retain the complementary storage
separation. Pairing changes both resource exponents, not just their constants.

\begin{figure}[htb]
\centering
\includegraphics[width=\linewidth]{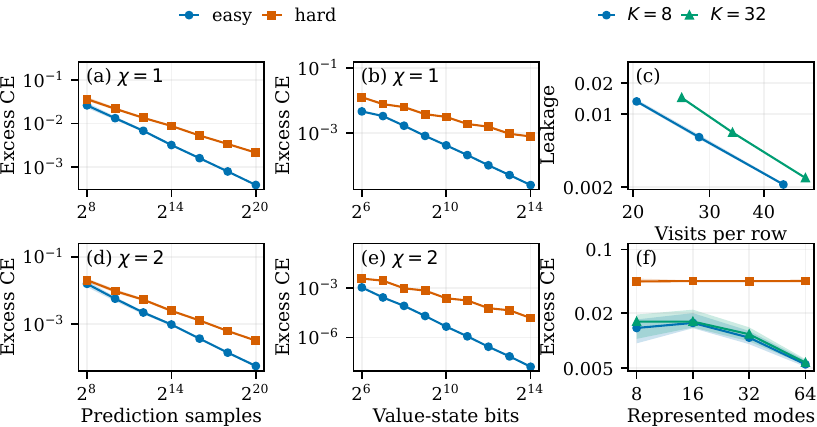}

\caption{\textbf{Pairing separates data and memory; routing enables learning.}
(a,d) Fixed-source data sweeps for \(\chi=1,2\); blue/orange denote easy/hard.
(b,e) Their complementary storage sweeps, evaluated by population excess CE.
(c) Squared leakage versus row visits, \(K=8/32\) (blue/green).
(f) Finite-source excess CE with learned/known/frozen routes
(blue circles/green triangles/orange squares), without an added asymptotic tail.
Intervals: 95\% BCa over 16 data seeds (a,d), 16 source amplitudes (b,e),
or eight paired seeds (c,f).}
\label{fig:mechanisms}

\end{figure}

\paragraph{Learned routing and interventions.}
Routing experiments train routing and output parameters through the actual network forward
map. Its 256 combinations of source profile, represented prefix, candidate
count, and seed separately
restart each learner at three nominal sample budgets and reuse fitted states
across bit budgets.
Routing leakage decreases with actual visits, and learned routing approaches
the known-route learner while a frozen uniform route does not
(\cref{fig:mechanisms}(c,f)). \Cref{tab:decomposition-main,tab:interventions-main} separates
data, tail, routing, and format terms and changes routing, preconditioning,
or precision one factor at a time. At the unit-bit-ratio setting in
Appendix~\cref{tab:revision-unified}, learned and known routing yield
\(0.0191\pm0.0002\) and \(0.0192\pm0.0002\) augmented loss, versus
\(0.0605\pm0.0008\) with a frozen route; these augmented losses
include the common \(M^{-\chi}\) proxy. Removing preconditioning gives
\(0.0230\pm0.0003\). The half-bit-ratio table reveals the uniform-precision
penalty at a tighter memory budget, complementing this unit-bit-ratio comparison.

\begin{figure}[htb]
\centering
\includegraphics[width=\linewidth]{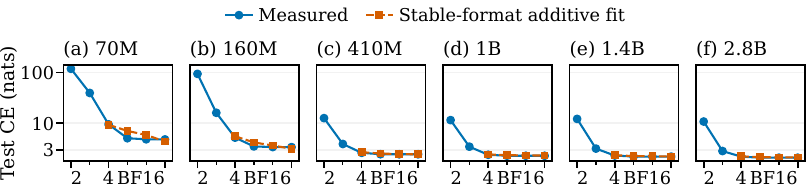}

\caption{\textbf{Data and deployed storage in Pythia.}
(a--f) 70M, 160M, 410M, 1B, 1.4B, 2.8B test CE after \(2^{26}\) extra tokens.
Blue: 2/3/4/6/8-bit and BF16 (left to right); dashed orange: additive fits to
first-three-budget 4/8-bit/BF16 validation. All points hold out data; 6-bit
also holds out format. Shared log-y; bands: 95\% BCa, eight paired training seeds.}
\label{fig:validation-summary}
\label{fig:revision-unified}

\end{figure}

\paragraph{Stored formats and softmax precision.}
We compare a shared quantization scale, scales per energy group or per
coordinate, and adaptive bit widths. Three conditioning regimes vary the
format-range amplification relative to logit radii. A softmax sweep records
numerical format, underflow, and matching
mass at \(K\in\{8,32,128,512\}\). At eight bits, adaptive widths reduce format
distortion to \(7.82\times10^{-10}\), compared with
\(8.46\times10^{-6}\) for uniform block widths and \(1.08\times10^{-5}\)
for a shared scale. Appendix~\cref{fig:colt-precision-bridge} shows the
format trade-off and retained high-precision work: bit allocation controls
distortion, while the fraction quantized determines resource savings.

\paragraph{Profile robustness.}
We test broken-power-law, logarithmically corrected, and perturbed-order
profiles using each source's exact finite spectrum. The empirical allocator uses fitted logits and counts to estimate energies,
subtracting a sampling-noise correction. Known-energy allocation instead
minimizes representation distortion using the true energies; uniform widths
provide a second reference. Comparisons use eight paired seeds. Appendix~\ref{app:joint-resource-evaluation}
gives both objectives, distinct from the deterministic unknown-exponent
schedule of \cref{thm:adaptive}. It retains the advantage over uniform
allocation across the three displayed deviations
(Appendix~\cref{fig:revision-robustness}).

\input{generated/table_main_four_panels}

\paragraph{Pretrained Transformers.}
Weight-only post-training quantization (PTQ) tests the deployment effect
across six Pythia checkpoints at \texttt{step143000}. We compare tensor/channel
8-bit integer formats (INT8), 4-bit integer (INT4) round-to-nearest (RTN),
and half-quadratic quantization (HQQ) with groups of 128 or 32 weights (g128/g32)
against bfloat16 (BF16), using paired WikiText-103 windows and LAMBADA items.
The HQQ-g32 penalty generally decreases with scale, except at 1.4B;
the 2.8B LAMBADA interval includes zero (\cref{tab:pythia-main}).
Appendix~\cref{fig:revision-pythia,tab:revision-pythia} retain all formats
and BF16 baselines. Softmax remains in 32-bit floating point (FP32), so the
comparison isolates weight representation.

We additionally continue all six checkpoints along four nested
WikiText-103 token budgets, using eight training seeds per model, and
materialize group-128 2/3/4/6/8-bit and BF16 deployment packages.
The complete scan separates model-scale and format effects
(\cref{fig:validation-summary}(a--f)): 2/3-bit quantization causes the
largest loss increases at 70M/160M, while the four larger models have
substantially smaller penalties. Including 2-bit calibration yields fits
within the specified \([0,20]\)-nat prediction range at 410M--2.8B.
At 70M/160M, the 2-bit losses exceed this range; the fitting procedure
does not yield converged, in-range solutions. Each scale is fitted separately.

The sensitivity analysis fits 4/8-bit and BF16 validation configurations and
holds out 6-bit plus the largest token budget. On untouched test windows,
the additive response improves data-extrapolation root-mean-square error
(RMSE) relative to the data-only fit at every scale. The reductions are
\(0.910\,[0.871,0.944]\) for 70M and \(0.380\,[0.352,0.394]\) for 160M;
the four larger models show smaller absolute improvements. For 6-bit
interpolation, the additive response has higher RMSE at 70M/160M and lower
RMSE at 410M--2.8B. Thus model scale changes both quantization sensitivity
and the predictive value of a smooth joint-resource response.
Appendix~\cref{fig:joint-pythia-full} shows every format and budget with
seed-level uncertainty; \cref{tab:pythia-higher-bit-extrapolation,tab:pythia-higher-bit-interpolation}
report the corresponding prediction errors for each model.

\FloatBarrier

%% file: generated/table_main_four_panels.tex
\begin{table}[htb]
\centering
\caption{\textbf{Deployment, pairing, and learning mechanisms.}
(a) HQQ-g32 minus BF16 CE: 96 WikiText-103 windows, 512 LAMBADA items.
(b) Storage slopes, 16 seeds; E/H: easy/hard. (c,d) SGD decomposition/interventions:
\(10^{-3}\) nats, eight seeds; (d) has \((\chi,\gamma)=(1,.5)\), \(M=64\),
\(K=32\), bit ratio .5 (Table~\ref{tab:revision-unified}: 1).
\(\pm\): rounded 95\% BCa envelopes (Unif.: constant across seeds).
Bold/underline: lowest/second-lowest in (a,d).}
\label{tab:main-bridges}
\IclrPreparePair
  {{\small\setlength{\tabcolsep}{8pt}\input{generated/revision_pythia_delta_compact}}}
  {{\small\setlength{\tabcolsep}{8pt}\input{generated/revision_coupling_main}}}
\typeout{ICLR9PAIR main-top A=\the\wd\IclrPairA B=\the\wd\IclrPairB widths=\the\IclrPairWidthA,\the\IclrPairWidthB gap=\the\IclrPairGap}
\begin{subtable}[t]{\IclrPairWidthA}
\centering\caption{Pythia weight-only PTQ; lower is better.}
\label{tab:pythia-main}
\resizebox{\IclrPairWidthA}{!}{\usebox{\IclrPairA}}
\end{subtable}\hspace{\IclrPairGap}%
\begin{subtable}[t]{\IclrPairWidthB}
\centering\caption{Same-marginal coupling.}
\label{tab:coupling-main}
\resizebox{\IclrPairWidthB}{!}{\usebox{\IclrPairB}}
\end{subtable}

\IclrPreparePair
  {{\small\setlength{\tabcolsep}{2pt}\input{generated/revision_decomposition_main}}}
  {{\small\setlength{\tabcolsep}{2pt}\input{generated/revision_interventions_main}}}
\typeout{ICLR9PAIR main-bottom A=\the\wd\IclrPairA B=\the\wd\IclrPairB widths=\the\IclrPairWidthA,\the\IclrPairWidthB gap=\the\IclrPairGap}
\begin{subtable}[t]{\IclrPairWidthA}
\centering\caption{Signed full-source excess-CE terms.}
\label{tab:decomposition-main}
\resizebox{\IclrPairWidthA}{!}{\usebox{\IclrPairA}}
\end{subtable}\hspace{\IclrPairGap}%
\begin{subtable}[t]{\IclrPairWidthB}
\centering\caption{Interventions; excess CE, lower is better.}
\label{tab:interventions-main}
\resizebox{\IclrPairWidthB}{!}{\usebox{\IclrPairB}}
\end{subtable}
\vspace{-0.30em}
\end{table}

%% file: generated/revision_pythia_delta_compact.tex
\begin{tabular}{lcc}
\toprule
Model & WikiText & LAMBADA \\
\midrule
70M & 3.009 {\footnotesize $\pm 0.037$} & 3.012 {\footnotesize $\pm 0.386$} \\
160M & 1.325 {\footnotesize $\pm 0.020$} & 0.589 {\footnotesize $\pm 0.225$} \\
410M & 0.156 {\footnotesize $\pm 0.003$} & 0.319 {\footnotesize $\pm 0.082$} \\
1B & \underline{0.054} {\footnotesize $\pm 0.002$} & \underline{0.021} {\footnotesize $\pm 0.028$} \\
1.4B & 0.066 {\footnotesize $\pm 0.003$} & 0.149 {\footnotesize $\pm 0.045$} \\
2.8B & \textbf{0.048} {\footnotesize $\pm 0.003$} & \textbf{0.007} {\footnotesize $\pm 0.034$} \\
\bottomrule
\end{tabular}

%% file: generated/revision_coupling_main.tex
\begin{tabular}{lcc}
\toprule
$(\chi,\mathrm{E/H})$ & Theory & Held-out CE \\
\midrule
$(1,E)$ & 1.000 & 0.972 {\footnotesize $\pm 0.0016$} \\
$(1,H)$ & 0.500 & 0.503 {\footnotesize $\pm 0.0003$} \\
$(2,E)$ & 2.000 & 2.016 {\footnotesize $\pm 0.0011$} \\
$(2,H)$ & 1.000 & 1.002 {\footnotesize $\pm 0.0002$} \\
\bottomrule
\end{tabular}

%% file: generated/revision_decomposition_main.tex
\begin{tabular}{@{}cccc@{}}
\toprule
Data & Tail & Route & Format\\
\midrule
3.584 {\scriptsize $\pm .219$} &
1.218 {\scriptsize $\pm .029$} &
.0270 {\scriptsize $\pm .0031$} &
1.491 {\scriptsize $\pm .275$}\\
\bottomrule
\end{tabular}

%% file: generated/revision_interventions_main.tex
\begin{tabular}{@{}ccccc@{}}
\toprule
Full & Known & No pre. & Frozen & Unif.\\
\midrule
\textbf{19.01} {\scriptsize $\pm .27$} &
\underline{19.06} {\scriptsize $\pm .26$} &
22.47 {\scriptsize $\pm .21$} &
60.50 {\scriptsize $\pm .74$} &
89.65 {\scriptsize $\pm .00$}\\
\bottomrule
\end{tabular}

%% file: arxiv/sections/revision_7_conclusion.tex
\section{Conclusion}
\label{sec:conclusion}

Finite data and finite learned memory resolve the same predictive-energy
spectrum at different thresholds. For the causal source studied here, this
gives minimax risk, scaling exponents, and a data--memory crossover.
The complete curve identifies the positive spectrum, retaining information
that a fitted exponent alone loses.

Energy--dimension pairing determines the spectrum: identical block
marginals can conceal different data and memory laws. A masked query--key
head learns the relevant route and values, connecting this statistical
structure to attention. Adaptive allocation, finite-precision realization,
and conditional compute laws quantify how the representation uses resources.

Experiments trace these mechanisms through spectral collapse, coupling,
routing, and numerical precision. The organizing principle is the
distribution of predictive information across independent coordinates:
it explains both what data can reveal and what a finite predictor can retain.

%% file: arxiv/sections/appendix_body.tex
\input{sections/revision_8_appendix_discussion_iclr9}
\input{sections/appendix_secondary_results}

\input{sections/revision_9_proofs}
\input{sections/2_prelim}
\input{sections/3_finite_bit}
\input{sections/5_descriptor_iclr9}
\input{sections/9_appendix}
\input{arxiv/sections/appendix_extended_results}
\input{arxiv/sections/10_experiment_appendix}
\input{sections/joint_resource_methods}
\input{sections/joint_resource_proofs}

%% file: sections/revision_8_appendix_discussion_iclr9.tex
\section{Discussion, Scope, and Closest Precedents}
\label{app:discussion}

\paragraph{One curve for two resources.}
The central object is the spectrum of predictive energy per independent
coordinate.  Its integrated tail answers both what data cannot resolve and
what a finite learned state cannot retain.  Data evaluate it at resolution
\(1/n\); storage chooses a water level.  The same-marginal construction
shows why the energy--dimension pairing is indispensable.

\paragraph{Knowledge capacity as an inverse resource curve.}
To express the information relation, place a fixed prior on the source
parameters \(Z\), including the route when it is unknown. Let \(S\) be
the learned state formed from training blocks, and let \(X\) contain all
candidates and the query of an independent held-out block. Conditional on
\((X,Z)\), the target \(Y\) is independent of the training state. Entropy
and mutual information, denoted by \(H\) and \(I\), are measured in nats.
The conditional mutual-information identity and the finite-state constraint give
\[
H(Y\mid X,S)-H(Y\mid X,Z)=I(Y;Z\mid X,S),
\qquad I(Z;S)\le H(S)\le B\log 2.
\]
The first difference compares the Bayes predictive losses given the learned
state and given the source parameters; a particular decoder can incur
additional loss. The prior is used for this information comparison, whereas
the main minimax theorem takes a worst case over fixed parameters.
This identity places knowledge-capacity and prediction-risk laws on opposite
directions of the same information path. Allen-Zhu and Li
\citep{allenzhu2025capacity} turn factual generation loss into a lower bound
on the information retained by model weights---a loss-to-bits question. Our
minimax law turns \(n\) observations and \(B\) retained bits into the smallest
held-out excess loss---a bits-to-risk question. Deterministic factual recall
and positive-entropy prediction yield complementary geometries: the former
prices discrete facts, while local KL curvature in the latter produces the
coordinate distortion \(q_i2^{-2b_i}\), reverse water filling, and the common
spectrum. Frequency-ranked knowledge also yields data and model scaling under
a compression model \citep{pan2025compression}; here the paired predictive
energies determine the complete finite-\((n,B)\) curve, and the same-marginal
construction isolates the pairing information required for its exponents.
Empirical capacity estimates \citep{allenzhu2025capacity,morris2025memorize}
calibrate how efficiently a particular architecture stores information per
parameter; the spectrum supplies the source-dependent risk curve to which
such an architectural calibration can be attached. Sparse attention, prefix
learning, and context compression
\citep{liang2025prefix,deng2024sparse,xiong2025parallelcomp,yang2026compressible}
then connect the abstract state budget to the practical question of which
predictive coordinates a Transformer should retain.

\paragraph{Learning the routing and value parameters.}
Public addresses determine where comparisons occur, unknown routing is
learned in the query projection, and unknown values are learned in the output
coordinates.  Fixing the key projection preserves the route-table scores and
routing-gradient trajectory. This identifies the trainable parameters
through which self-attention acquires the source's predictive structure.

\paragraph{A hierarchy of resource laws.}
The arbitrary-state lower bound applies to every learner with at most
\(2^B\) final value states.  The routing lower bound belongs to the stated
row-symmetric online trajectory; the format lower bound applies to the
required stored interface; the bit-operation lower bound concerns the
specified dense graph and arithmetic model. Each result identifies the cost
of a distinct bottleneck and makes the progression from statistical
information to numerical computation explicit.

\paragraph{From learning to finite-precision prediction.}
The two-stage learning theorem trains routing and output coordinates using
public coordinate frequencies, supervised calibration, and exact-arithmetic
updates. The finite-precision theorem starts from supplied mode logits and
quantifies their realization by the network; its converse concerns the
separable scalar-table representation. The compute theorem studies
precision allocation under a two-sided numerical response assumption.
Together, these results form a three-step account of acquiring parameters,
representing them numerically, and allocating arithmetic resources. A unified
low-precision training theorem would complete this chain at the level of the
optimization trajectory.

\paragraph{Precision in practical Transformer stacks.}
The benefit of a low-bit matrix path depends on the fraction of storage and
arithmetic it changes. FP32 softmax, accumulation, master weights, optimizer
states, and saved activations contribute retained high-precision work.
Streaming attention reduces retained state while preserving the dense
arithmetic order. Device-level speedups are governed by the corresponding
throughput and memory-traffic costs.

\paragraph{Open directions.}
Three extensions would carry the spectrum principle into broader sequence
models: jointly learning frequencies, ordering, queries, and keys; analyzing
low-precision stochastic updates along the routing and value trajectories;
and estimating predictive-energy spectra from unrestricted corpora. These
directions turn the present exact source model into a program for predicting
resource tradeoffs in learned representations.

%% file: sections/appendix_secondary_results.tex
\section{Extended Representation and Compute Results}
\label{app:secondary-results}

This appendix gives the complete statements and derivations summarized in
Sections~\ref{sec:revision-branches} and~\ref{sec:revision-compute}.

\input{sections/appendix_representation_details_iclr9}
\input{sections/7_bit_complexity}

%% file: sections/appendix_representation_details_iclr9.tex
\subsection{Adaptive Schedule and Information Used}

The schedule in \cref{eq:universal-schedule} knows the canonical index order
and the interval \([\chi_{\min},\chi_{\max}]\), but not \(\chi\).
It adapts to the decay rate while treating energy ordering as side information.
Data-driven allocation with an unknown ordering is related to adaptive
quantized Sobolev estimation \citep{zhu2018sobolev}; learning that ordering
together with the bit schedule is a separate statistical problem.
The full proof and restatement are in Appendix~\ref{app:adaptive}.

\subsection{Finite-Precision Forward Error and Storage}

The matching attention weight in \cref{eq:attention-map} is
\(\alpha_\star=e^\beta/(e^\beta+K-1)\).  Since the output coordinates lie
in \([-R,R]\),
\(\abs{z-\vartheta_{j_\star}}\le2R(1-\alpha_\star)
\le2R(K-1)e^{-\beta}\).
Uniformly quantizing \([-\Delta_j,\Delta_j]\) with \(2^{b_j}\) values gives
\begin{equation}
\abs{\widetilde w_j^{\rm mode}-w_j^{\rm mode}}\le2\Delta_j2^{-b_j}.
\label{eq:scalar-quantization}
\end{equation}
For arithmetic error, let \(u\) be the unit roundoff of the scalar
operations. Max shifting produces unnormalized weights in
\(\{1,e^{-\beta}\}\). Balanced pairwise summation has depth
\(h=\ceil{\log_2K}\); put
\(\gamma_h=hu/(1-hu)\).  Positive pairwise summation incurs relative
error at most \(\gamma_h\).  Composing exponential, denominator, and division
errors gives, when \((h+4)u\le1/4\),
\begin{equation}
\norm{\widehat\alpha-\alpha}_1\le8(h+4)u.
\label{eq:softmax-forward-error}
\end{equation}
The final weighted sum can have mixed signs, so an absolute bound is needed:
\begin{equation}
\abs{\operatorname{fl}(\widetilde w^\top\widehat\alpha)
-\widetilde w^\top\alpha}\le C_Ru(1+\log K).
\label{eq:output-forward-error}
\end{equation}
The absolute bound also handles cancellation. Combining routing,
quantization, and arithmetic errors, squaring and averaging over the mode,
and applying bounded log-loss curvature yields \cref{eq:fp-bound}, including
the unrepresented tail.  The detailed proof and restatement of
\cref{thm:transformer} are in Appendix~\ref{app:transformer}.

\paragraph{Learned state versus physical storage.}
Here \(d=K+M+1\) is the token width of the particular realization in
\cref{sec:transformer}, with \(M\) retained scalar output coordinates.
If the score matrix, value projection, and output head are dense arrays,
their physical scalar-slot count is
\begin{equation}
N=2d^2+d.
\label{eq:dense-storage}
\end{equation}
Only the \(M\) output coordinates depend on the source, and their description
length is \(\sum_{j=1}^M b_j\).  Thus \(N\), \(M\), and \(B\) are
different resources even in this fixed-depth, one-head realization with
public orthogonal addresses.  The finite-precision theorem initializes those
coordinates with supplied logits; \cref{thm:causal-realization} separately
analyzes gradient training of routing and output values.

%% file: sections/7_bit_complexity.tex
\subsection{Forward--Backward Bit Complexity}
\label{sec:bit-complexity}

\label{sec:fb-count}

Consider eager, non-checkpointed reverse-mode differentiation (standard
backpropagation) through a dense
Transformer with sequence length \(\ell\), hidden width
\(d\), feed-forward width \(m_{\rm ff}\), \(H\) attention heads whose widths
sum to \(d\), and \(L\) layers.  Matrix multiplications use a format indexed
by \(b_{\rm mm}\), the stable
max-shifted softmax uses compute precision \(b_{\rm sm}\), and a materialized
attention probability uses \(b_{\rm st}\) storage bits.  The index
\(b_{\rm mm}\) determines all operand, output, and accumulator widths in this
format family.  Let
\(\mathsf M(b_{\rm mm})\) be the bit-operation cost of one such
multiply--accumulate.  Let \(\mathsf S(b_{\rm sm})\) be the amortized
per-attention-entry cost of stable softmax and its reverse-mode
Jacobian--vector product, including exponentials, reductions, division, and
pointwise arithmetic.  We assume normalization, activation, and cast costs per
entry do not asymptotically exceed these matrix and softmax costs.  We
do not count the optimizer update.  This conventional query/key/value/output
stack is a separate computational graph from the smaller realization in
\cref{sec:transformer}.  Let \(b_w\) and \(b_a\) denote aggregate bits per
stored weight and activation slot at peak memory, and let
\(\operatorname{Mem}_{\rm hp}\) denote retained high-precision state.

\begin{restatable}[Transformer forward--backward bit complexity; informal version of \cref{full:lem:bit-complexity}]{lemma}{LemBitComplexity}
\label{lem:bit-complexity}
For the graph above, with positive dimensions and conventional dense matrix
multiplication,
\begin{equation}
  \operatorname{BitOps}_{\rm fb}
  \asymp L\Bigl[
    \bigl(\ell d^2+\ell d m_{\rm ff}+\ell^2d\bigr)
      \mathsf M(b_{\rm mm})
    +H\ell^2\mathsf S(b_{\rm sm})
  \Bigr].
  \label{eq:fb-bitops}
\end{equation}
\begin{equation}
  \operatorname{Mem}_{\rm fb}
  \asymp
    b_wL(d^2+dm_{\rm ff})
    +b_aL\ell(d+m_{\rm ff})
    +b_{\rm st}LH\ell^2
    +\operatorname{Mem}_{\rm hp}.
  \label{eq:fb-memory}
\end{equation}
The memory \(\operatorname{Mem}_{\rm fb}\) is measured in bits.  The values
\(b_w\), \(b_a\), and \(b_{\rm st}\) count aggregate bits per logical weight,
activation, and saved-attention slot at peak memory; they include any
same-shaped gradient or backward-workspace buffer materialized by this eager
implementation.  The term
\(\operatorname{Mem}_{\rm hp}\) contains master weights, optimizer state, and
all other explicitly retained high-precision data.  A streaming, recomputing
attention implementation replaces the quadratic attention state saved for
backward by linear online-softmax state plus temporary tile workspace, while
leaving the dense attention arithmetic order unchanged.
\end{restatable}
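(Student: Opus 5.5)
The plan is to prove both displays by direct operation counting on the eager computational graph. Fix one layer and enumerate its nodes: the QKV and output projections, the two feed-forward matrices, the score product \(QK^\top\) per head, the attention--value product per head, the softmax per head, and the cheap elementwise nodes (normalization, activation, residual adds, casts). For each matrix node I count multiply--accumulates in the forward pass. The projections give \(\Theta(\ell d^2)\), the feed-forward gives \(\Theta(\ell d m_{\rm ff})\), and the two attention products give \(\sum_h\Theta(\ell^2 d_h)=\Theta(\ell^2 d)\), using that the head widths sum to \(d\). Reverse mode of a product \(Y=XW\) produces \(\partial X=\partial Y\,W^\top\) and \(\partial W=X^\top\partial Y\). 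Each has the same MAC count as the forward product, so the backward pass multiplies every matrix count by a constant between \(2\) and \(3\). Multiplying by \(\mathsf M(b_{\rm mm})\), which is the same for every such MAC because \(b_{\rm mm}\) fixes all operand and accumulator widths, gives the first bracket.

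For softmax, the forward and Jacobian--vector product costs are, by definition of \(\mathsf S(b_{\rm sm})\), amortized per attention entry. There are \(H\ell^2\) entries per layer (\(\ell^2/2\) with a causal mask, which is still \(\Theta(\ell^2)\)), giving the \(H\ell^2\mathsf S(b_{\rm sm})\) term. The stated assumption absorbs the elementwise nodes. They touch \(O(\ell d+\ell m_{\rm ff}+H\ell^2)\) entries per layer, each costing at most the matrix or softmax cost. Since \(\ell d\le\ell d^2\) and \(\ell m_{\rm ff}\le \ell d m_{\rm ff}\), they are dominated. Summing over \(L\) identical layers gives the upper bound. The lower bound is immediate, because the forward matrix products and softmax evaluations alone already incur each displayed term with constant \(1\), and the assumed costs are two-sided per operation.

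For memory, I do a peak-liveness analysis of eager backpropagation. Weights and their same-shaped gradients occupy \(\Theta(L(d^2+dm_{\rm ff}))\) slots, and \(b_w\) aggregates both. Saved activations needed for the backward pass are the layer inputs, projections, and feed-forward hidden states, which give \(\Theta(L\ell(d+m_{\rm ff}))\) slots at \(b_a\) bits. The materialized attention probabilities for all heads give \(\Theta(LH\ell^2)\) slots at \(b_{\rm st}\) bits. Everything else is \(\operatorname{Mem}_{\rm hp}\) by definition. The upper bound holds because at the start of the backward pass every saved tensor of every layer is live, and workspace buffers are same-shaped and hence absorbed into the aggregate widths. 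The same observation, that all saved tensors are live at the start of backward, gives the lower bound at that instant.

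For the streaming remark, I would note that online softmax keeps per-row running maxima and normalizers, which is \(O(H\ell)\) state. The backward pass recomputes the score tiles, performing the same dense products, which at most doubles the constant in the matrix term. The main obstacle is bookkeeping rather than mathematics. I have to make sure that the \(\ell^2 d\) attention products are charged to \(\mathsf M\) and not to \(\mathsf S\). I also have to justify that the two-sided constants do not depend on \(\ell,d,m_{\rm ff},H,L\); this is where the positive-dimension hypothesis and \(H\le d\) enter, since \(H\ell^2\) is not dominated by \(\ell^2 d\) after weighting by \(\mathsf S\) versus \(\mathsf M\), so both terms must be kept.
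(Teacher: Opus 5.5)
Your proposal is correct and follows essentially the same route as the paper. You count forward multiply--accumulates per layer, note that reverse mode of $Y=AB$ adds two products of the same size, and charge softmax per entry through the definition of $\mathsf S(b_{\rm sm})$; you treat normalization, activation and casts as lower order, count weight, activation and attention slots for memory, and handle streaming with online-softmax statistics. One small slip: your sentence attributing the memory upper bound to ``every saved tensor is live at the start of backward'' is really the lower-bound argument, and the upper bound instead comes from every live buffer being same-shaped with a counted slot or lying in $\operatorname{Mem}_{\rm hp}$, which you also state.
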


The three matrix terms in \cref{eq:fb-bitops} come respectively from the
input/output projections, the two feed-forward projections, and the
score--value attention products.  Reverse mode performs two matrix products
for each forward matrix product, so it changes constants rather than orders.
For a softmax row \(p\) and incoming vector \(v\), its derivative is applied
as \((\operatorname{diag}(p)-pp^\top)v=
p\odot(v-\inner{p}{v}\mathbf 1)\).
Thus the Jacobian--vector product costs \(\Theta(H\ell^2)\), without forming
an \(\ell\times\ell\) Jacobian for every row.  The proof and a shape-by-shape
operation count are in \cref{app:bit-complexity}.

\subsection{The Compute--Precision Pareto Law}
\label{sec:precision-pareto}

Let \(M\) denote the number of represented modes.  Under a specified shape
schedule \((\ell,d,m_{\rm ff},H,L)(M)\), \cref{lem:bit-complexity} gives
\(G(M)\asymp L(\ell d^2+\ell d m_{\rm ff}+\ell^2d)\) and
\(S(M)\asymp LH\ell^2\).
We write \(G(M)\asymp M^g\) for quantizable matrix work and
\(S(M)\asymp M^s\) for softmax work, where \(g,s\ge0\).  Reaching model
error \(M^{-\chi}\) requires \(n(M)\asymp M^{1+\chi}\) independent blocks.
Let \(\mathcal E\) denote held-out excess risk, and let
\(\kappa_{\rm mm}\) and \(\kappa_{\rm sm}\) measure how matrix and softmax
rounding errors are amplified.  We assume the following two-sided error and
cost relations, which separate effective matrix precision \(b\) from effective
softmax precision \(b_{\rm sm}\):
\begin{equation}
  \mathcal E(M,b,b_{\rm sm})
  \asymp
  M^{-\chi}
  +\kappa_{\rm mm}(M)^22^{-2b}
  +\kappa_{\rm sm}(M)^22^{-2b_{\rm sm}}.
  \label{eq:mixed-risk-relations}
\end{equation}
\begin{equation}
  C
  \asymp
  M^{1+\chi}\left[
    G(M)b^\nu+S(M)b_{\rm sm}^{\nu_{\rm sm}}
  \right].
  \label{eq:mixed-cost-relations}
\end{equation}
The exponents \(\nu,\nu_{\rm sm}\ge0\) belong to the stated arithmetic
model.  The abstract format family has total width proportional to its
effective precision, equivalently unit roundoff \(\Theta(2^{-b})\); concrete
floating-point formats are discussed separately below.

\begin{restatable}[Compute--precision Pareto law; informal version of \cref{full:thm:precision-tradeoff}]{theorem}{ThmPrecisionTradeoff}
\label{thm:precision-tradeoff}
Assume \cref{eq:mixed-risk-relations,eq:mixed-cost-relations},
\(\kappa_{\rm mm}(M)\asymp M^r\), and
\(\kappa_{\rm sm}(M)\asymp M^{r_{\rm sm}}\), with all displayed relations
two-sided and \(r,r_{\rm sm}\ge0\).  For the specified dense graph,
\(H\le d\) implies \(G\gtrsim S\) and hence \(g\ge s\).
Item 3 concerns general operation counts satisfying the same two-sided
relations; items 1, 2, and 4 apply to both settings.
\begin{enumerate}
  \item The smallest matrix precision that keeps matrix rounding at the model
  error is
  \begin{equation}
    b_{\rm match}(M)
    =\left(r+\frac{\chi}{2}\right)\log_2M+O(1).
    \label{eq:matching-bits}
  \end{equation}

  \item If the matrix term dominates the bracket in
  \cref{eq:mixed-cost-relations} at matched precisions---in particular in the
  case \(g\ge s\) when the remaining precision factors do not reverse
  that dominance---then
  \begin{equation}
    \inf_{M,b,b_{\rm sm}:\,\mathrm{cost}\le C}\mathcal E
    \asymp
    C^{-\chi/(1+\chi+g)}
    (\log C)^{\chi\nu/(1+\chi+g)},
    \label{eq:matrix-dominated-phase}
  \end{equation}
  up to other declared slowly varying factors.

  \item For general operation counts, if \(s>g\), \(b_{\rm sm}\) is fixed,
  and the softmax term dominates
  computation, then for budgets in the pre-floor range
  \[
    C^{1/(1+\chi+s)}
    \lesssim 2^{2b_{\rm sm}/(\chi+2r_{\rm sm})},
  \]
  \begin{equation}
    \inf_{M,b:\,\mathrm{cost}\le C}\mathcal E
    \asymp C^{-\chi/(1+\chi+s)}.
    \label{eq:softmax-dominated-phase}
  \end{equation}
  Reducing matrix precision cannot improve this leading exponent.

  \item At fixed matrix precision \(b\), with softmax precision optimized,
  the model--rounding crossover and the infimum error over model scales are
  \begin{equation}
    M_b\asymp2^{2b/(\chi+2r)},
    \qquad
    \mathcal E_{\rm floor}(b)
    \asymp2^{-2\chi b/(\chi+2r)}.
    \label{eq:fixed-bit-floor}
  \end{equation}
  A fixed \(b_{\rm sm}\)-bit softmax similarly supports the scaling phase only
  while
  \begin{equation}
    M\lesssim2^{2b_{\rm sm}/(\chi+2r_{\rm sm})}.
    \label{eq:softmax-horizon}
  \end{equation}
\end{enumerate}
The displayed two-sided risk and arithmetic assumptions define the class over
which the infima and matching bounds are taken.
\end{restatable}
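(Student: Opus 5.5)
The plan treats \cref{thm:precision-tradeoff} as an optimization problem over the two-sided surrogates \cref{eq:mixed-risk-relations} and \cref{eq:mixed-cost-relations}. Since every relation is two-sided, the infimum is computed up to constants by optimizing the surrogate error subject to the surrogate cost.

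\textbf{The claim \(G\gtrsim S\).} This comes first and is immediate from \cref{lem:bit-complexity}. For \(H\le d\), \(LH\ell^2\le L\ell^2 d\le G(M)\), so \(s\le g\).

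\textbf{Item 1.} Require \(M^{2r}2^{-2b}\lesssim M^{-\chi}\). This is equivalent to \(2b\ge(2r+\chi)\log_2M-O(1)\). Conversely, any smaller \(b\) by a growing amount makes the rounding term dominate. This gives \cref{eq:matching-bits}, and the same computation gives \(b_{\rm sm,match}=(r_{\rm sm}+\chi/2)\log_2M+O(1)\).

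\textbf{Item 2.} For the upper bound, choose \(b=b_{\rm match}(M)\) and \(b_{\rm sm}=b_{\rm sm,match}(M)\). Then \(\mathcal E\asymp M^{-\chi}\). Under the dominance hypothesis, cost is \(C\asymp M^{1+\chi+g}(\log M)^\nu\). Inverting gives \(M\asymp (C/(\log C)^\nu)^{1/(1+\chi+g)}\), and hence \cref{eq:matrix-dominated-phase}.

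For the lower bound, take any feasible \((M,b,b_{\rm sm})\). If \(b\le b_{\rm match}(M)-t\), then \(\mathcal E\gtrsim 2^{2t}M^{-\chi}\). I would split into two cases.
\begin{itemize}
\item If \(b\gtrsim \log M\), cost forces \(M^{1+\chi+g}(\log M)^\nu\lesssim C\), and the bound follows as above.
\item If \(b\) is much smaller than \(\log M\), then \(\mathcal E\ge M^{2r}2^{-2b}\). The cost still gives \(M^{1+\chi+g}\lesssim C\), so \(M\lesssim C^{1/(1+\chi+g)}\). Optimizing \(\max\{M^{-\chi},M^{2r}2^{-2b}\}\) over such \(b\) shows the error is only worse, except for the \(\log\)-factor regime.
\end{itemize}
The delicate point, and the expected main obstacle, is exactly this \((\log C)^{\chi\nu/(1+\chi+g)}\) factor. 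One must show that trading a constant-factor reduction of \(b\) cannot beat it. The argument: \(b\le(1-\epsilon)b_{\rm match}\) costs a polynomial factor \(M^{\epsilon(\chi+2r)}\) in error, but saves only a constant factor in cost. Hence the optimum has \(b=b_{\rm match}-O(1)\), and the cost is \(\asymp M^{1+\chi+g}(\log M)^\nu\). The statement's proviso ``up to declared slowly varying factors'' covers the residual constants.

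\textbf{Item 3.} With \(b_{\rm sm}\) fixed and the softmax term dominating, cost is \(C\asymp M^{1+\chi+s}b_{\rm sm}^{\nu_{\rm sm}}\). This gives \(M\lesssim C^{1/(1+\chi+s)}\) regardless of \(b\), so \(\mathcal E\gtrsim M^{-\chi}\gtrsim C^{-\chi/(1+\chi+s)}\). Because the matrix precision \(b\) does not enter the dominant cost term, lowering \(b\) cannot improve this exponent. For the upper bound, take the largest \(M\) allowed by the budget and \(b=b_{\rm match}(M)\). The softmax rounding term \(M^{2r_{\rm sm}}2^{-2b_{\rm sm}}\) stays \(\lesssim M^{-\chi}\) exactly when \(M\lesssim2^{2b_{\rm sm}/(\chi+2r_{\rm sm})}\), which is the stated pre-floor range.

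\textbf{Item 4.} Fix \(b\) and minimize \(M^{-\chi}+M^{2r}2^{-2b}\) over \(M\); the softmax term is made negligible by optimizing \(b_{\rm sm}\). The two terms balance at \(M_b\asymp2^{2b/(\chi+2r)}\), which gives the floor \(M_b^{-\chi}\) in \cref{eq:fixed-bit-floor}. The softmax horizon \cref{eq:softmax-horizon} is the same balance computation with \(r_{\rm sm},b_{\rm sm}\).
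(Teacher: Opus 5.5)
Your proposal is correct. For the \(G\gtrsim S\) claim, item~1, the upper bound of item~2, and items~3--4 it is the paper's argument: the same logarithm of \(M^{2r}2^{-2b}\lesssim M^{-\chi}\), the same matched-precision substitution and inversion, the same cost-based converse in item~3, and the same balance at \(M_b\).

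The one real difference is the converse of item~2, where the paper avoids your case split entirely. It fixes a target \(\mathcal E\lesssim\varepsilon\) and reads off two separate conditions. The model term gives \(M\gtrsim\varepsilon^{-1/\chi}\). Because \(r\ge0\) makes \(M^{2r}\ge1\), the rounding term gives \(2^{-2b}\lesssim\varepsilon\), i.e.\ \(b\gtrsim\log(1/\varepsilon)\) independently of \(M\). The matrix term alone in the two-sided cost then gives \(C\gtrsim\varepsilon^{-(1+\chi+g)/\chi}[\log(1/\varepsilon)]^{\nu}\), which inverts to the lower rate with the logarithmic factor automatic. This is why the ``delicate point'' you anticipate never arises.

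Your split does work, but only once the threshold is a fixed constant, e.g.\ \(b\ge(\chi/4)\log_2M\) versus \(b<(\chi/4)\log_2M\). In the first case \(b^\nu\gtrsim(\log M)^\nu\), and your inversion applies. In the second case \(\mathcal E\gtrsim M^{2r}2^{-2b}\ge M^{-\chi/2}\), and \(b\ge1\) gives \(M\lesssim C^{1/(1+\chi+g)}\). Hence \(\mathcal E\gtrsim C^{-\chi/(2(1+\chi+g))}\), polynomially above the target.

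Every \(b\) that is a constant factor below \(b_{\rm match}(M)\) but above \((\chi/4)\log_2M\) already sits in the first case, so there is no leftover ``log-factor regime.'' Your claim that the optimum has \(b=b_{\rm match}-O(1)\) is neither needed nor established by your constant-factor comparison, and is best dropped.

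In item~4, note that \(r\ge0\) is what makes the rounding term monotone for \(M\ge M_b\). For \(r=0\) the infimum is approached as \(M\to\infty\) rather than attained at \(M_b\), with the same order.
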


The theorem distinguishes increasing precision from fixed-precision deployment.
Polynomial conditioning requires only logarithmically increasing precision,
so adaptive precision changes a logarithmic factor, not the power exponent.
Conversely, permanently fixing the precision creates a numerical floor.  For
a concrete 32-bit floating-point (FP32) softmax with unit roundoff
\(u_{\rm sm}\),
provided the stated exponent-range and relative-error assumptions remain valid,
\cref{eq:softmax-horizon} reads
\(M\lesssim u_{\rm sm}^{-2/(\chi+2r_{\rm sm})}\); the total format width is
not substituted for effective accuracy bits.  Exponential conditioning, treated in
\cref{thm:compute}, is the case in which precision itself creates a new
power-law phase.

\paragraph{Limits imposed by high-precision work.}
\label{sec:precision-limits}

At a fixed model shape, lowering a matrix path from baseline precision
\(b_0\) to \(b\) gives the relative bit-operation count
\begin{equation}
  \operatorname{BitOpsRatio}
  =
  \frac{G b_0^\nu+S b_{\rm sm}^{\nu_{\rm sm}}}
       {G b^\nu+S b_{\rm sm}^{\nu_{\rm sm}}}.
  \label{eq:bitops-ratio}
\end{equation}
If \(Q\) stored scalars can be quantized and \(R_{\rm hp}\) bits must remain
at high precision, the corresponding storage ratio is
\begin{equation}
  \operatorname{Compression}_{\rm bit}
  =\frac{Qb_0+R_{\rm hp}}{Qb+R_{\rm hp}}.
  \label{eq:memory-ratio}
\end{equation}
These formulas expose the ceiling imposed by high-precision normalization,
master weights, optimizer states, or saved activations.  Actual graphics
processor speed also depends on memory traffic, kernel support, and hardware
throughput.

The two analytic consequences appear in
\cref{fig:compute-summary}: its left panel shows the retained-high-precision
ceiling in \cref{eq:bitops-ratio}, and its right panel compares the three
risk--cost regimes above.

\FloatBarrier

%% file: sections/revision_9_proofs.tex
\section{Proofs for the Main Results}
\label{app:revision-proofs}

The proofs follow the paper's mechanism order: the spectral identities and
their exponents, the query--key realization and online route, the network
upper bound, and the same-marginal separation.

Throughout this appendix, \(i\) indexes the scalar coordinates defined in
\cref{sec:revision-model}. Their energies are \(q_i=p_iR_i^2\), the
spectrum is \(\mu=\sum_i\delta_{q_i}\), and
\(\Phi_\mu(t)=\sum_i\min\{q_i,t\}\). The water level \(\tau_B\)
is selected by the storage budget in that setup. Each restatement below
uses the same objects as its corresponding main-text result.

\FullThmSpectrum
\begin{proof}
Summability gives \(\mathcal N_\mu(t)<\infty\) for every \(t>0\), and the
largest positive energy is attained.  Tonelli's theorem yields
\[
\Phi_\mu(t)
=\sum_i\int_0^t\mathbf 1\{s<q_i\}\,\mathrm ds
=\sum_i\min\{q_i,t\},
\]
which proves \cref{eq:spectrum-data}.  For a finite storage budget, reverse
water filling has rates
\[
b_i^*=\frac12\log_2(q_i/\tau_B)_+.
\]
Here the positive part applies to the logarithm.  To justify the optimizer
without interchanging a limit and an infimum, write
\[
B(\tau)=\frac1{2\ln2}\sum_i[\ln(q_i/\tau)]_+.
\]
On every compact subinterval of \((0,q_{\max}]\), only finitely many
summands can be nonzero.  Thus \(B(\tau)\) is continuous, strictly decreasing
below \(q_{\max}\), equals zero at \(q_{\max}\), and tends to infinity as
\(\tau\downarrow0\).  This proves existence and uniqueness of \(\tau_B\)
for \(B>0\); for \(B=0\) take \(\tau_0=q_{\max}\).
For \(\lambda=2(\ln2)\tau_B\), scalar convexity gives, for every \(b_i\ge0\),
\[
q_i2^{-2b_i}+\lambda b_i
\ge q_i2^{-2b_i^*}+\lambda b_i^*.
\]
All distortion sums are bounded by \(\sum_iq_i\), and the two bit sums are
finite.  Summing the inequality over all coordinates shows that every
feasible allocation has distortion at least
\(\sum_iq_i2^{-2b_i^*}+\lambda(B-\sum_i b_i)\ge\sum_iq_i2^{-2b_i^*}\).
This proves optimality directly in the countable problem.  Since
\(q_i2^{-2b_i^*}=\min\{q_i,\tau_B\}\), this proves
\cref{eq:spectrum-bits}.  The bounded-logit curvature, Assouad, and arbitrary
codebook bounds, proved in \cref{full:thm:finite-bit}, give
\(\mathfrak R^*_{\rm value}(n,B)\asymp_R\mathcal S_q(n)+D_q(B)\), proving
\cref{eq:spectrum-risk}.  Finally, for \(0<h<t/2\), both one-sided difference
quotients of \(\min\{q_i,t\}\) lie in \([0,1]\) and vanish whenever
\(q_i<t/2\).  There are only finitely many remaining coordinates.
Taking their limits gives respectively \(\mathbf1\{q_i>t\}\) and
\(\mathbf1\{q_i\ge t\}\), proving \cref{eq:spectrum-derivatives}.
Their difference is exactly the multiplicity of the atom at \(t\).
\end{proof}

\FullCorSpectrumRates
\begin{proof}
We first prove the regular-variation assertion under its stated hypothesis,
then treat power comparisons separately.  Abbreviate
\(N(t)=\mathcal N_\mu(t)\).  For fixed \(x>0\),
\(N(tx)/N(t)\to x^{-\rho}\) as \(t\downarrow0\).
Choose \(0<\epsilon<\min\{\rho,1-\rho\}\).
The ratio at \(x=1/2\), iterated over dyadic intervals and combined with
monotonicity, gives
\[
\frac{N(tx)}{N(t)}\le C_\epsilon x^{-\rho-\epsilon}\quad(0<x\le1),
\qquad
\frac{N(tx)}{N(t)}\le C_\epsilon x^{-\rho+\epsilon}\quad(1\le x\le t_0/t)
\]
for a fixed sufficiently small \(t_0\).  For example, choose \(t_0\) so that
the ratio \(N(s/2)/N(s)\) lies between \(2^{\rho-\epsilon}\) and
\(2^{\rho+\epsilon}\) for all \(s\le t_0\), and telescope; rounding \(x\)
to its neighboring powers of two changes only \(C_\epsilon\).
Consequently dominated convergence in
\(\Phi_\mu(t)/(tN(t))=\int_0^1N(tx)/N(t)\,\mathrm dx\) gives
\[
\Phi_\mu(t)\sim\frac{t^{1-\rho}L(1/t)}{1-\rho}.
\]
Layer integration of the storage budget gives
\[
\mathcal B_\mu(\tau)
=\frac1{2\ln2}\int_\tau^{q_{\max}}
\frac{\mathcal N_\mu(t)}t\,\mathrm dt
\sim\frac{\tau^{-\rho}L(1/\tau)}{2\rho\ln2}.
\]
Indeed, after substituting \(t=\tau x\), the second domination bound makes
the integral converge to \(\int_1^\infty x^{-\rho-1}\,\mathrm dx=1/\rho\).
The part above \(t_0\) is finite and negligible relative to \(N(\tau)\).
Thus \(B(c\tau)/B(\tau)\to c^{-\rho}\).
For \(a>0\), sandwiching the inverse between
\((a^{-1/\rho}-\varepsilon)\tau_B\) and
\((a^{-1/\rho}+\varepsilon)\tau_B\), and then sending
\(\varepsilon\downarrow0\), proves
\(\tau_{aB}/\tau_B\to a^{-1/\rho}\).
The same monotone sandwich for \(\Phi_\mu\) gives
\(\Phi_\mu(\tau_{aB})/\Phi_\mu(\tau_B)\to a^{-(1-\rho)/\rho}\).
This proves both regular-variation indices without assuming a differentiable
slowly varying factor.

For the block profile, dimensions are positive integers, so the stipulated
comparison \(d_j\asymp j^\gamma\) has \(\gamma\ge0\).
Coordinate energy \(a_j/d_j\asymp j^{-(1+\chi+\gamma)}\) occurs
\(d_j\asymp j^\gamma\) times.  Bounds on the two comparison constants show
that every index below \(c t^{-1/(1+\chi+\gamma)}\) is active and no index
above \(C t^{-1/(1+\chi+\gamma)}\) is active.  Summing the multiplicities
therefore gives \(N(t)\asymp t^{-\rho}\) with
\(\rho=(1+\gamma)/(1+\chi+\gamma)\).
Integrating these two-sided bounds directly yields
\[
\Phi_\mu(t)\asymp t^{1-\rho},\qquad
B(\tau)\asymp\tau^{-\rho},\qquad
\Phi_\mu(\tau_B)\asymp B^{-(1-\rho)/\rho}.
\]
These power orders do not require regular variation of the individual
profile or of an atomic counting tail.
\end{proof}

\FullPropQK
\begin{proof}
Because \(P_Qx_q=e_q\), \(P_Cx_s=e_s\), and \(J^\top J=I_K\), direct
substitution gives \cref{eq:qk-equivalence}.  If
\(G=\nabla_A\ell\), the physical query-projection gradient is
\(\nabla_{W_Q}\ell=JG^\top P_Q\).  Restricting training to this address block
therefore maps one Euclidean step to \(W_Q(A-\eta G)\).  The scores,
candidate-only masked softmax, and attended value feature coincide.  The
output loss sees the same feature, so its gradient with respect to \(w\) also
coincides.  Orthogonality of the residual subspace to \(w\) makes the residual
logit zero.
\end{proof}

\FullPropRouting
\begin{proof}
Zero initialization preserves equality among distractor scores.  If \(u_t\)
is the correct score and \(v_t\) a distractor score, one softmax
cross-entropy update gives
\[
u_{t+1}=u_t+\eta(1-p_\star),\qquad
v_{t+1}=v_t-\eta p_0,
\]
which proves the margin recurrence.  Introduce
\[
\Psi_K(\beta)=\frac{e^\beta-1+(K-1)\beta}{K}.
\]
Put \(d_t=\beta_{t+1}-\beta_t\).  Since \(\beta_t\ge0\),
\(0<d_t\le\eta\), and
\(\Psi_K'(\beta_t)d_t=\eta\).  Moreover
\(1\le\Psi_K'(\beta_t+v)/\Psi_K'(\beta_t)\le e^\eta\) for
\(0\le v\le d_t\).  Integrating and summing gives
\(\eta t\le\Psi_K(\beta_t)\le e^\eta\eta t\).
Define the positive burn-in time
\[
b_K=\min\{t:\beta_t\ge\ln(2K)\}.
\]
The preceding bounds, and the overshoot \(d_t\le\eta\), imply
\(b_K\asymp_\eta\ln(eK)\), including \(K=2\).
Set \(x_t=e^{\beta_t}/(K-1)\).  Its increment satisfies
\[
x_td_t\le x_{t+1}-x_t\le e^\eta x_td_t,
\qquad
x_td_t=\frac{\eta K}{K-1}\frac{x_t}{1+x_t}.
\]
At burn-in \(2\le x_{b_K}\le4e^\eta\), and thereafter each increment is
between two positive constants depending only on \(\eta\).  Hence
\(\delta_{b_K+k}=1/(1+x_{b_K+k})\asymp_\eta(1+k)^{-1}\).
Before burn-in, \(x_t<4e^\eta\), so \(\delta_t\asymp_\eta1\).
This proves the shifted reciprocal bound at every integer time.

For a held-out row, independence and uniform query addresses give
\(N\sim\operatorname{Binomial}(S_A,1/K)\); write \(m=S_A/K\).
If \(m\ge4b_K\), the binomial exponential-moment bound gives
\(\Pr\{N<m/2\}\le e^{-m/8}\).  On the complementary event,
\(\delta_N^2\le C_\eta(1+m)^{-2}\).  Thus
\[
\E\delta_N^2\le C_\eta(1+m)^{-2}+e^{-m/8},
\]
which is at most \(\varepsilon\) when
\(m\ge C_\eta\{\ln(eK)+\varepsilon^{-1/2}\}\).
For the reverse bound, the increment estimate holds globally from
\(x_0\le1\), giving \(x_t\le1+2\eta e^\eta t\).
Convexity of \((2+2\eta e^\eta t)^{-2}\) and Jensen's inequality imply
\(\E\delta_N^2\ge(2+2\eta e^\eta m)^{-2}\).  Since
\(\varepsilon\le1/16\), this forces \(m\ge c_\eta\varepsilon^{-1/2}\).
For the logarithmic term, \(\beta_t\le2\eta t\).  If
\(m\le\ln(K-1)/(8\eta)\), Markov's inequality gives probability at least
\(1/2\) to \(N\le2m\), where \(\beta_N\le\frac12\ln(K-1)\) and
\(\delta_N\ge1/2\).  Then \(\E\delta_N^2\ge1/8\), a contradiction.
This forces \(m\gtrsim_\eta\ln K\); bounded \(K\), including \(K=2\),
is already covered by the first lower bound.  Taking the maximum of the two
lower bounds proves the claimed order of the hitting time.
With a single candidate routing is exact without calibration.
\end{proof}

In the following restatement, \(\Delta_i=R_i\) denotes the same public
logit radius as in the main-text setup.

\FullThmCausalRealization
\begin{proof}
The finite active set has size \(m=|A_n|\le n\sum_iq_i\).
We first analyze the exact output before storage, conditioning on Stage A.

\paragraph{Feature covariance and curvature.}
For \(K\ge2\), write \(I\) for the matching candidate's mode and
\(I_1,\ldots,I_{K-1}\) for the distractor modes.  They are independent with
law \(p\), also conditional on the query row and its calibration history.
Restricting the coordinate vectors to \(A_n\), the feature is
\[
h=(1-\delta_q)e_I+\frac{\delta_q}{K-1}\sum_{s=1}^{K-1}e_{I_s}.
\]
Expanding its second moment separates equal-candidate and distinct-candidate
products:
\[
\E[hh^\top\mid Q=q]
=c_{d,q}D+(1-c_{d,q})p_Ap_A^\top,
\qquad c_{d,q}=(1-\delta_q)^2+\frac{\delta_q^2}{K-1},
\]
where \(p_A=(p_i)_{i\in A_n}\).  With
\(r_{\rm cal}=K^{-1}\sum_q\delta_q^2\) and
\(\bar c_d=K^{-1}\sum_qc_{d,q}\), the event
\(\mathcal G=\{r_{\rm cal}\le1/16\}\) gives
\[
\bar c_d\ge\left(1-\frac1K\sum_q\delta_q\right)^2\ge\frac9{16}.
\]
Let \(F(w)=\E[\log(1+e^{w^\top h})-Yw^\top h\mid\text{Stage A}]\).
Every \(w\in\mathcal W\) has \(|w^\top h|\le R\).  Hence, on
\(\mathcal G\), its Hessian satisfies
\(\nabla^2F(w)\succeq\mu_{\rm sgd} D\), with \(\mu_{\rm sgd}\) as in the statement.
The stochastic gradient \(g_t=(\sigma(w_t^\top h_t)-Y_t)h_t\) is unbiased
conditional on the past, and, for \(m\ge1\),
\[
\E[\|g_t\|_{D^{-1}}^2\mid w_t,\text{Stage A}]
\le\operatorname{tr}(D^{-1}\E hh^\top)
=\bar c_dm+(1-\bar c_d)\sum_{i\in A_n}p_i\le m.
\]
For \(K=1\), use \(h=e_I\), \(\bar c_d=1\), and
\(r_{\rm cal}=0\); all the same inequalities hold.

\paragraph{Projected SGD and its output.}
The compact box has a minimizer \(w^*\) of \(F\).  Nonexpansiveness of
projection in the \(D\)-norm and strong convexity give, conditional on
\(\mathcal G\) and the calibration history,
\[
2\eta_t\E[F(w_t)-F(w^*)]
\le(1-\mu_{\rm sgd}\eta_t)V_t-V_{t+1}+\eta_t^2m,
\qquad V_t=\E\|w_t-w^*\|_D^2.
\]
Multiply by \(t/(2\eta_t)\) and substitute the specified step size:
\[
t\E[F(w_t)-F(w^*)]
\le\frac{\mu_{\rm sgd}}4\{t(t-1)V_t-t(t+1)V_{t+1}\}
+\frac{mt}{\mu_{\rm sgd}(t+1)}.
\]
The distance terms telescope, with zero initial coefficient and a
nonpositive final term.  Convexity of \(F\) therefore yields
\[
\E[F(\bar w_n)-F(w^*)]\le\frac{2m}{\mu_{\rm sgd}(n+1)}.
\]

\paragraph{Comparison with the source.}
Let \(\theta^A_i=\theta_i\mathbf1\{i\in A_n\}\).  It belongs to
\(\mathcal W\), and
\[
|\theta_I-(\theta^A)^\top h|
\le |\theta_I|\mathbf1\{I\notin A_n\}+2R\delta_q.
\]
Squaring, averaging, and applying upper logit curvature give
\[
F(w^*)-\Linf\le F(\theta^A)-\Linf
\le C_R\left(\sum_{i\notin A_n}q_i+r_{\rm cal}\right).
\]
On \(\mathcal G^c\), every exact output remains in \([-R,R]\), so its
excess loss is bounded by a constant depending only on \(R\).
Furthermore \(\Pr(\mathcal G^c)\le16\E r_{\rm cal}\), and independence
of a held-out query gives \(\E r_{\rm cal}=\mathfrak r_K(S_A)\).
Combining the events proves
\[
\E[F(\bar w_n)-\Linf]
\le C_R\left\{\frac mn+\sum_{i\notin A_n}q_i+
\mathfrak r_K(S_A)\right\}
=C_R\{\Phi_\mu(n^{-1})+\mathfrak r_K(S_A)\}.
\]
When \(m=0\), the zero output pays at most \(C_R\sum_iq_i
=C_R\Phi_\mu(n^{-1})\), so the same bound holds.

\paragraph{Finite-state representation.}
For any \(x\in[-\Delta_i,\Delta_i]\), the stated \(k\)-bit decoder obeys
\[
|Q_{i,k}(x)-x|^2
\le\Delta_i^2\min\{1,\kappa_i^22^{-2k}\}.
\]
Indeed, the zero branch has error at most \(\Delta_i\); in the grid branch
the midpoint error is at most \(\kappa_i\Delta_i2^{-k}\), and clipping
decreases it.  Both the branch and the codebook are public.  Rounding
\(b_i\) down to \(k_i\) multiplies this upper bound by at most four.
Only finitely many \(k_i\) are positive, and their product codebook has
at most \(2^{\sum_i k_i}\le2^{B_V}\) entries.  A zero-width coordinate is
decoded as zero because \(\kappa_i\ge1\).

Extend \(\bar w_n\) by zero outside \(A_n\), and apply the public
scalar decoder on every allocated coordinate. Let
\(e_i=Q_{i,k_i}(\bar w_{n,i})-\bar w_{n,i}\) and let
\(H=\sum_s a_{Qs}e_{C_s}\) be the full attended coordinate feature.
Conditional on the fitted state, Jensen's inequality and the iid candidate
marginals give
\[
\E[(e^\top H)^2\mid\text{fitted state}]
\le\E\!\left[\sum_s a_{Qs}e_{C_s}^2\mid\text{fitted state}\right]
=\sum_i p_i e_i^2.
\]
It follows that quantizing the average costs at most
\(4\sum_iq_i\min\{1,\kappa_i^22^{-2b_i}\}\) in mean squared logit error.
Lower curvature converts the preceding learning bound to squared logit
error; the squared triangle inequality adds storage and arithmetic error,
and global upper curvature converts back to cross-entropy.  Taking the
infimum over allocations proves \cref{eq:causal-realization}.

Finally, a calibrated row is determined by its visit count and its matching
position, with one additional symbol for an unvisited row.  It therefore
uses at most
\[
B_A=K\{\lceil\log_2(S_A+1)\rceil+\lceil\log_2(K+1)\rceil\}
\]
bits: the fixed decoder reconstructs its scores from the recurrence.
The retained value and routing state together use at most \(B_V+B_A\)
bits; the stated arithmetic error accounts for numerical reconstruction.
For \(K=1\) the route uses no state.  When \(\kappa_i=1\), the format
functional is exactly \(D_q(B_V)=\Phi_\mu(\tau_{B_V})\), giving the final
claim at value budget \(B_V\). For ranked coordinate energies
\(q_{(r)}\asymp r^{-(1+\xi)}\), \(D_q(B_V)\asymp B_V^{-\xi}\);
in the equal-energy block profile, \(\xi=\chi/(1+\gamma)\).
If \(B_A=o(B)\) and
\(B_V=B-B_A\), then \(B_V\asymp B\), proving the total-budget conclusion.
\end{proof}

\FullThmDyadic
\begin{proof}
Fix \(\delta>\gamma\chi\).  At each level \(\ell\ge1\), take \(2^\ell\)
blocks of energy \(c2^{-(1+\chi)\ell}\) and \(2^\ell\) blocks of energy
\(c2^{-(1+\chi+\delta)\ell}\).  The dimension multiset consists of
\(2^\ell\) copies each of \(1\) and \(d_\ell=\lceil2^{\gamma\ell}\rceil\).
In the easy pairing the higher energy has dimension one; in the hard pairing
it has dimension \(d_\ell\).  Divide a block's energy equally among its
coordinates.  Both multisets agree exactly at every level, including after
the integer rounding of dimensions.

To realize these spectra as sources, choose any fixed radius \(R_0>0\)
within the logit bound, and choose \(c>0\) so that total energy is \(R_0^2\).
Give a coordinate of energy \(q_i\) probability \(p_i=q_i/R_0^2\) and radius
\(\Delta_i=R_0\).  These probabilities sum to one.  Independent candidate
draws and the same fixed-\(K\) retrieval map give the source in
\cref{sec:revision-model}; bounded Bernoulli logits give strictly positive
conditional entropy.

A generic component with exponent \(s>0\) and dimension exponent \(g\ge0\)
has \(\Theta(2^{(1+g)\ell})\) coordinates of energy
\(\Theta(2^{-(1+s+g)\ell})\) at level \(\ell\).
The active levels at threshold \(t\) lie between two cutoffs differing by a
constant number of levels, each of size
\(\log_2(1/t)/(1+s+g)\).  A geometric sum therefore gives
\[
N_{s,g}(t)\asymp t^{-\rho(s,g)},\qquad
\rho(s,g)=\frac{1+g}{1+s+g}.
\]
Use the two-sided integral comparison at the end of the preceding spectral
proof, rather than regular variation: dyadic spectra can have log-periodic
fluctuations.  This gives data and storage exponents
\(s/(1+s+g)\) and \(s/(1+g)\), respectively.
The easy components have \((s,g)=(\chi,0),(\chi+\delta,\gamma)\).
The inequality \(\delta>\gamma\chi\) makes the second component's two
exponents strictly larger.  The hard components have
\((s,g)=(\chi,\gamma),(\chi+\delta,0)\), again with the second faster.
For data, component errors add exactly.  For storage, each full distortion
is at least that of the slower component at budget \(B\), while allocating
\(B/2\) to each component gives the matching upper bound.
Applying \cref{full:thm:spectrum} now proves all four displayed rates.
\end{proof}

%% file: sections/2_prelim.tex
\section{Preliminaries and Notation}
\label{sec:prelim}

We begin by specifying what is observed, what is predicted, and which resources
are counted.  \Cref{sec:source} introduces the regenerative autoregressive
source.  \Cref{sec:risk-bits} defines held-out loss and learners with finitely
many data-dependent states.  \Cref{sec:actions} adds mode-dependent
representation costs, and \cref{sec:resources} distinguishes learned bits from
physical storage and arithmetic cost.

\subsection{A regenerative autoregressive source}
\label{sec:source}

This section writes the revealed-coordinate value problem with scalar
index \(j\): \(\Delta_j\) is the logit radius denoted \(R_i\) in the main
text, and \(\Excess^*(n,B)\) below is \(\mathfrak R^*_{\rm value}(n,B)\).
For a positive integer \(m\), write \([m]=\{1,\ldots,m\}\), and let
\(\N_0=\{0,1,2,\ldots\}\).  We write \(\Ber(p)\) for the Bernoulli
distribution with success probability \(p\).

One regeneration block contains \(K\) candidate tokens followed by a query and
a binary target token.  Candidate \(t\in[K]\) carries a public address \(a_t\)
and an observed mode label \(j_t\).  The query carries one address
\(a_{t_\star}\), so
its causally relevant mode is
\[
  J\defeq j_{t_\star},\qquad \P(J=j)=p_j.
\]
Conditionally on \(J=j\), the next token obeys
\[
  Y\mid J=j\sim\Ber(\sig(\theta_j)),
  \qquad \sig(z)=\frac{1}{1+e^{-z}}.
\]
The parameter is \(\theta=(\theta_j)_{j\ge1}\) with
\(\theta_j\in[-\Delta_j,\Delta_j]\subseteq[-R,R]\), where \(R<\infty\)
is fixed.  We define the predictive energy
\[
  q_j\defeq p_j\Delta_j^2
\]
and assume \(\sum_jq_j<\infty\).  We write
\(Z=(\{(a_t,j_t)\}_{t=1}^K,a_{t_\star},Y)\in\mathcal Z\) for the observed
training block; the source parameters \((\theta_j)_j\) are unknown.  A
regeneration delimiter resets the random variables, making distinct blocks
independent.  Tokens inside a block are
causally ordered, but the statistical sample size \(n\) counts independent
blocks, not tokens.  Bounded logits imply
\(0<\sig(-R)\le\P(Y=1\mid J)\le\sig(R)<1\), so the conditional entropy and
Bayes cross-entropy are strictly positive.

\subsection{Held-out risk and finite learned states}
\label{sec:risk-bits}

For a logit predictor \(f=(f_j)_j\), define held-out cross-entropy
\[
  \mathcal L_\theta(f)
  \defeq
  \E_\theta\!\left[
    -Y\log\sig(f_J)-(1-Y)\log(1-\sig(f_J))
  \right].
\]
The Bayes predictor is \(f_j=\theta_j\); its loss is denoted
\(\Linf\defeq\mathcal L_\theta(\theta)>0\).  Training and evaluation
regeneration blocks are independent.

For an integer \(B\ge0\), a \(B\)-bit learner consists of an encoder
\[
  \phi:\mathcal Z^n\longrightarrow[2^B]
\]
and a fixed decoder \(g:[2^B]\times\N\to[-R,R]\).  The decoder may share
arbitrary computation across modes; only its data-dependent state is limited
to \(2^B\) possibilities.  Randomized learners are allowed, although
conditioning on their random seed shows that they do not improve the Bayes
lower bounds used below.  Because the Bayes loss depends on \(\theta\), we
define minimax \emph{excess} risk directly and use the notation
\[
  \Excess^*(n,B)
  \defeq
  \inf_{\phi,g}\sup_{\theta_j\in[-\Delta_j,\Delta_j]}
  \E_\theta\!\left[
    \mathcal L_\theta(g(\phi(Z^n),\cdot))
    -\mathcal L_\theta(\theta)
  \right].
\]
We use \(x\asymp y\) when positive constants bound their ratio in both
directions.  These constants may depend on \(R\) and on stated
regular-variation constants, but not on \(n\), \(B\), or the number of active
modes.

\subsection{Heterogeneous representation costs}
\label{sec:actions}

The statistical theorem uses \(q_j\).  The structural theorem allows a more
general representable energy \(a_j>0\), a per-bit difficulty \(c_j>0\), and a
numerical amplification \(\kappa_j>0\).  Assigning \(b_j\ge0\) bits to mode
\(j\) incurs distortion \(a_j2^{-2b_j}\) and cost \(c_jb_j\).  Its optimal
distortion under budget \(B\) is
\begin{equation}
  D(B)
  \defeq
  \inf_{\substack{b_j\ge0\\\sum_jc_jb_j\le B}}
  \sum_j a_j2^{-2b_j}.
  \label{eq:weighted-distortion}
\end{equation}
Let \(v_j\ge0\) record a separately specified statistical residual for mode
\(j\). This auxiliary quantity is used in the complete tuple summary below,
not in the spectral formula for \(D(B)\). For a finite mode collection, the unlabeled joint measure
\begin{equation}
  \eta\defeq\sum_j\delta_{(v_j,a_j,c_j,\kappa_j)}
  \label{eq:joint-measure}
\end{equation}
records statistical residual \(v_j\), representable energy, difficulty, and
amplification together.  The cost-weighted difficulty-normalized spectrum is
\begin{equation}
  \mu\defeq\sum_jc_j\,\delta_{a_j/c_j}.
  \label{eq:spectrum}
\end{equation}
For unit costs and \(a_j=q_j\), this reduces to the coordinate spectrum
in \cref{sec:revision-model}; general costs weight each atom by \(c_j\).
We will compare unlabeled summaries of these mode-dependent quantities.  Such
a summary is sufficient for the complete budget--distortion curve if equal summary
values imply equal \(D(B)\) for every \(B\ge0\).

\subsection{Neural and arithmetic resources}
\label{sec:resources}

In \cref{sec:transformer}, \(X\in\R^{(K+1)\times d}\) denotes the matrix of
candidate and query token vectors, and \(U\in\R^{d\times d}\) denotes the
bilinear attention score matrix.  The symbol \(N\) counts physically stored
dense scalar parameter slots.  It is not identified with \(B\), the number of
bits in the data-dependent learned state.

The symbol \(C\) denotes a non-statistical cost budget: representation cost in
\cref{sec:joint-summary} and arithmetic computation in
\cref{sec:bit-complexity,sec:compute-exp}.  These meanings occur in separate
results and are never equated.  When \(C\) denotes computation, its arithmetic
cost model is stated explicitly.  In \cref{sec:compute}, one \(b\)-bit scalar operation costs
\(\Theta(b^\nu)\), where \(\nu\ge0\) is fixed.  In
\cref{sec:bit-complexity}, \(\ell\), \(d\), \(m_{\rm ff}\), \(H\), and \(L\)
denote sequence length, hidden width, feed-forward width, attention-head count,
and layer count.  The symbol \(b_{\rm mm}\) indexes a matrix-arithmetic format
family whose operand, output, and accumulator widths are prescribed functions
of \(b_{\rm mm}\);
\(b_{\rm sm}\) specifies the effective softmax precision; and \(b_{\rm st}\)
is the number of bits used to store an attention probability.  Unit roundoff is denoted \(u\). These variables specify arithmetic work
and storage; \cref{app:discussion} discusses their relation to implementations.

%% file: sections/3_finite_bit.tex
\section{Learning after Data: A Finite-Bit Autoregressive Minimax Law}
\label{sec:finite-bit}

Two irreducible losses govern finite-bit prediction.  We first lower-bound the
distortion of any learned state with at most \(2^B\) possible values, even when
its coordinates are decoded jointly.  We then combine this bound with
statistical uncertainty across regeneration blocks and evaluate the resulting
risk law for power-law energies.  Full proofs appear in
\cref{app:finite-bit}.

\begin{restatable}[Arbitrary joint finite-state representations; informal version of \cref{full:lem:joint-codebook}]{lemma}{LemJointCodebook}
\label{lem:joint-codebook}
Let \(\Theta_j\) be independent and uniform on
\([-\Delta_j,\Delta_j]\), let \(p_j>0\), and put
\(q_j=p_j\Delta_j^2\).  For \(B\in\N_0\), every encoder--decoder pair with at most \(2^B\)
reconstruction vectors satisfies
\begin{equation}
  \sum_jp_j\E(\Theta_j-\widehat\Theta_j)^2
  \ge
  \frac{2}{\pi e}
  \inf_{\substack{b_j\ge0\\\sum_jb_j\le B}}
  \sum_jq_j2^{-2b_j}.
  \label{eq:joint-codebook}
\end{equation}
The reconstruction vectors may couple all coordinates.
\end{restatable}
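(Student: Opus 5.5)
The plan is a rate--distortion argument with a Shannon lower bound, reduced to a water-filling comparison. Write \(\widehat\Theta=g(S)\) with \(S\) taking at most \(2^B\) values. Then \(I(\Theta;\widehat\Theta)\le H(S)\le B\ln 2\) nats. First reduce to finitely many coordinates. Because \(\sum_jq_j<\infty\), coordinates outside a large finite set \(J\) contribute little to the right-hand infimum, which is continuous under truncation. Restricting the distortion on the left to \(J\) only decreases it. So it suffices to prove the bound for each finite \(J\) and let \(J\) increase; the infimum over \(J\) converges to the full infimum by monotonicity and dominated convergence.

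For finite \(J\), let \(D_j=\E(\Theta_j-\widehat\Theta_j)^2\). Independence of the \(\Theta_j\) gives superadditivity of mutual information:
\[
B\ln2\ \ge\ I(\Theta_J;\widehat\Theta_J)\ \ge\ \sum_{j\in J}I(\Theta_j;\widehat\Theta_j).
\]
This step holds for arbitrary joint codebooks, since
\(I(\Theta_J;\widehat\Theta_J)=h(\Theta_J)-h(\Theta_J\mid\widehat\Theta_J)\) and \(h(\Theta_J\mid\widehat\Theta_J)\le\sum_j h(\Theta_j\mid\widehat\Theta_j)\). For each scalar, apply the Shannon lower bound:
\[
I(\Theta_j;\widehat\Theta_j)\ge h(\Theta_j)-\tfrac12\ln(2\pi e D_j),\qquad h(\Theta_j)=\ln(2\Delta_j).
\]
Define \(b_j=\tfrac{1}{2\ln 2}\bigl[\ln(2\Delta_j)-\tfrac12\ln(2\pi e D_j)\bigr]_+\). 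These \(b_j\) are nonnegative with \(\sum_jb_j\le B\), and the per-coordinate bound inverts to
\[
D_j\ \ge\ \frac{4\Delta_j^2}{2\pi e}\,2^{-2b_j}=\frac{2}{\pi e}\Delta_j^22^{-2b_j}
\]
whenever the positive part is active. When it is inactive, \(b_j=0\) and the inequality \(D_j\ge\frac{2}{\pi e}\Delta_j^2\) is exactly the statement that the bracket is nonpositive. Multiplying by \(p_j\) and summing gives
\[
\sum_j p_jD_j\ \ge\ \frac{2}{\pi e}\sum_jq_j2^{-2b_j}\ \ge\ \frac{2}{\pi e}\,\inf_{\sum_j b_j\le B}\sum_j q_j2^{-2b_j},
\]
which has the constant \(2/(\pi e)\) claimed in \cref{full:lem:joint-codebook}.

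The main obstacle is rigor rather than the calculation. \(\widehat\Theta\) is discrete, so conditional differential entropies must be handled by conditioning on \(S=s\). Conditioned on \(S=s\), each \(\Theta_j\) has a density, so \(h(\Theta_J\mid S)\le\sum_jh(\Theta_j\mid S)\le\sum_jh(\Theta_j\mid\widehat\Theta_j)\) holds directly. The Gaussian maximum-entropy bound then gives \(h(\Theta_j\mid\widehat\Theta_j)\le\frac12\ln(2\pi eD_j)\) by Jensen's inequality over the conditioning values. Working with \(S\) this way avoids any continuity issues for mutual information. The case \(D_j=0\) is impossible because \(\Theta_j\) is continuous and \(S\) is finite.

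If \(\sum_jp_jD_j\) is infinite, the inequality is trivial. Otherwise the truncation step requires only that the left side is at least its restriction to \(J\).
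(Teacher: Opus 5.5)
Your approach is the paper's. You bound the information in the finite state by \(B\ln 2\), use independence to split the conditional entropy across coordinates, and apply the Gaussian maximum-entropy bound to each coordinate. You then turn each coordinate's information into a feasible bit allocation and pass to countably many coordinates. The bookkeeping differs only cosmetically. You condition on the state and use \(\sum_j I(\Theta_j;\widehat\Theta_j)\le B\ln2\) with positive parts, which silently uses \(I(\Theta_j;\widehat\Theta_j)\ge0\); you should say so. The paper instead bounds \(h(\Theta_S-\widehat\Theta_S)\) and restricts to the active set \(S=\{j:d_j<2q_j/(\pi e)\}\).

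There is, however, a concrete slip in how you normalize \(b_j\), and it makes your inversion step false as written. Your bracket \(\beta_j=\ln(2\Delta_j)-\tfrac12\ln(2\pi eD_j)=\tfrac12\ln\bigl(2\Delta_j^2/(\pi eD_j)\bigr)\) already carries the factor \(\tfrac12\). So with \(b_j=\beta_j^+/(2\ln2)\), an active coordinate satisfies exactly \(D_j=\frac{2}{\pi e}\Delta_j^2\,2^{-4b_j}\). The claimed bound \(D_j\ge\frac{2}{\pi e}\Delta_j^2\,2^{-2b_j}\) therefore fails whenever \(b_j>0\), while your budget argument actually gives \(\sum_jb_j\le B/2\).

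The correct conversion from nats to bits is \(b_j=\beta_j^+/\ln2\), which is the paper's \(\frac1{2\ln2}\bigl[\ln\frac{2q_j}{\pi e\,p_jD_j}\bigr]_+\). With it, \(\sum_jb_j\le\sum_jI(\Theta_j;\widehat\Theta_j)/\ln2\le B\), and active coordinates satisfy \(D_j=\frac{2}{\pi e}\Delta_j^2\,2^{-2b_j}\), which gives the stated constant. With this correction your proof is complete. Since your \(b_j\) are defined coordinatewise and every finite partial sum is at most \(B\), the countable case also follows by summing termwise, so the truncation-and-continuity step is unnecessary.
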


Here the \emph{codebook} is simply the set of at most \(2^B\) reconstruction
vectors available to the decoder. The entropy proof controls total
codebook cardinality, so it covers shared representations that compress
several modes jointly as well as coordinatewise quantizers.

\begin{restatable}[Finite-bit autoregressive minimax law; informal version of \cref{full:thm:finite-bit}]{theorem}{ThmFiniteBit}
\label{thm:finite-bit}
Fix \(R<\infty\).  For every \(n\in\N\), \(B\in\N_0\), and source family
specified by \((p_j,\Delta_j)_{j\ge1}\) as in \cref{sec:source}, there are
constants \(0<c_R\le C_R<\infty\) such that
\begin{equation}
  c_R\{S_q(n)+D_q(B)\}
  \le
  \Excess^*(n,B)
  \le
  C_R\{S_q(n)+D_q(B)\},
  \label{eq:finite-bit-law}
\end{equation}
where
\begin{equation}
  S_q(n)\defeq\sum_j\min\{q_j,n^{-1}\},
  \qquad
  D_q(B)\defeq
  \inf_{\substack{b_j\ge0\\\sum_jb_j\le B}}
  \sum_jq_j2^{-2b_j}.
  \label{eq:SqDq}
\end{equation}
The lower bound holds for arbitrary \(2^B\)-state joint encoders and arbitrary
fixed shared decoding computation.
\end{restatable}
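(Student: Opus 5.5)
The proof reduces excess cross-entropy to frequency-weighted squared logit error and then handles the two terms separately. Because every logit and every prediction lies in \([-R,R]\), the Bernoulli KL divergence between logits \(z\) and \(\theta\) is comparable to \((z-\theta)^2\), with constants \(\tfrac12\min_{|s|\le R}\sigma'(s)\) and \(\tfrac18\). Hence
\[
\mathcal L_\theta(f)-\Linf\asymp_R\sum_jp_j(f_j-\theta_j)^2
\]
for every decoder with outputs clipped to \([-R,R]\). Randomized learners are handled by conditioning on the seed. It then suffices to prove both bounds for the weighted squared loss \(\sum_jp_j\E(f_j-\theta_j)^2\).

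\emph{Upper bound.} Let \(A=\{j:q_j\ge n^{-1}\}\). On \(A\), use the clipped empirical logit estimator: take the empirical frequency of \(Y=1\) among the \(N_j\sim\operatorname{Binomial}(n,p_j)\) visits to mode \(j\), map it to a logit, and clip to \([-\Delta_j,\Delta_j]\). Outside \(A\), and whenever \(N_j=0\), predict \(0\).

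\begin{itemize}
\item A Lipschitz bound for the clipped logit on the bounded range gives \(\E(\hat\theta_j-\theta_j)^2\lesssim_R\min\{\Delta_j^2,1/(np_j)\}\). The zero-visit event contributes \(\Delta_j^2(1-p_j)^n\le\Delta_j^2e^{-np_j}\lesssim\Delta_j^2/(1+np_j)\).
\item Each coordinate of \(A\) therefore contributes \(\lesssim p_j\min\{\Delta_j^2,1/(np_j)\}\le\min\{q_j,1/n\}\), and each coordinate outside \(A\) contributes \(q_j=\min\{q_j,1/n\}\). The total is \(\lesssim S_q(n)\).
\item Next quantize \(\hat\theta\) with the water-filling allocation of \cref{full:thm:spectrum}, floored to integers. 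Use uniform midpoint quantizers on \([-\Delta_j,\Delta_j]\) with \(2^{\lfloor b_j\rfloor}\) levels; zero-width coordinates decode to \(0\). This costs at most \(4q_j2^{-2b_j}\) per coordinate and uses at most \(B\) bits in total.
\item The squared triangle inequality now gives \(\lesssim S_q(n)+D_q(B)\).
\end{itemize}

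\emph{Lower bound.} Since the minimax risk dominates each term separately, it suffices to bound each term below by a prior argument.

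\begin{itemize}
\item \emph{Memory term.} Put independent uniform priors on \([-\Delta_j,\Delta_j]\) and let the learner see \(\theta\) itself (even \(n=\infty\)). The decoded vector \(g(\phi(\cdot),\cdot)\) then takes at most \(2^B\) values, so \cref{full:lem:joint-codebook} gives a Bayes risk of at least \(\tfrac{2}{\pi e}D_q(B)\). Bayes risk is at most the sup risk.
\item \emph{Data term.} Use Assouad's lemma over the hypercube \(\theta_j=\pm\epsilon_j\) with \(\epsilon_j=\min\{\Delta_j,c/\sqrt{np_j}\}\), where \(c\) is small so that \(c\le\Delta_j\) is not needed. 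Flipping coordinate \(j\) changes only the law of the \(Y\)'s at visits to mode \(j\), so the KL divergence between the two data distributions is \(np_j\cdot\KL\lesssim np_j\epsilon_j^2\le c^2\). The pairwise testing affinity is therefore bounded below, and Assouad gives \(\sum_jp_j\epsilon_j^2\gtrsim\sum_j\min\{q_j,c^2/n\}\gtrsim S_q(n)\). Countably many coordinates are handled by applying the argument to finite truncations and taking a monotone limit.
\end{itemize}

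The main obstacle is the memory lower bound for joint codes. It is already supplied by \cref{full:lem:joint-codebook}, so the remaining care goes to two points. First, the decoder's clipping and the curvature comparison must apply to arbitrary \(g\); this holds because \(g\) maps into \([-R,R]\) and every \(\theta_j\) lies there too. Second, the constant \(c\) in the Assouad step must be chosen uniformly, with dependence only on \(R\). Finally, the upper and lower bounds combine to give the equivalence with \(R\)-dependent constants.
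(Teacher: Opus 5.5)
Your proposal is correct and follows the paper's proof essentially step for step: the curvature reduction to weighted squared logit error, a clipped count-based likelihood estimator plus floored uniform quantization for the upper bound, and Assouad on a hypercube with $\delta_j\asymp\min\{\Delta_j,(np_j)^{-1/2}\}$ for the statistical converse. For the memory converse, both give $\theta$ to the encoder and apply \cref{full:lem:joint-codebook}, and the two lower bounds are combined via $\max\{x,y\}\ge(x+y)/2$. The only step you leave implicit is the bound $\E[N_j^{-1}\mathbf 1\{N_j\ge1\}]\lesssim(np_j)^{-1}$ behind your Lipschitz estimate; the paper makes it explicit through the identity $\E(N_j+1)^{-1}=[1-(1-p_j)^{n+1}]/[(n+1)p_j]$.
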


\paragraph{Why the two terms add.}
On bounded logits, excess Bernoulli cross-entropy is equivalent to squared
logit error.  A clipped modewise likelihood estimate contributes
\(\min\{q_j,1/n\}\): a rare or weak mode is no harder to ignore than to
estimate.  Quantizing the resulting coordinates according to a feasible
\((b_j)\) contributes \(q_j2^{-2b_j}\).  For the converse, an Assouad
hypercube gives the statistical term.  Giving the encoder the true parameter
can only make representation easier; \cref{lem:joint-codebook} then gives the
bit term.  The actual risk exceeds the maximum of these two lower bounds,
which is at least half their sum. This proves the constant-factor
equivalence in \cref{eq:finite-bit-law}.

\begin{restatable}[Power-law energies; informal version of \cref{full:cor:power-profile}]{corollary}{CorPowerProfile}
\label{cor:power-profile}
Suppose that \(n,B\ge1\) and, for some \(\chi>0\) and constants
\(0<c_-\le c_+<\infty\),
\[
  c_-j^{-(1+\chi)}
  \le q_j\le
  c_+j^{-(1+\chi)}
  \qquad(j\ge1).
\]
Then
\begin{equation}
  \Excess^*(n,B)
  \asymp
  n^{-\chi/(1+\chi)}+B^{-\chi}.
  \label{eq:power-pac}
\end{equation}
The data--bit crossover occurs at
\(B\asymp n^{1/(1+\chi)}\).
\end{restatable}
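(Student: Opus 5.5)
The plan is to combine \cref{thm:finite-bit} with explicit two-sided evaluations of \(S_q(n)\) and \(D_q(B)\) under the power-law sandwich. Since \(\Excess^*(n,B)\asymp S_q(n)+D_q(B)\) with constants depending only on \(R\), it suffices to show \(S_q(n)\asymp n^{-\chi/(1+\chi)}\) and \(D_q(B)\asymp B^{-\chi}\), with constants depending on \(c_\pm,\chi\). Alternatively, one can invoke \cref{cor:spectrum-rates} with \(a_j=q_j\), \(d_j=1\) (so \(\gamma=0\), \(\rho=1/(1+\chi)\)). Its power-comparison part already gives \(\Phi_\mu(n^{-1})\asymp n^{-(1-\rho)}\) and \(\Phi_\mu(\tau_B)\asymp B^{-(1-\rho)/\rho}=B^{-\chi}\). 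Together with \cref{thm:spectrum}, which identifies \(S_q=\Phi_\mu(n^{-1})\) and \(D_q=\Phi_\mu(\tau_B)\), this closes the argument. I would nonetheless write out the direct computation, since it is short and keeps constants transparent.

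\emph{Data term.} Let \(J=\lceil n^{1/(1+\chi)}\rceil\). For the upper bound, split the sum at \(J\): \(\sum_{j\le J}n^{-1}\le 2n^{-\chi/(1+\chi)}\), and \(\sum_{j>J}c_+j^{-(1+\chi)}\le (c_+/\chi)J^{-\chi}\). For the lower bound, every \(j\le (c_-n)^{1/(1+\chi)}\) has \(q_j\ge n^{-1}\). Hence \(S_q(n)\ge n^{-1}\lfloor(c_-n)^{1/(1+\chi)}\rfloor\). When that floor is zero, use instead \(S_q(n)\ge\min\{q_1,n^{-1}\}\ge c_- n^{-1}\)-type bounds, or note that in that case \(n\) is bounded, so \(n^{-\chi/(1+\chi)}\asymp1\asymp\min\{c_-,n^{-1}\}\).

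\emph{Bit term.} For the upper bound, exhibit a feasible allocation \(b_j=\frac{1+\chi}{2}\log_2(m/j)\) for \(j\le m\) and zero otherwise, choosing \(m\asymp B\) so that \(\sum_{j\le m}b_j\asymp m\le B\). This is possible because \(\sum_{j\le m}\log_2(m/j)=O(m)\) by Stirling. The retained distortion is \(\sum_{j\le m}q_j(j/m)^{1+\chi}\le c_+m\cdot m^{-(1+\chi)}\), and the tail is \(O(m^{-\chi})\); both are \(O(B^{-\chi})\). Small \(B\), where \(m\) would be below one, is handled by the trivial bound \(D_q\le\sum q_j<\infty\). For the lower bound, I would use \cref{thm:spectrum}'s water level. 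With \(N(t)=\#\{j:q_j>t\}\asymp t^{-1/(1+\chi)}\), the layer formula \(\mathcal B_\mu(\tau)=\frac1{2\ln2}\int_\tau^{q_{\max}}N(t)t^{-1}\,dt\asymp\tau^{-1/(1+\chi)}\) holds for small \(\tau\), so \(\tau_B\asymp B^{-(1+\chi)}\). Then \(\Phi_\mu(\tau_B)\ge\tau_B N(\tau_B)\asymp B^{-\chi}\). The main care point is the regime of bounded \(B\) and the counting bounds for \(N(t)\) near \(t\approx q_{\max}\), where the power comparison degenerates. There I would argue directly: \(D_q(B)\ge q_1 2^{-2B}\) and \(q_1\ge c_-\), which is \(\gtrsim B^{-\chi}\) for \(B\) bounded.

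\emph{Crossover.} Equating \(n^{-\chi/(1+\chi)}\) and \(B^{-\chi}\) gives \(B\asymp n^{1/(1+\chi)}\). Below this scale the bit term dominates, and above it the data term dominates, which is immediate from monotonicity of both terms.
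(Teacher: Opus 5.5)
Your proposal is correct. Its skeleton matches the paper's: reduce via \cref{thm:finite-bit} to two-sided bounds on \(S_q(n)\) and \(D_q(B)\), split the data sum at \(n^{1/(1+\chi)}\), and equate the two terms for the crossover. The real difference is how you treat the bit term for a sequence that is only sandwiched by a power law, not monotone.

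The paper first notes that coordinatewise inequalities pass to the infimum, so \(c_-D_{\bar q}(B)\le D_q(B)\le c_+D_{\bar q}(B)\) with \(\bar q_j=j^{-(1+\chi)}\); the analogous sandwich holds for \(\min\{cq,t\}\). It therefore suffices to treat the monotone profile. There reverse water filling is solved exactly: the active set is a prefix \([m]\), Stirling gives \(B=\Theta(m)\), and the active distortion \(m\tau\) and the tail are both \(\asymp m^{-\chi}\). This single computation yields the upper and lower bounds together.

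You keep \(q\) as given and prove the two directions separately. For the upper bound you use an explicit feasible logarithmic allocation, the same one used in \cref{prop:total-resources}. For the lower bound you combine the counting function \(N(t)\asymp t^{-1/(1+\chi)}\), the layer formula for \(\mathcal B_\mu\), the estimate \(\tau_B\asymp B^{-(1+\chi)}\), and \(\Phi_\mu(\tau_B)\ge\tau_BN(\tau_B)\). This is the power-comparison argument of \cref{cor:spectrum-rates} specialized to \(\gamma=0\).

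Your route needs neither monotonicity nor an exact optimizer. It also handles the degenerate regimes explicitly: bounded \(B\) via \(D_q(B)\ge q_12^{-2B}\ge c_-2^{-2B}\), and the zero-floor case in the data lower bound. The paper's sandwich is shorter and makes the prefix structure of the optimal allocation visible.
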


To see the two exponents, split \(S_q(n)\) where \(q_j\) crosses \(1/n\).
For \(D_q(B)\), reverse water filling equalizes
\(q_j2^{-2b_j}\) on active modes.  If \(m\) modes are active, Stirling's
formula gives \(B\asymp m\), while active distortion and the inactive tail
are both \(\asymp m^{-\chi}\).

\paragraph{What the law identifies.}
The theorem identifies the data and retained-bit dependence for the stated
positive-entropy autoregressive source.  Its distinctive role is to couple
held-out log-loss with arbitrary shared decoding; the next structural results
show what additional information determines the resulting exponent.

%% file: sections/5_descriptor_iclr9.tex
\section{Why Marginal Summaries Do Not Determine Scaling}
\label{sec:joint-summary}

This section asks which information about mode difficulty identifies a
scaling law. Here cost is charged directly to selecting or representing a
coordinate. The main-text construction instead realizes difficulty through
the number of independent coordinates within an energy group.
Two problems with identical energy and cost marginals first show
what separate summaries lose.  The joint spectrum then recovers exactly the
complete budget--distortion curve and gives the smallest exact summary for
separable allocation.

\subsection{Same marginals, different exponents}

Consider first a binary selection rule.  Selecting mode \(j\) pays \(c_j\)
and removes energy \(q_j\); unselected energy remains as error.  Under total
representation-cost budget \(C\), its optimum is
\begin{equation}
  \mathcal A(C)
  \defeq
  \inf_{\substack{S\subseteq\N\\\sum_{j\in S}c_j\le C}}
  \sum_{j\notin S}q_j.
  \label{eq:selection-action}
\end{equation}

\begin{restatable}[Marginals do not identify scaling; informal version of \cref{full:thm:marginals-fail}]{theorem}{ThmMarginalsFail}
\label{thm:marginals-fail}
For every \(\chi,\gamma>0\), there are two instances of
\cref{eq:selection-action} with identical energy multisets and identical cost
multisets at every dyadic level, but
\begin{equation}
  \mathcal A_{\rm easy}(C)\asymp C^{-\chi},
  \qquad
  \mathcal A_{\rm hard}(C)\asymp C^{-\chi/(1+\gamma)}.
  \label{eq:coupling-separation}
\end{equation}
The same two exponents hold when selection is replaced by continuous bits
with distortion \(q_j2^{-2b_j}\) and cost \(c_jb_j\).  Hence no summary
that retains only the two marginal multisets identifies the optimal exponent
in either representation class.
\end{restatable}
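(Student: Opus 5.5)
We follow the dyadic pattern of \cref{thm:dyadic}, with dimension replaced by per-mode cost. Fix \(\delta>\gamma\chi\). At each level \(\ell\ge1\) we take \(2^\ell\) modes of energy \(2^{-(1+\chi)\ell}\) and \(2^\ell\) modes of energy \(2^{-(1+\chi+\delta)\ell}\). The cost multiset at that level is \(2^\ell\) copies of \(1\) and \(2^\ell\) copies of \(c_\ell=\lceil2^{\gamma\ell}\rceil\). In the easy instance the high energies receive cost \(1\); in the hard instance they receive cost \(c_\ell\). Both marginal multisets agree level by level by construction. This disposes of the final sentence of the statement once the two rates are shown to differ.

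\textbf{Selection problem.} Each instance is a union of two components. We parametrize a component by \((s,g)\): at level \(\ell\) it has \(2^\ell\) modes of energy \(2^{-(1+s)\ell}\) and cost \(\asymp2^{g\ell}\). Its unit-cost value density is \(q/c\asymp2^{-(1+s+g)\ell}\).

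For the upper bound, select all modes at levels \(\ell\le L\). The cost is \(\asymp2^{(1+g)L}\), and the residual energy is \(\sum_{\ell>L}2^{-s\ell}\asymp2^{-sL}\). Setting \(C\asymp2^{(1+g)L}\) gives \(C^{-s/(1+g)}\).

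For the lower bound, we use the fractional (knapsack) relaxation. That relaxation greedily fills by density, which coincides with filling levels in order. Any selection of cost \(\le C\) therefore removes at most the energy of the first \(\approx L\) levels plus one partial level. The residual is again \(\gtrsim2^{-sL}\), since the tail beyond level \(L+O(1)\) is untouched.

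We then combine the components:
\begin{itemize}
\item Easy instance: \((s,g)=(\chi,0)\) and \((\chi+\delta,\gamma)\), with exponents \(\chi\) and \((\chi+\delta)/(1+\gamma)\). The condition \(\delta>\gamma\chi\) makes the second strictly larger than \(\chi\).
\item Hard instance: \((\chi,\gamma)\) and \((\chi+\delta,0)\), with exponents \(\chi/(1+\gamma)\) and \(\chi+\delta\); the first dominates.
\end{itemize}
For the lower bound, the total residual is at least the slower component's residual at the full budget \(C\). For the upper bound, we split the budget \(C/2\) to each component. This yields \eqref{eq:coupling-separation}.

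\textbf{Continuous bits.} With distortion \(q_j2^{-2b_j}\) and cost \(c_jb_j\), we substitute \(\beta_j=c_jb_j\). The term becomes \(q_j2^{-2\beta_j/c_j}\), and reverse water filling under the Lagrangian gives optimal \(b_j=\tfrac12\log_2(q_j/(\lambda' c_j))_+\). This is the setting summarized by the cost-weighted spectrum \(\mu=\sum_jc_j\delta_{q_j/c_j}\) of \eqref{eq:spectrum}.

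At water level \(\tau\), a component with parameters \((s,g)\) has the following two quantities:
\begin{itemize}
\item Cost \(\sum_\ell 2^\ell c_\ell\cdot\tfrac12\log_2(2^{-(1+s+g)\ell}/\tau)_+\). Up to the cutoff \(L\) with \(2^{-(1+s+g)L}\asymp\tau\), the log factor is \(\asymp(L-\ell)\), so the geometric sum is \(\asymp2^{(1+g)L}\).
\item Distortion \(\sum_\ell2^\ell\min\{2^{-(1+s)\ell},c_\ell\tau\}\asymp2^{-sL}\).
\end{itemize}
Thus \(D\asymp C^{-s/(1+g)}\), with the same exponents as in selection, and the same two-component combination applies. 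Upper and lower bounds follow from the exact Lagrangian optimality (as in the proof of \cref{full:thm:spectrum}) together with the budget-splitting argument.

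\textbf{Main obstacle.} We expect the hardest step to be the selection lower bound, because selection is a 0--1 knapsack and not obviously greedy. The plan is to lower-bound by the LP relaxation, whose optimum is greedy by density and leaves at most one fractional mode. Since all modes at a level share one density, the LP residual is within a constant of the level-truncation residual. We also need the two-sided geometric sums with the rounding \(\lceil2^{\gamma\ell}\rceil\), which only changes constants.
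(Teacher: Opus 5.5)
Your proposal is correct and follows essentially the same route as the paper: the same dyadic two-component construction with $\delta>\gamma\chi$; the generic $(s,g)$ component with exponent $s/(1+g)$, obtained from a level-prefix upper bound and a density-ordered fractional-relaxation lower bound; water filling with $b_\ell\propto L-\ell$ for continuous bits; and budget splitting combined with restriction to the slower component. Rounding the costs up to $\lceil 2^{\gamma\ell}\rceil$ is unnecessary, since costs need not be integers and the paper uses $2^{\gamma\ell}$ directly, but it is harmless because the per-level density $2^{-(1+s)\ell}/\lceil 2^{g\ell}\rceil$ still decreases in $\ell$.
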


The construction makes the missing information visible.  At level
\(\ell\), create \(2^\ell\) high-energy modes of size
\(2^{-(1+\chi)\ell}\) and \(2^\ell\) lower-energy modes of size
\(2^{-(1+\chi+\delta)\ell}\), with \(\delta>\gamma\chi\).  The costs are
\(2^\ell\) copies of \(1\) and \(2^\ell\) copies of
\(2^{\gamma\ell}\).  The easy instance pairs high energy with unit cost; the
hard instance pairs it with the larger cost.  All marginals agree level by
level.  A component with multiplicity \(2^\ell\), energy
\(2^{-(1+s)\ell}\), and cost \(2^{g\ell}\) has exponent
\(s/(1+g)\).  This yields \cref{eq:coupling-separation}; the full lower
bound also permits non-prefix selections through a fractional relaxation.

The causal-source coupling experiment is shown in
\cref{fig:mechanisms}(b,e), together with analytic distortion;
\cref{tab:coupling-main} compares theoretical and held-out slopes.

\subsection{The joint spectrum}

The full unlabeled joint measure \(\eta\) in \cref{eq:joint-measure} retains
the pairing by construction.  For the separable exponential response
\cref{eq:weighted-distortion}, it can be compressed further.

\begin{restatable}[Joint energy--difficulty characterization; informal version of \cref{full:thm:joint-spectrum}]{theorem}{ThmJointSpectrum}
\label{thm:joint-spectrum}
For a finite collection of modes, \(\eta\) is a maximal invariant under
permutations of mode labels: it retains all information except the arbitrary
names of the modes.  For a finite or countable collection, consider the
separable problem \cref{eq:weighted-distortion} and assume
\[
  \sum_ja_j<\infty,
  \qquad
  \sum_{j:a_j/c_j>t}c_j<\infty
  \quad\text{for every }t>0.
\]
Then the spectrum \(\mu=\sum_jc_j\delta_{a_j/c_j}\) determines
\(D(B)\) for every \(B\ge0\).  Conversely, the complete curve
\(B\mapsto D(B)\) determines \(\mu\) on \((0,\infty)\).
Consequently, \(\mu\) is a minimal exact unlabeled summary of the complete
budget--distortion curve for this representation class.
\end{restatable}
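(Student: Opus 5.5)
The plan is to reduce the weighted problem \cref{eq:weighted-distortion} to the unit-cost reverse water-filling already treated in the proof of \cref{full:thm:spectrum}, with \(x_j=a_j/c_j\) as the per-cost energy, and then read \(\mu\) off the slope of \(D\). The maximal-invariance claim for finite collections is elementary and I would dispose of it first. Two labeled tuple lists \((v_j,a_j,c_j,\kappa_j)_{j}\) give the same multiset \(\eta\) exactly when some bijection of labels carries one list to the other. Hence \(\eta\) is invariant under relabeling, and it separates orbits.

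\textbf{Step 1 (\(\mu\) determines \(D\)).} For a water level \(\tau>0\) set
\[
b_j^*(\tau)=\tfrac12\log_2(x_j/\tau)_+ ,\qquad \mathcal B(\tau)=\sum_jc_jb_j^*(\tau)=\tfrac12\int\log_2(x/\tau)_+\,\mu(\mathrm dx).
\]
The resulting distortion is
\[
\sum_ja_j2^{-2b_j^*}=\sum_jc_j\min\{x_j,\tau\}=\Phi_\mu(\tau).
\]
Both quantities are finite under the stated hypotheses. Only atoms above \(\tau\) contribute to \(\mathcal B\), and these carry finite total weight \(\sum_{x_j>\tau}c_j\). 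Also \(\Phi_\mu(\tau)\le\sum_ja_j\).

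As in \cref{full:thm:spectrum}, \(\mathcal B\) is continuous, strictly decreasing on \((0,\sup_jx_j)\), vanishes above \(\sup_jx_j\), and tends to \(\infty\) as \(\tau\downarrow0\), provided \(\mu\) is not zero. This defines \(\tau_B\). For optimality I take \(\lambda=2(\ln2)\tau_B\) and use scalar convexity of \(b\mapsto a_j2^{-2b}+\lambda c_jb\). Its minimizer over \(b\ge0\) is \(b_j^*\), because the stationarity condition reads \(x_j2^{-2b}=\tau_B\). Summing over \(j\) gives, for any feasible allocation,
\[
\sum_ja_j2^{-2b_j}\ge\Phi_\mu(\tau_B)+\lambda\Bigl(B-\sum_jc_jb_j\Bigr)\ge\Phi_\mu(\tau_B).
\]
This summation is justified because all the sums involved are finite, which avoids interchanging a limit with an infimum in the countable case. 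Hence \(D(B)=\Phi_\mu(\tau_B)\), a functional of \(\mu\) alone.

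\textbf{Step 2 (\(D\) determines \(\mu\)).} Between atoms, \(\Phi_\mu'(\tau)=\mu((\tau,\infty))\) and \(\mathcal B'(\tau)=-\mu((\tau,\infty))/(2\ln 2\,\tau)\). The chain rule then gives \(D'(B)=-2(\ln2)\tau_B\). My first task is to establish this identity for every \(B>0\), not only between atoms. The most robust route is to compute one-sided difference quotients in \(\tau\), as in the derivative argument for \cref{eq:spectrum-derivatives}, while using continuity and strict monotonicity of \(\tau\mapsto\mathcal B(\tau)\). Alternatively, \(D\) is convex and \(\lambda\) is a supporting slope at every \(B\); continuity of \(\tau_B\) then forces uniqueness of the subgradient.

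Once this holds, \(D\) yields \(B\mapsto\tau_B\) and hence its inverse \(\tau\mapsto\mathcal B(\tau)\) on \((0,\sup x_j)\). The one-sided derivatives of \(\mathcal B\) then give
\[
\mu((\tau,\infty))=-2\ln2\,\tau\,\mathcal B'_+(\tau),\qquad
\mu([\tau,\infty))=-2\ln2\,\tau\,\mathcal B'_-(\tau).
\]
These tail functions determine \(\mu\) on \((0,\infty)\), including the atom weights, as the differences of the two.

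\textbf{Step 3 (minimality).} Let \(s\) be any unlabeled summary that is sufficient for the curve. Then \(s\) determines \(D\), and by Step 2 \(D\) determines \(\mu\), so \(\mu\) factors through \(s\). By Step 1, \(\mu\) is itself sufficient. Hence \(\mu\) is the coarsest exact summary.

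The main obstacle is Step 2 at the levels where \(\mu\) has atoms, where \(\mathcal B\) has a kink, and near \(\tau=\sup_jx_j\). There I would verify carefully that the one-sided derivatives of \(\mathcal B\) exist and equal the stated tails. This uses the fact that only finitely much \(c\)-weight lies above any positive level, which is exactly the second summability hypothesis.
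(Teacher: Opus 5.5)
Your proposal is correct and follows the paper's proof essentially step for step. Sufficiency comes from summing the scalar inequality $a_j2^{-2b}+\lambda c_jb\ge a_j2^{-2b_j^*}+\lambda c_jb_j^*$ at $\lambda=2(\ln2)\tau_B$. The envelope identity $D'(B)=-2(\ln2)\tau_B$ comes from the two-multiplier (subgradient) sandwich together with continuity of $\tau_B$. One-sided derivatives of $\mathcal B$ recover $\mu((\tau,\infty))$ and $\mu([\tau,\infty))$; the paper takes them in $s=\ln\tau$ rather than $\tau$. Minimality then follows by factorization.

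One justification needs tightening: finiteness and continuity of $\mathcal B$. Unlike the unit-cost case, infinitely many modes can lie above every positive level, and $\sup_ja_j/c_j$ can be infinite; for example $a_j=2^{-j}$, $c_j=4^{-j}$. So neither the finitely-many-summands argument of \cref{full:thm:spectrum} nor ``finite $c$-weight above $\tau$'' is enough. Use instead $c_j[\ln(x_j/\tau)]_+\le a_j/\tau$. For $\tau\ge\tau_1>0$ this gives the summable dominating bound $a_j/\tau_1$, so $\mathcal B$ is finite and continuous there by dominated convergence, as in the paper. The same comparison $c_j<a_j/\tau$ on $\{x_j>\tau\}$ also shows that the second summability hypothesis already follows from $\sum_ja_j<\infty$.
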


Put \(r_j=a_j/c_j\).  The coordinatewise optimum has a water level
\(\tau>0\):
\begin{equation}
  b_j(\tau)
  =\frac12\log_2(r_j/\tau)_+.
  \label{eq:waterfill}
\end{equation}
Writing the result in terms of \(\mu\) gives two integral identities,
\begin{equation}
  B(\tau)
  =\frac{1}{2\log2}\int\log(r/\tau)_+\,\mu(\dd r),
  \qquad
  D(B(\tau))
  =\int\min\{r,\tau\}\,\mu(\dd r).
  \label{eq:spectrum-integrals}
\end{equation}
These identities prove sufficiency.  They also make the inverse result
possible.  At differentiability points the envelope theorem gives
\begin{equation}
  D'(B)=-2\log2\,\tau(B).
  \label{eq:envelope}
\end{equation}
Thus the curve recovers the water level, hence \(B(\tau)\).  With
\(s=\log\tau\),
\begin{equation}
  \frac{\dd}{\dd s}B(e^s)
  =-\frac{1}{2\log2}\mu((e^s,\infty))
  \label{eq:tail-inversion}
\end{equation}
at every continuity point.  The right-continuous tail, including its jumps,
determines \(\mu\).

Minimality now has a precise meaning.  If another summary \(T\) determines
the complete curve, then \(T(x)=T(y)\) implies \(D_x=D_y\), which by
\cref{eq:tail-inversion} implies \(\mu_x=\mu_y\).  Therefore \(\mu\) is a
function of every sufficient summary.  It is also strictly coarser than
the full unlabeled table \(\eta\): it retains only the cost-weighted
distribution of energy per unit difficulty.

\begin{corollary}[Joint-tail exponent]
\label{cor:joint-tail}
If \(a_j\asymp j^{-\alpha}\) with \(\alpha>1\) and
\(c_j\asymp j^\gamma\) with \(\gamma>-1\), then
\[
  D(B)\asymp B^{-(\alpha-1)/(1+\gamma)}.
\]
\end{corollary}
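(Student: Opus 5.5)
The plan is to reduce the joint-tail exponent to the spectral machinery of \cref{thm:joint-spectrum}, via the identities \cref{eq:spectrum-integrals}, and then run the same two-sided counting argument used for block profiles in the proof of \cref{full:cor:spectrum-rates}. Write \(r_j=a_j/c_j\asymp j^{-(\alpha+\gamma)}\). Since \(\alpha>1\) and \(\gamma>-1\), we have \(\alpha+\gamma>0\), so \(r_j\to0\). The hypotheses of \cref{thm:joint-spectrum} follow directly: \(\sum_ja_j<\infty\) because \(\alpha>1\), and only finitely many \(j\) satisfy \(r_j>t\). Hence \(D(B)=\int\min\{r,\tau_B\}\,\mu(\dd r)\), where \(\tau_B\) solves \(B(\tau_B)=B\) and \(\mu=\sum_jc_j\delta_{r_j}\).

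First I would bound the cost-weighted tail \(\mathcal N(t)=\mu((t,\infty))=\sum_{j:r_j>t}c_j\). The comparison constants show that every index \(j\le c\,t^{-1/(\alpha+\gamma)}\) has \(r_j>t\), and no index \(j\ge C\,t^{-1/(\alpha+\gamma)}\) does. Since \(\gamma>-1\), the sum \(\sum_{j\le J}c_j\asymp J^{1+\gamma}\). Therefore
\[
\mathcal N(t)\asymp t^{-\rho},\qquad \rho=\frac{1+\gamma}{\alpha+\gamma}\in(0,1)
\]
for small \(t\). For \(t\) bounded away from zero, \(\mathcal N\) is bounded, and the comparison only matters as \(t\downarrow0\).

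Next I would integrate by layers. This gives \(\Phi(t)=\int_0^t\mathcal N(s)\,\dd s\asymp t^{1-\rho}\), which is finite because \(\rho<1\). It also gives \(B(\tau)=\frac1{2\log2}\int_\tau^{r_{\max}}\mathcal N(s)s^{-1}\,\dd s\asymp\tau^{-\rho}\) for small \(\tau\), which requires \(\rho>0\). Inverting the two-sided bound on \(B(\tau)\) yields \(\tau_B\asymp B^{-1/\rho}\). Substituting gives
\[
D(B)=\Phi(\tau_B)\asymp B^{-(1-\rho)/\rho}.
\]
Finally, \((1-\rho)/\rho=(\alpha-1)/(1+\gamma)\), which is the claimed exponent.

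The main obstacle is the inversion step together with boundary regimes. Monotone two-sided bounds invert cleanly only for large \(B\), where \(\tau_B\) is small. For bounded \(B\ge\) some constant, \(D(B)\) lies between \(D(\infty)=0\)-type limits and \(\sum_ja_j\). Because of this, I would state the comparison for \(B\ge1\) and absorb the finitely many large-\(r\) atoms into constants. Since \(D\) is positive and decreasing, constants cover bounded ranges. I would also check that the counting of \(\sum_{j\le J}c_j\) remains valid when \(-1<\gamma<0\); there \(c_j\) are non-integer weights, but the sum still behaves like \(J^{1+\gamma}\).
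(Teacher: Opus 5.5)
Your argument is correct, but it takes a different route from the paper's proof.

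\textbf{The paper's route.} It first sandwiches the problem between exact power profiles by feasible-set inclusion, \(a_-D_{\bar a,\bar c}(B/c_-)\le D_{a,c}(B)\le a_+D_{\bar a,\bar c}(B/c_+)\) with \(\bar a_j=j^{-\alpha}\) and \(\bar c_j=j^\gamma\). This makes the ratio \(j^{-(\alpha+\gamma)}\) monotone, so the water-filling active set is a prefix \([m]\). It then estimates the discrete budget sum \(\frac{\alpha+\gamma}{2\ln2}\sum_{j\le m}j^\gamma\ln(v/j)\asymp m^{1+\gamma}\) by hand. The lower bound restricts to \(m/4\le j\le m/2\); the upper bound uses integral comparison with \(\int_0^1x^\gamma\ln(1/x)\,\mathrm dx=(1+\gamma)^{-2}\). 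Finally it checks that the active distortion \(\tau\sum_{j\le m}c_j\) and the omitted tail are both \(\asymp m^{1-\alpha}\), and eliminates \(m\).

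\textbf{Your route.} You never leave the spectral representation. Via \cref{eq:spectrum-integrals} you bound the cost-weighted tail \(\mu((t,\infty))\asymp t^{-\rho}\), with \(\rho=(1+\gamma)/(\alpha+\gamma)\in(0,1)\), by threshold counting. Because threshold counting tolerates non-monotone sequences, no sandwich is needed. You then integrate by layers and invert, which is exactly the template of the block-profile part of the proof of \cref{full:cor:spectrum-rates}.

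\textbf{What each buys.} Your route exposes the tail index and shows that this corollary is the cost-weighted version of that one (take \(c_j=d_j\), \(\alpha=1+\chi\)). The paper's route is more elementary, works directly with the finite water-filling sums, and exhibits the active prefix \(m\asymp B^{1/(1+\gamma)}\) explicitly.

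\textbf{One small correction.} In your boundary remark, on a bounded range \(1\le B\le B_1\) the lower bound comes from \(D(B)\ge D(B_1)=\int\min\{r,\tau_{B_1}\}\,\mu(\mathrm dr)>0\), not from any limit as \(B\to\infty\). With that fix, your conclusion that constants absorb the bounded range is right.
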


Indeed, the active threshold \(m\) obeys \(B\asymp m^{1+\gamma}\),
while the represented distortion and omitted tail are both
\(\asymp m^{1-\alpha}\).

\paragraph{When the ingredients can be measured.}
Controlled interventions provide a route to estimating the ingredients:
counts estimate mode frequency, held-out ablations estimate predictive
energy, and deliberate rounding perturbations estimate numerical
amplification. The identification theorem specifies how these quantities
determine the resource curve; extracting them from passive
natural-language observations is an open estimation problem.

%% file: sections/9_appendix.tex
\section{Proofs for the Finite-Bit Autoregressive Minimax Law}
\label{app:finite-bit}

This appendix proves the arbitrary-codebook entropy lemma, the statistical
upper and lower bounds, and the power-law corollary in that order.

\FullLemJointCodebook
\begin{proof}
First restrict to a finite coordinate set \(A\).  Write
\[
  d_j\defeq p_j\E(\Theta_j-\widehat\Theta_j)^2.
\]
For any \(S\subseteq A\), data processing and the codebook cardinality imply
\begin{equation}
  B\log2\ge I(\Theta_S;\widehat\Theta).
  \label{eq:app-data-processing}
\end{equation}
Independence gives \(h(\Theta_S)=\sum_{j\in S}\log(2\Delta_j)\).
Moreover,
\[
\begin{aligned}
  h(\Theta_S\mid\widehat\Theta)
  &=
  h(\Theta_S-\widehat\Theta_S\mid\widehat\Theta)\\
  &\le h(\Theta_S-\widehat\Theta_S)\\
  &\le\sum_{j\in S}h(\Theta_j-\widehat\Theta_j)\\
  &\le\frac12\sum_{j\in S}
  \log\!\left(2\pi e\,\frac{d_j}{p_j}\right),
\end{aligned}
\]
where the last line is the Gaussian maximum-entropy inequality at fixed
second moment.  To see it directly, compare the error density to the centered
Gaussian of variance \(d_j/p_j\): nonnegativity of relative entropy gives
\(h(X)\le\frac12\log(2\pi d_j/p_j)+\E X^2/(2d_j/p_j)\).
If an entropy is \(-\infty\), the same bound follows by approximation.
Zero mean-square error on a nondegenerate uniform coordinate would require
infinite information, so it cannot occur for a finite codebook.
All information and entropy expressions here use natural logarithms.
Therefore
\begin{equation}
  I(\Theta_S;\widehat\Theta)
  \ge
  \frac12\sum_{j\in S}
  \log\frac{2q_j}{\pi e\,d_j}.
  \label{eq:app-entropy-lower}
\end{equation}

Choose \(S=\{j:d_j<2q_j/(\pi e)\}\) and define
\[
  b_j=
  \frac{1}{2\log2}
  \left[\log\frac{2q_j}{\pi e\,d_j}\right]_+.
\]
Equations \eqref{eq:app-data-processing}--\eqref{eq:app-entropy-lower}
give \(\sum_jb_j\le B\).  On \(S\),
\[
  d_j=\frac{2}{\pi e}q_j2^{-2b_j},
\]
while off \(S\), \(b_j=0\) and
\(d_j\ge(2/(\pi e))q_j\).  Hence
\[
  \sum_{j\in A}d_j
  \ge
  \frac{2}{\pi e}
  \inf_{\substack{b_j\ge0\\\sum_{j\in A}b_j\le B}}
  \sum_{j\in A}q_j2^{-2b_j}.
\]

For countably many coordinates, let \(A_m\uparrow\N\) and call the finite
variational value \(D_{A_m}(B)\).  Clearly
\(D_{A_m}(B)\le D_q(B)\).  Extending an optimizer on \(A_m\) by zeros gives
\[
  D_q(B)\le D_{A_m}(B)+\sum_{j\notin A_m}q_j.
\]
Since \(\sum_jq_j<\infty\), \(D_{A_m}(B)\uparrow D_q(B)\).  Monotone
convergence of the distortion completes the proof.  Randomized encoders do
not improve the Bayes risk: condition on the encoder seed and average the
deterministic bound.
\end{proof}

\FullThmFiniteBit
\begin{proof}
We proceed in four steps.

\paragraph{Step 1: loss curvature.}
The Bernoulli log-partition function \(A(t)=\log(1+e^t)\) satisfies
\[
  A''(t)=\sig(t)(1-\sig(t)).
\]
On \([-R,R]\), this derivative lies between two positive constants.
The excess cross-entropy of logit \(\widehat\theta\) is the Bregman
divergence of \(A\), or equivalently the Kullback--Leibler (KL) divergence,
which we denote by \(\KL\).  Taylor's theorem therefore yields
\begin{equation}
  c_R(\theta-\widehat\theta)^2
  \le
  \KL(\Ber(\sig(\theta))\,\|\,\Ber(\sig(\widehat\theta)))
  \le
  C_R(\theta-\widehat\theta)^2.
  \label{eq:app-curvature}
\end{equation}
Clipping a decoded logit to \([-R,R]\) also cannot increase cross-entropy:
the derivative with respect to the decoded logit \(z\) is
\(\sig(z)-\sig(\theta)\), which has the sign of \(z-\theta\).
Clipping preserves the decoder's cardinality.  Thus both minimax bounds
can be proved using bounded decoded logits and weighted squared error.

\paragraph{Step 2: statistical upper bound.}
Let \(N_j\sim\operatorname{Binomial}(n,p_j)\) be the training count of mode
\(j\).  If \(N_j\ge1\), project the empirical Bernoulli mean onto
\([\sig(-\Delta_j),\sig(\Delta_j)]\) and apply the logit map.  Projection
cannot increase probability error, and the logit is Lipschitz on
\([\sig(-R),\sig(R)]\).  Thus its conditional mean-square error is at most
\(C_R/N_j\).  The zero estimator has error at most \(\Delta_j^2\).  Choose
between them using the observed count, yielding
\begin{equation}
  \E[(\widehat\theta_j-\theta_j)^2\mid N_j]
  \le
  C_R'\min\{\Delta_j^2,(N_j+1)^{-1}\}.
  \label{eq:app-count-bound}
\end{equation}
The identity
\begin{equation}
  \E\frac1{N_j+1}
  =
  \frac{1-(1-p_j)^{n+1}}{(n+1)p_j}
  \le\frac1{(n+1)p_j}
  \label{eq:app-binomial}
\end{equation}
follows by integrating the binomial generating function.  Therefore
\[
\begin{aligned}
  p_j\E(\widehat\theta_j-\theta_j)^2
  &\le
  C_R'
  \min\left\{
    p_j\Delta_j^2,\,
    p_j\E\frac1{N_j+1}
  \right\}\\
  &\le C_R''\min\{q_j,n^{-1}\}.
\end{aligned}
\]
Summing proves the \(S_q(n)\) upper bound.

Now select an allocation within an arbitrarily small additive error of
\(D_q(B)\), round each positive \(b_j\) down to an integer, and uniformly
quantize the clipped estimate on \([-\Delta_j,\Delta_j]\).  Flooring uses at
most \(B\) bits and enlarges \(2^{-2b_j}\) by at most four.  By the squared
triangle inequality and \cref{eq:app-curvature},
\begin{equation}
  \Excess^*(n,B)
  \le C_R\{S_q(n)+D_q(B)\}.
  \label{eq:app-upper}
\end{equation}

\paragraph{Step 3: statistical lower bound.}
Fix a finite coordinate set \(A\).  On coordinate \(j\in A\), use the two
parameters \(\pm\delta_j\), where
\[
  \delta_j
  =c_R'\min\{\Delta_j,(np_j)^{-1/2}\}
\]
and \(c_R'>0\) is sufficiently small.  Neighboring hypercube vertices differ
only when \(J=j\), and bounded Bernoulli curvature gives
\[
  \KL(P_{\theta}^{\,n}\,\|\,P_{\theta^{(j)}}^{\,n})
  \le C_Rnp_j\delta_j^2\le C_R(c_R')^2.
\]
Here the design distribution is independent of the hypercube signs.  The
one-block KL is therefore exactly \(p_j\) times the changed coordinate's
Bernoulli KL, and independence of blocks multiplies it by \(n\).
Choose \(c_R'\) so that this product KL is at most \(1/8\).
For completeness, fix all signs except \(j\) and test the remaining sign by
the sign of \(\widehat\theta_j\).  A wrong sign incurs squared error at
least \(\delta_j^2\).  The sum of the two testing errors is at least
\(1-\operatorname{TV}(P_+,P_-)\ge3/4\), since
\(\operatorname{TV}(P_+,P_-)\le\sqrt{\KL(P_+\|P_-)/2}\).
This inequality follows by partitioning at the positive part of the density
difference, applying the log-sum inequality, and using
\(\mathrm{kl}(a,b)\ge2(a-b)^2\); the latter follows from its second
derivative in \(a\), \(1/[a(1-a)]\ge4\).
Averaging over the uniform signs and summing the coordinate losses yields
\begin{equation}
  \inf_{\widehat\theta}\sup_\theta
  \E\sum_{j\in A}p_j(\widehat\theta_j-\theta_j)^2
  \ge
  c_R\sum_{j\in A}\min\{q_j,n^{-1}\}.
  \label{eq:app-assouad}
\end{equation}
The resulting constant is independent of the size of \(A\).  Since the
full risk dominates each finite-coordinate risk, taking the supremum over
increasing finite sets gives the \(S_q(n)\) lower bound; no interchange of
an infinite supremum and a prior integral is needed.

\paragraph{Step 4: representation lower bound.}
Give the encoder \(\theta\) in addition to the data.  This enlarges the
encoder class and can only reduce its minimax risk.  Under the
product-uniform prior on the parameter box, \cref{lem:joint-codebook} gives
\begin{equation}
  \inf_{\abs{\operatorname{range}(\widehat\theta)}\le2^B}
  \sup_\theta
  \E\sum_jp_j(\widehat\theta_j-\theta_j)^2
  \ge cD_q(B).
  \label{eq:app-representation-lower}
\end{equation}
The original constrained risk is at least both
\cref{eq:app-assouad,eq:app-representation-lower}.  Since
\(\max\{x,y\}\ge(x+y)/2\), \cref{eq:app-curvature} proves the matching
lower bound in \cref{eq:finite-bit-law}.
\end{proof}

\FullCorPowerProfile
\begin{proof}
The comparison assumption does not require the actual sequence \(q_j\) to
be monotone.  Put \(\bar q_j=j^{-(1+\chi)}\), so that
\(c_-\bar q_j\le q_j\le c_+\bar q_j\).
Taking infima of the coordinatewise distortion inequalities gives
\[
c_-D_{\bar q}(B)\le D_q(B)\le c_+D_{\bar q}(B).
\]
Similarly, \(\min\{cq,t\}\) lies between
\(\min\{c,1\}\min\{q,t\}\) and \(\max\{c,1\}\min\{q,t\}\).
It suffices to prove both rates for \(\bar q\).
Let \(m_n=\lfloor n^{1/(1+\chi)}\rfloor\).  Then
\[
  S_{\bar q}(n)
  \asymp
  \frac{m_n}{n}+\sum_{j>m_n}j^{-(1+\chi)}
  \asymp n^{-\chi/(1+\chi)}.
\]

For \(D_{\bar q}(B)\), the water-filling solution gives
\[
  b_j=\frac12[\log_2(\bar q_j/\tau)]_+.
\]
The comparison profile is monotone, so its active set is a prefix \([m]\).
For \(m\ge1\), \((m+1)^{-(1+\chi)}\le\tau\le m^{-(1+\chi)}\).
Writing \(v=\tau^{-1/(1+\chi)}\in[m,m+1]\),
\[
  B
  =
  \frac{1+\chi}{2\ln2}\{m\ln v-\ln(m!)\}
  =
  \Theta(m)
\]
as \(m\to\infty\), by Stirling's formula and
\(0\le m\ln(v/m)\le1\).  Active distortion is \(m\tau\asymp m^{-\chi}\),
and the inactive tail has the same order.  Thus \(D_q(B)\asymp B^{-\chi}\).
The two terms cross when
\(B^{-\chi}\asymp n^{-\chi/(1+\chi)}\), namely
\(B\asymp n^{1/(1+\chi)}\).
\end{proof}

\section{Proof of Adaptation with a Known Ordering}
\label{app:adaptive}

This appendix verifies that one rounded schedule works uniformly over the
specified exponent interval.

\FullThmAdaptive
\begin{proof}
The estimator in the statistical upper bound of \cref{thm:finite-bit} uses
only \(N_j\), \(\Delta_j\), and the observed labels, not \(\chi\).  It
contributes \(S_q(n)\asymp n^{-\chi/(1+\chi)}\).

Choose \(A>\chi_{\max}/2\).  Before rounding, the schedule in
\cref{eq:universal-schedule} uses
\[
  A\sum_{j\le m}\log_2(m/j)
  =
  A\log_2(m^m/m!)
  =
  \Theta(m).
\]
Flooring cannot increase the budget.  It also preserves a linear lower
bound: every \(j\le m2^{-2/A}\) receives at least two unrounded bits and
therefore at least one rounded bit.  Hence the rounded total is
\(\Theta(m)\), and the maximal feasible \(m\) satisfies \(m\asymp B\).

Flooring enlarges each nonzero distortion by at most four.  Consequently,
uniformly over \(\chi\le\chi_{\max}\),
\[
\begin{aligned}
  \sum_{j\le m}q_j2^{-2b_j}
  &\lesssim
  m^{-2A}\sum_{j\le m}j^{2A-1-\chi}\\
  &\lesssim m^{-\chi},
\end{aligned}
\]
because \(2A-\chi\ge2A-\chi_{\max}>0\): integral comparison bounds the
power sum by \(C m^{2A-\chi}/(2A-\chi)\), with one constant throughout the
interval.  Likewise the tail constants are uniform because
\(\chi\ge\chi_{\min}>0\).  For the finitely many budgets before the
linear bound on the rounded schedule applies, enlarge the same constant;
\(B\in\N\) is positive.  The omitted tail is
\(\sum_{j>m}q_j\asymp m^{-\chi}\).  Since \(m\asymp B\), the
representation contribution is \(O(B^{-\chi})\), proving
\cref{eq:adaptive-rate}.
\end{proof}

\section{Proofs for Joint Difficulty}
\label{app:joint-summary}

This appendix proves the same-marginal separation and then the positive
joint-summary theorem.

\FullThmMarginalsFail
\begin{proof}
Fix \(\delta>\gamma\chi\).  At each level \(\ell\ge1\), create two groups,
each of multiplicity \(2^\ell\).  Their energies are
\[
  q_\ell^{\rm H}=2^{-(1+\chi)\ell},
  \qquad
  q_\ell^{\rm L}=2^{-(1+\chi+\delta)\ell},
\]
and the level's cost multiset contains \(2^\ell\) copies of \(1\) and
\(2^\ell\) copies of \(2^{\gamma\ell}\).  The easy instance pairs
\(q_\ell^{\rm H}\) with unit cost; the hard instance pairs it with
\(2^{\gamma\ell}\).  The energy and cost multisets therefore agree at every
level.

Consider a generic component of multiplicity \(2^\ell\), per-item energy
\(2^{-(1+s)\ell}\), and per-item cost \(2^{g\ell}\).  Buying the component
through level \(L\) costs
\[
  \sum_{\ell\le L}2^{(1+g)\ell}
  =\Theta(2^{(1+g)L}),
\]
and leaves energy
\[
  \sum_{\ell>L}2^{-s\ell}
  =\Theta(2^{-sL}).
\]
Thus its selection exponent is \(s/(1+g)\).  This prefix is also a matching
lower bound: within the component, energy per cost decreases with \(\ell\),
so the fractional relaxation is a threshold policy, and every integral
policy is no better.

In the easy instance, the high-energy component has exponent \(\chi\), while
the low-energy component has exponent
\((\chi+\delta)/(1+\gamma)>\chi\).  Splitting the budget by fixed fractions
therefore gives, and the high-energy relaxation lower-bounds,
\(\mathcal A_{\rm easy}(C)\asymp C^{-\chi}\).  In the hard instance, the
high-energy component has exponent \(\chi/(1+\gamma)\), while the low-energy
unit-cost component has exponent \(\chi+\delta\).  Hence
\(\mathcal A_{\rm hard}(C)\asymp C^{-\chi/(1+\gamma)}\).

For continuous bits, the optimality condition on an active coordinate is
\[
  \frac{q_j2^{-2b_j}}{c_j}=\tau.
\]
For the generic component above, the active levels form a prefix through
\(L\).  At level \(\ell\), \(b_\ell\) is proportional to \(L-\ell\);
therefore
\[
  \sum_{\ell\le L}2^{(1+g)\ell}b_\ell
  =\Theta(2^{(1+g)L}).
\]
Both active distortion and the inactive tail are
\(\Theta(2^{-sL})\).  The same exponent \(s/(1+g)\) follows, and applying it
to the four components completes the proof.
\end{proof}

\FullThmJointSpectrum
\begin{proof}
For a finite collection, equality of
\(\eta=\sum_j\delta_{(v_j,a_j,c_j,\kappa_j)}\) is equality of the multisets
of tuples, including multiplicities.  A bijection between equal multisets is
a permutation of mode labels, and every permutation preserves \(\eta\).
Thus \(\eta\) is a maximal invariant.

For the separable allocation, put \(r_j=a_j/c_j\).  The Lagrangian is
\[
  \sum_jc_jr_j2^{-2b_j}
  +\lambda\left(\sum_jc_jb_j-B\right).
\]
The coordinatewise Karush--Kuhn--Tucker equations give
\[
  b_j(\tau)
  =
  \frac12\log_2(r_j/\tau)_+,
  \qquad
  \tau=\frac{\lambda}{2\log2}.
\]
Substitution yields
\[
  B(\tau)
  =
  \frac{1}{2\log2}\sum_jc_j\log(r_j/\tau)_+,
\]
and
\[
  D(B(\tau))
  =
  \sum_jc_j\min\{r_j,\tau\}.
\]
Here is a direct countable-dimensional justification.  Set
\(A=\sum_j a_j<\infty\).  For every \(\tau>0\),
\[
\sum_{r_j>\tau}c_j\le A/\tau,
\qquad
B(\tau)\le\frac{A}{2(\ln2)\tau},
\]
where the second inequality uses \([\ln x]_+\le x\).
On any compact interval of positive water levels, the summands defining
\(B(\tau)\) have a summable dominating bound proportional to \(a_j\).
Hence \(B\) is continuous.  It is strictly decreasing whenever it is
positive, tends to zero at the upper end of its support (possibly infinity),
and tends to infinity as \(\tau\downarrow0\), since even one positive
coordinate does.  Thus every \(B>0\) has a unique water level.
The scalar convex inequality
\[
a_j2^{-2b_j}+\lambda c_jb_j
\ge a_j2^{-2b_j(\tau)}+\lambda c_jb_j(\tau)
\]
can be summed directly: all distortion and feasible cost sums are finite.
With \(\lambda=2(\ln2)\tau\) and \(B(\tau)=B\), it proves optimality
over every feasible countable allocation.  Layer integration now gives
\cref{eq:spectrum-integrals}, proving sufficiency of \(\mu\).

To recover the spectrum, let \(0<B_1<B_2\) and put
\(\lambda_i=2(\ln2)\tau(B_i)\).  Applying optimality with each of the two
Lagrange multipliers to the other's solution gives
\[
-\lambda_1\le
\frac{D(B_2)-D(B_1)}{B_2-B_1}
\le-\lambda_2.
\]
The inverse \(\tau(B)\) is continuous on \((0,\infty)\).  Letting the two
budgets coalesce therefore proves, in the countable system itself,
\[
  D'(B)=-\lambda=-2\log2\,\tau(B)
\]
at every \(B>0\).  Hence the complete curve determines \(\tau(B)\) and its
inverse \(B(\tau)\), with the zero branch above a finite support endpoint.
For \(s=\ln\tau\), difference quotients have a summable dominating bound:
only \(r_j>e^{s-h}\) contribute, whose total cost is finite.
Thus the right derivative is
\[
  \frac{\dd_+}{\dd s}B(e^s)
  =
  -\frac{1}{2\log2}
  \sum_{j:r_j>e^s}c_j
  =
  -\frac{1}{2\log2}\mu((e^s,\infty))
\]
and the left derivative replaces \(r_j>e^s\) by \(r_j\ge e^s\).
Their difference recovers every atom's weight; the tail recovers the measure
on every positive interval.  The endpoint \(D(0)=A\) follows by dominated
convergence from \(B\downarrow0\).  This argument never differentiates a
limit of finite-dimensional value functions.

Finally, suppose a summary \(T\) determines the complete curve.  If
\(T(x)=T(y)\), then \(D_x(B)=D_y(B)\) for every \(B\), so the inversion above
gives \(\mu_x=\mu_y\).  Therefore \(\mu\) is a function of \(T\), proving
minimality in the stated sense.
\end{proof}

\begin{proof}[Proof of \cref{cor:joint-tail}]
Put \(\bar a_j=j^{-\alpha}\) and \(\bar c_j=j^\gamma\), with
\(a_-\bar a_j\le a_j\le a_+\bar a_j\) and
\(c_-\bar c_j\le c_j\le c_+\bar c_j\).  Feasible-set inclusion gives
\[
a_-D_{\bar a,\bar c}(B/c_-)\le D_{a,c}(B)
\le a_+D_{\bar a,\bar c}(B/c_+).
\]
It suffices to use the monotone ratio
\(\bar a_j/\bar c_j=j^{-(\alpha+\gamma)}\).
Since \(\alpha>1\) and \(\gamma>-1\), its exponent is positive.
If \(m\) modes are active, put
\(v=\tau^{-1/(\alpha+\gamma)}\in[m,m+1]\).  The budget is
\[
  B=\frac{\alpha+\gamma}{2\ln2}
  \sum_{j\le m}j^\gamma\ln(v/j)
  \asymp m^{1+\gamma}.
\]
For the lower bound restrict to \(m/4\le j\le m/2\); for the upper bound
use integral comparison and
\(\int_0^1x^\gamma\ln(1/x)\,\mathrm dx=(1+\gamma)^{-2}\).
Replacing \(m\) by \(v\) in the logarithm adds only \(O(m^\gamma)\).
The inactive tail is \(\asymp m^{1-\alpha}\).  The active distortion is
\(\tau\sum_{j\le m}c_j\), with
\(\tau\asymp m^{-(\alpha+\gamma)}\), and has the same order.
Eliminating \(m\) gives
\[
  D(B)\asymp B^{-(\alpha-1)/(1+\gamma)}.
\]
\end{proof}

\section{Proof of the Finite-Precision Realization}
\label{app:transformer}

This appendix verifies the exact matrix map, the floating-point perturbation,
and the lower bound for the stated separable representation.

We use the candidate/query encoding and projections of
\cref{sec:transformer,eq:attention-map}. The supplied logit table is
\(\vartheta_j=w_j^{\rm mode}\) for \(j\le M\), extended by zero outside
the represented prefix; \(\theta_j\) remains the source parameter.

\FullThmTransformer
\begin{proof}
For the score matrix \(U=W_Q^\top W_K/\sqrt{d_h}\) of the stated
construction,
\[
  x_q^\top Ux_t
  =
  \begin{cases}
    \beta,&t=t_\star,\\
    0,&t\ne t_\star.
  \end{cases}
\]
Softmax therefore gives \cref{eq:attention-map}.  Since
\(\sum_{t\ne t_\star}\alpha_t=1-\alpha_\star\) and
\(\abs{\vartheta_j}\le R\),
\[
\begin{aligned}
  \abs{z-\vartheta_{j_\star}}
  &=
  \abs{
    \sum_{t\ne t_\star}\alpha_t
    (\vartheta_{j_t}-\vartheta_{j_\star})
  }\\
  &\le2R(1-\alpha_\star)\\
  &=
  2R\frac{K-1}{e^\beta+K-1}\\
  &\le2R(K-1)e^{-\beta}.
\end{aligned}
\]

Uniformly quantizing \([-\Delta_j,\Delta_j]\) with \(2^{b_j}\) values gives
\(\abs{\widetilde w_j^{\rm mode}-w_j^{\rm mode}}\le2\Delta_j2^{-b_j}\)
for \(j\in[M]\).  Write \(\widetilde\vartheta_j=\widetilde w_j^{\rm mode}\)
on \([M]\) and \(\widetilde\vartheta_j=0\) otherwise.  Distractor
quantization contributes at most
\(2R(1-\alpha_\star)\), so it is absorbed by the routing term.

It remains to justify the arithmetic term.  Stable max shifting replaces the
scores \(s_t\in\{0,\beta\}\) by
\(\bar s_t=s_t-\beta\in\{-\beta,0\}\) without changing the exact softmax.
Let \(y_t=e^{\bar s_t}\) and \(h=\ceil{\log_2K}\).  By assumption, the computed
unnormalized weight is
\[
  \widehat y_t=y_t(1+\delta_t),
  \qquad\abs{\delta_t}\le u.
\]
Balanced positive summation gives
\[
  \widehat S
  =
  \left(\sum_t\widehat y_t\right)(1+\delta_S),
  \qquad
  \abs{\delta_S}\le\gamma_h
  \defeq\frac{hu}{1-hu}.
\]
Including the final division, the relative-error factors can be bounded
explicitly.  Put \(\bar\delta=\sum_t\alpha_t\delta_t\), so
\(|\bar\delta|\le u\), and let \(|\delta_{D,t}|\le u\) be the division
error.  Then
\[
\frac{\widehat\alpha_t}{\alpha_t}
=\frac{(1+\delta_t)(1+\delta_{D,t})}
{(1+\bar\delta)(1+\delta_S)}.
\]
When \((h+4)u\le1/4\), we have \(\gamma_h\le4hu/3\), and both denominator
factors are bounded away from zero.  Subtracting numerator and denominator
and bounding the linear and quadratic terms gives
\(|\widehat\alpha_t/\alpha_t-1|\le8(h+4)u\).
Consequently
\[
  \widehat\alpha_t
  =
  \alpha_t(1+\varepsilon_t),
  \qquad
  \abs{\varepsilon_t}\le8(h+4)u.
\]
Summing against \(\alpha_t\) proves
\cref{eq:softmax-forward-error}.

For the final dot product, each multiplication has relative error at most
\(u\), and balanced signed summation has absolute error at most a
\(\gamma_h\) multiple of the sum of magnitudes.  Since
\[
  \sum_t\abs{\widetilde\vartheta_{j_t}}\widehat\alpha_t
  \le R\{1+O(u(1+\log K))\},
\]
we obtain \cref{eq:output-forward-error}.  For every matching mode, represented
or not, combining the contributions gives
\[
  \abs{\widehat z-\theta_{j_\star}}
  \le
  4R(K-1)e^{-\beta}
  +2\Delta_{j_\star}2^{-b_{j_\star}}\mathbf1\{j_\star\le M\}
  +|\theta_{j_\star}|\mathbf1\{j_\star>M\}
  +C_Ru(1+\log K).
\]
The two mode-specific terms are mutually exclusive.  The squared sum is
therefore bounded by a constant times routing error, arithmetic error, and
their squared mode-specific sum.  Averaging over the selected coordinate
adds \(\sum_{j\le M}q_j2^{-2b_j}+\sum_{j>M}q_j\).
An unrepresented matching mode can still attend to represented distractors;
that contribution is already bounded by the routing term above, rather than
being assumed zero.  Finally \(A''(z)\le1/4\) for every real logit, so the
upper KL curvature bound applies even if floating-point error moves
\(\widehat z\) slightly outside the original box.  This proves
\cref{eq:fp-bound}.

For the converse, restrict to the separable scalar table.  Coordinate \(j\)
has at most \(2^{b_j}\) reconstruction values.  Under the same product-uniform
prior on the coordinate logits, the one-dimensional specialization of
\cref{lem:joint-codebook} gives Bayes weighted squared error at least
\(cq_j2^{-2b_j}\) on each coordinate.  Expectations add under this one
common prior, so the sum lower-bounds worst-case scalar-table distortion;
it is not a pointwise assertion about every parameter.  Apply the lower
curvature bound to the clipped table decoder to obtain the stated tightness
within that representation.  The construction stores
one \(d\times d\) score matrix, one \(d\times d\) value matrix, and one
\(d\)-coordinate head, hence \(N=2d^2+d\).
\end{proof}

\section{Proof of the Transformer Bit-Complexity Lemma}
\label{app:bit-complexity}

This appendix counts the dense forward--backward graph stated in
\cref{sec:fb-count}. Its sequence length, hidden width, feed-forward width,
head count, and depth are \(\ell,d,m_{\rm ff},H,L\), respectively. The
precision indices specify the operand, accumulator, and stored-activation
formats whose costs are counted.

\FullLemBitComplexity
\begin{proof}
Suppressing constant head reshapes, the four query, key, value, and output
projections perform \(4\ell d^2\) multiply--accumulates per layer.  Let
\(Q_{\rm mat}\), \(K_{\rm mat}\), and \(V_{\rm mat}\) denote the query, key,
and value matrices, and let \(P\) denote the matrix of attention probabilities.
The query--key score product \(Q_{\rm mat}K_{\rm mat}^\top\) and
probability--value product \(PV_{\rm mat}\) each perform
\(\ell^2d\), and the two feed-forward maps perform
\(2\ell dm_{\rm ff}\).  Thus the forward matrix count is
\begin{equation}
  F_{\rm mat}
  =4\ell d^2+2\ell^2d+2\ell dm_{\rm ff}.
  \label{eq:app-forward-matmul}
\end{equation}
For any product \(Y=AB\), reverse mode forms
\(\nabla_A=\nabla_YB^\top\) and
\(\nabla_B=A^\top\nabla_Y\).  Each has the same scalar
multiply--accumulate count as the forward product.  Consequently the matrix
part of one forward--backward pass is exactly three times
\cref{eq:app-forward-matmul}, up to additions, transposes, and casts.

There are \(H\ell\) softmax rows of length \(\ell\).  Stable max shifting,
exponentiation, reduction, division, and the reverse-mode row calculation
together cost \(\Theta(H\ell^2\mathsf S(b_{\rm sm}))\) by the definition of
\(\mathsf S\).  Given an incoming row gradient
\(v\), the softmax Jacobian is applied by
\[
  p\odot\bigl(v-\inner{p}{v}\mathbf 1\bigr),
\]
which uses a constant number of length-\(\ell\) passes.  No explicit
\(\ell\times\ell\) Jacobian is formed.  Multiplying by \(L\) and absorbing fixed projection
constants proves \cref{eq:fb-bitops}.  Normalization, pointwise activation,
dropout, and casts have only \(O(L\ell(d+m_{\rm ff})+LH\ell^2)\) scalar
entries and are the stated lower-order terms for this computation.

For memory, the dense projection and feed-forward weights contain
\(\Theta(L(d^2+dm_{\rm ff}))\) scalars.  Saved residual, projection, and
feed-forward activations contain
\(\Theta(L\ell(d+m_{\rm ff}))\) scalars.  Eager reverse mode also saves
\(LH\ell^2\) attention probabilities.  Multiplying each collection by its
aggregate bits per logical slot (including same-shaped peak gradient or
workspace buffers) and adding separately retained high-precision state
gives \cref{eq:fb-memory}.  A tiled online softmax stores row normalization
statistics and tiles rather than the complete probability matrix, so its
persistent attention state is linear in \(H\ell\); recomputing tiles during
backward does not change the dense arithmetic order.  These counts also give
the two-sided bounds because each named tensor and matrix product is present
in the specified graph.
\end{proof}

\section{Proof of the Compute--Precision Pareto Law}
\label{app:precision-tradeoff}

The proof distinguishes the precision required to match a target error from
the exact precision chosen by a finite-budget optimizer.

\FullThmPrecisionTradeoff
\begin{proof}
The matrix rounding term is at most a constant multiple of the model term
exactly when
\[
  M^{2r}2^{-2b}\lesssim M^{-\chi}.
\]
Taking base-two logarithms proves \cref{eq:matching-bits}; the two-sided risk
assumption gives necessity for matching the model-error scale.
For the dense graph, positive head dimensions sum to \(d\), so
\[
S(M)\asymp LH\ell^2\le Ld\ell^2\lesssim G(M).
\]
Hence \(g\ge s\).  The general-count branch uses
\cref{eq:mixed-risk-relations,eq:mixed-cost-relations} with independent
nonnegative exponents \(g,s\).

Suppose first that the matrix term dominates the bracket in
\cref{eq:mixed-cost-relations}.  Choose both numerical errors below a constant
multiple of \(M^{-\chi}\).  By \cref{eq:matching-bits},
\[
  C\asymp M^{1+\chi+g}(\log M)^\nu.
\]
Let \(\zeta=1+\chi+g\).  Substitution into
\(M^\zeta(\log M)^\nu\) gives cost within constant factors of \(C\) at
\[
  M\asymp C^{1/\zeta}(\log C)^{-\nu/\zeta}.
\]
Substituting into \(M^{-\chi}\) proves the upper bound in
\cref{eq:matrix-dominated-phase}.  For the converse, consider any feasible
configuration with risk at most \(\varepsilon\).  The risk relation gives
\[
M\gtrsim\varepsilon^{-1/\chi},\qquad
2^{-2b}\lesssim\varepsilon,\qquad
b\gtrsim\log(1/\varepsilon),
\]
using \(r\ge0\).  The matrix contribution alone forces
\[
C\gtrsim\varepsilon^{-\zeta/\chi}[\log(1/\varepsilon)]^\nu.
\]
Inverting proves the lower rate for every configuration.  For \(\nu=0\),
the logarithmic factor equals one and the same argument applies.

Now use general operation counts with \(s>g\), fixed \(b_{\rm sm}\), and
dominating softmax work
before its numerical floor.  Matching the matrix error still costs only
\(b=\Theta(\log M)\), but the leading cost is
\[
  C\asymp M^{1+\chi+s}b_{\rm sm}^{\nu_{\rm sm}}
  \asymp M^{1+\chi+s}.
\]
Within the range where
\(M^{2r_{\rm sm}}2^{-2b_{\rm sm}}\lesssim M^{-\chi}\), substitution proves
\cref{eq:softmax-dominated-phase}.  Conversely, risk at most \(\varepsilon\)
requires \(M\gtrsim\varepsilon^{-1/\chi}\), and the softmax cost is at
least a constant times \(M^{1+\chi+s}\).  Thus
\(C\gtrsim\varepsilon^{-(1+\chi+s)/\chi}\).
Since the leading term contains no \(b\), changing matrix
precision cannot change this exponent.  Equivalently, substituting
\(M\asymp C^{1/(1+\chi+s)}\) gives exactly the pre-floor budget condition in
the theorem statement.

Finally, at fixed \(b\), the model and matrix-rounding terms cross when
\[
  M^{-\chi}\asymp M^{2r}2^{-2b},
  \qquad
  M^{\chi+2r}\asymp2^{2b}.
\]
This gives \(M_b\) in \cref{eq:fixed-bit-floor}.  Choose softmax precision
so its error is at most the other terms.  If \(M\le M_b\), the model term
is at least a constant times \(M_b^{-\chi}\); if \(M\ge M_b\), the
matrix-rounding term is, since \(r\ge0\).  At \(M_b\) both have this order.
When \(r=0\), the comparison expression \(M^{-\chi}+2^{-2b}\) decreases
to \(2^{-2b}\); its infimum has the displayed order and \(M_b\) is the
crossover scale.  Replacing \(r,b\) by
\(r_{\rm sm},b_{\rm sm}\) proves \cref{eq:softmax-horizon}.  The fixed-share
ratios in \cref{eq:bitops-ratio,eq:memory-ratio} follow by dividing the
baseline cost or storage by its quantized counterpart.
\end{proof}

\section{Proof of the Compute Phases}
\label{app:compute}

This appendix derives each phase only from the two-sided resource assumptions in
\cref{sec:compute}.

\FullThmCompute
\begin{proof}
Matching numerical error to approximation error requires
\[
  \kappa(M)^22^{-2b}\lesssim M^{-\chi}.
\]
Taking base-two logarithms gives \cref{eq:precision-requirement}.  The
two-sided amplification assumption gives the matching
necessity.

If \(1\lesssim\kappa(M)\lesssim M^\rho\), then
\(b=\Theta(\log M)\).  Multiplying per-block work, number of blocks, and
bit-operation cost yields
\[
  C
  \asymp
  M^f M^{1+\chi}(\log M)^\nu
  =
  M^\zeta(\log M)^\nu,
  \qquad\zeta=f+1+\chi.
\]
Substitution into the preceding cost relation gives the budget-matched scale
\[
  M
  \asymp
  C^{1/\zeta}(\log C)^{-\nu/\zeta}.
\]
Substitution into \(M^{-\chi}\) proves the upper bound in
\cref{eq:poly-conditioning}.  Conversely, risk at most \(\varepsilon\)
and \(\kappa(M)\gtrsim1\) require
\(M\gtrsim\varepsilon^{-1/\chi}\) and
\(b\gtrsim\log(1/\varepsilon)\).  The necessary block count and
per-block work give
\[
C\gtrsim\varepsilon^{-(f+1+\chi)/\chi}[\log(1/\varepsilon)]^\nu.
\]
Inverting proves the lower bound, including \(\nu=0\).

If \(\kappa(M)=\exp(\Theta(M^\lambda))\) for \(\lambda>0\), then
\cref{eq:precision-requirement} gives \(b=\Theta(M^\lambda)\), so
\[
  C
  \asymp
  M^{f+1+\chi+\lambda\nu}.
\]
This gives the claimed upper rate.  Conversely, risk at most
\(\varepsilon\) forces \(M\gtrsim\varepsilon^{-1/\chi}\) and
\(b\gtrsim M^\lambda+\log(1/\varepsilon)\), whence
\[
C\gtrsim M^{f+1+\chi+\lambda\nu}
\gtrsim\varepsilon^{-(f+1+\chi+\lambda\nu)/\chi}.
\]
This proves the exponent in both directions.

For adaptive precision, set \(a_j=q_j\) and \(c_j=1\) in
\cref{cor:joint-tail}; then \(D(B)\asymp B^{-\chi}\).  A uniform-width
prefix of length \(M\) and width \(b\) uses \(B=Mb\) bits.  Its omitted tail
is \(\Theta(M^{-\chi})\), and its represented-coordinate quantization is
\(\Theta(2^{-2b})\) because \(\sum_{j\le M}q_j=\Theta(1)\).
Choose \(M\asymp B/\log B\) with a sufficiently small constant and
\(b=\lfloor B/M\rfloor\) for the upper bound in
\cref{eq:uniform-penalty}.  For the converse, set
\(T_B=(B/\log_2 B)^{-\chi}\) and fix \(L>2/\chi\).
If \(M\le LB/\log_2 B\), the tail is at least \(cL^{-\chi}T_B\).
Otherwise \(b\le B/M<(\log_2 B)/L\), so
\(2^{-2b}>B^{-2/L}\gtrsim T_B\) for sufficiently large \(B\).
Every uniform-width prefix therefore pays at least a constant times
\(T_B\), including zero-width prefixes.
\end{proof}

\section{Experimental Details}
\label{app:experiments}

This appendix describes the 1,024-mode synthetic study. Its five-seed
percentile intervals apply to this study; the other experiments use the
BCa procedures and seed or item units specified in their respective captions.

\paragraph{Source families.}
Each run uses \(1024\) modes and bounded logits.  For each
\(\chi\in\{0.5,1,2\}\), two different frequency--amplitude decompositions
produce the same power \(q_j\asymp j^{-(1+\chi)}\).  Training blocks are
independent regenerations.  Held-out cross-entropy is evaluated exactly from
the population probabilities, so no test-set Monte Carlo error is mixed into
the reported training-seed variability.

\paragraph{Grid and methods.}
The grid contains four values of \(n\), five values of \(B\), three
exponents, two decompositions, four allocation methods, and five seeds, for
\(4\cdot5\cdot3\cdot2\cdot4\cdot5=2400\) configuration--seed pairs. The
known-energy method solves the finite energy sequence's water-filling problem.
The universal method
uses \cref{eq:universal-schedule}.  Uniform width assigns a common precision
to a prefix, and shuffled allocation breaks the energy--precision pairing.
Allocations never spend precision beyond the statistical resolution of a
mode.

\paragraph{Uncertainty.}
For each of the \(480\) configurations, we report the mean over five seeds.
The plotted interval is the 2.5 and 97.5 percentile envelope from \(5000\)
bootstrap resamples of the seed mean, using bootstrap seed zero and expanding
the interval when necessary to contain the observed mean.

\paragraph{Reproducibility details.}
The raw rows, summaries, deterministic analytic curves, plotting sources, and
experiment code accompany the paper.  The experiments use local CPUs, one
common exponent specification across configurations, and analytic arithmetic cost as
the theoretical resource.

\section{Mixed Precision in Practice}
\label{app:mixed-precision}

The resource variables in \cref{sec:bit-complexity} deliberately separate
compute precision from storage precision.  In the asymptotic Pareto theorem,
\(b\) is an effective accuracy parameter with unit roundoff
\(\Theta(2^{-b})\), and the modeled format width is proportional to \(b\).
For a concrete floating-point format, total width and significand precision
must instead be distinguished: the numerical horizon uses the actual unit
roundoff, not the total number of stored bits.  This distinction is necessary
in real mixed-precision systems:
\begin{itemize}
  \item A softmax may compute its maximum, exponential, reduction, and division
  in 32-bit floating point (FP32) even when the two surrounding attention
  matrix products use lower precision.  NVIDIA Transformer Engine, for
  example, documents an 8-bit floating-point (FP8) attention path with FP8
  matrix multiplications and FP32 softmax \citep{nvidia2026transformerengine}.

  \item A matrix kernel may accumulate products in a precision wider than its
  input and output formats.  Accumulation precision therefore belongs in the
  arithmetic model; it does not imply that the complete attention
  probability tensor is stored at that precision.

  \item The softmax output can be cast back to a lower-precision format before
  the value product.  Conversely, an eager implementation may retain the
  probability tensor for backward.  These are distinct choices represented by
  softmax effective accuracy and \(b_{\rm st}\), respectively.

  \item Mixed-precision training may retain full-precision master weights and
  optimizer state even when forward weights, activations, and gradients use a
  narrower format \citep{micikevicius2018mixed}.  Such residual state is
  included in \(\operatorname{Mem}_{\rm hp}\), not hidden inside an effective
  average bit width.

  \item Tiled online softmax can avoid materializing the quadratic attention
  matrix by saving normalization statistics and recomputing attention tiles in
  backward \citep{dao2022flashattention,nvidia2026transformerengine}.  This
  changes the storage model and memory traffic, while dense attention still
  performs a quadratic number of score and value arithmetic operations.
\end{itemize}

These observations explain why \cref{eq:bitops-ratio,eq:memory-ratio} are
operation and storage ratios rather than throughput predictions.  Kernel fusion, memory
traffic, device-specific integer or floating-point units, communication, and
parallel scheduling can make the same bit-operation count run at different
wall-clock speeds.  Implementations can therefore differ in speed even when
these analytic counts agree.

%% file: arxiv/sections/appendix_extended_results.tex
\section{Extended Experimental Results}
\label{app:extended-results}

This appendix examines robustness to spectral shape and ordering, numerical
formats, and pretrained-model scale, and provides detailed slope estimates
and reference losses. Captions specify uncertainty for each stochastic quantity.

\subsection{Candidate-Count Replication and Exponent Grid}

The paired \(K=8\) versus \(K=32\) comparison in
\cref{fig:joint-legacy-replication} tests how candidate count affects the
finite-sample construction and trained learner. The profile-by-budget
comparison in \cref{fig:colt-exponent-agreement} tests exponent agreement.
\Cref{tab:spectrum-slopes} gives unbinned numerical slope estimates.
The two-stage study crosses four profiles and prefixes, three sample and state
budgets, \(K=8,32\), and eight paired seeds. Each sample budget restarts the
learner, whereas state budgets reuse the fitted weights. Repeated settings at the
minimum \(n=64\) are aggregated within seed before bootstrapping. Display bins
depend only on \(G_{\mu_M}\).

\begin{figure}[H]
\centering
\includegraphics[width=\linewidth]{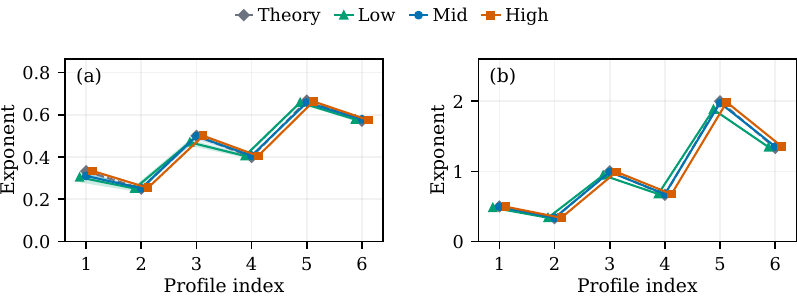}
\caption{\textbf{Six-profile exponent agreement.}
(a) Data-limited and (b) memory-limited fitted exponents for profile indices
\((0.5,0),(0.5,0.5),(1,0),(1,0.5),(2,0),(2,0.5)\). Green, blue, and orange
denote low, middle, and high fixed resource ratios; gray diamonds give the
theoretical exponent. Bands are 95\% BCa intervals over sixteen paired
slopes: eight seeds for each of two frequency--amplitude decompositions
(constant and alternating logit radius).}
\label{fig:colt-exponent-agreement}
\end{figure}

\subsection{Routing, Interventions, and Profile Robustness}

The routing measurements and learned/known/frozen comparison in
\cref{fig:mechanisms}(c,f) relate retrieval quality to prediction error.
The decomposition and interventions in \cref{tab:decomposition-main,tab:interventions-main}
isolate the error sources. \Cref{fig:revision-robustness} tests allocation
when the spectrum departs from an exact power law.

\begin{figure}[H]
\centering
\includegraphics[width=\linewidth]{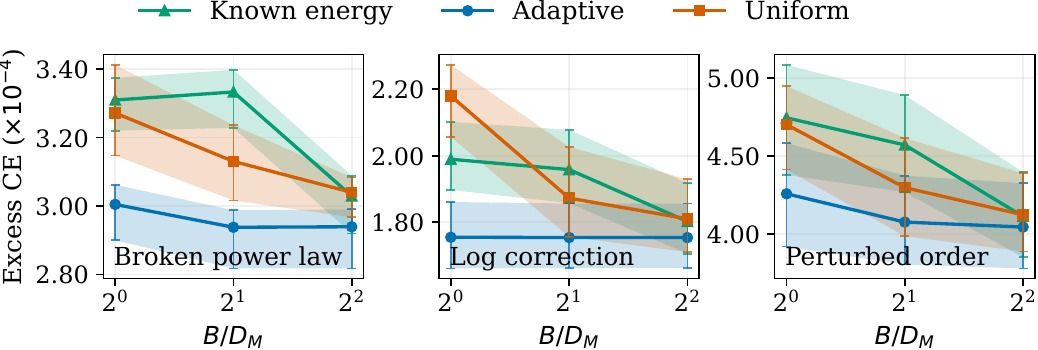}
\caption{\textbf{Allocation beyond exact power laws.}
Left, center, and right show a broken power law, logarithmic correction, and
perturbed ordering, respectively, at
\((\chi,\gamma,M)=(1,0.5,256)\).  Each panel compares known-energy, adaptive,
and uniform allocation. Known-energy uses the true \(q_i\); empirical allocation
uses the count-corrected scores in \cref{app:joint-resource-evaluation}.
Bands are 95\% BCa intervals over eight paired seeds.}
\label{fig:revision-robustness}
\end{figure}

\begin{table}[H]
\centering
\caption{\textbf{Routing and precision contributions to prediction loss.}
At \((\chi,\gamma,M,K)=(1,0.5,64,32)\), data ratio two and unit bit ratio:
augmented loss is \(L_{\mathrm{lr},q}+M^{-\chi}\); Route+ and Format+ are the positive
contrasts \(R_+\) and \(F_+\) defined in Appendix~\ref{app:joint-resource-evaluation}.
Lower is better. Construct. denotes the finite-sample construction.
Each \(\pm\) is the larger deviation to the 95\% BCa endpoints.}
\label{tab:revision-unified}
\small
\setlength{\tabcolsep}{6pt}
\IclrNineTable{interventions}{\input{generated/revision_unified_results}}
\end{table}

\FloatBarrier
\subsection{Stored Precision and Pretrained Transformers}

\begin{figure}[H]
\centering
\includegraphics[width=\linewidth]{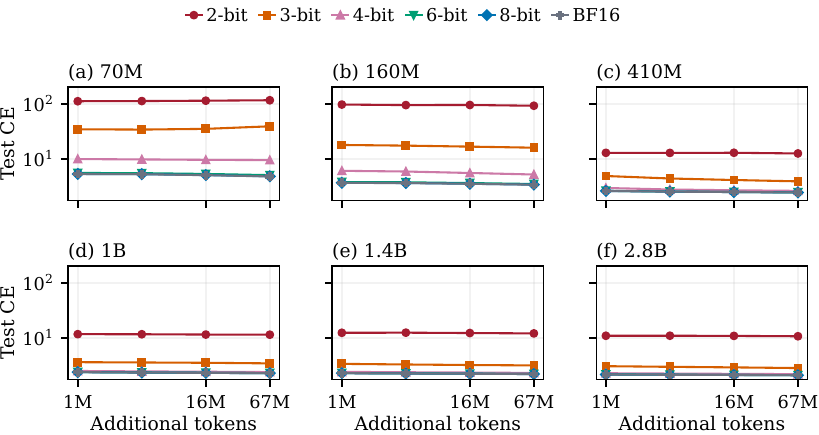}
\caption{\textbf{Complete additional-data and deployment-format scan.}
(a--f) Pythia 70M, 160M, 410M, 1B, 1.4B, and 2.8B test CE after four nested additional-token
budgets. Curves show complete group-128 2/3/4/6/8-bit deployments and BF16;
the shared logarithmic CE axis compares absolute loss and keeps low-bit collapse visible. Bands are 95\%
BCa intervals over eight paired training seeds evaluated on the same 96 test
windows.}
\label{fig:joint-pythia-full}
\end{figure}

At 70M and 160M, the 2-bit calibration CE reaches 114.67 and 97.78 nats,
respectively. These losses exceed the pre-specified \([0,20]\)-nat prediction
range, and the fitting procedure yields no converged, in-range solutions
when calibration includes 2-bit weights. At 410M--2.8B, all four response
functions yield such solutions. A sensitivity analysis fits 4/8-bit and
BF16 separately at every scale and predicts 6-bit loss and the largest
data budget. \Cref{tab:pythia-higher-bit-extrapolation,tab:pythia-higher-bit-interpolation,tab:pythia-all-format-extrapolation,tab:pythia-all-format-interpolation}
report both calibration choices; \cref{app:joint-resource-evaluation}
defines the response functions and evaluation sets.

\begin{figure}[H]
\centering
\includegraphics[width=0.80\linewidth]{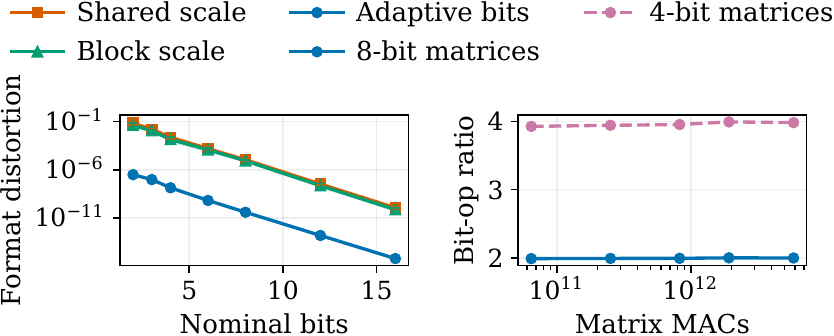}
\caption{\textbf{Stored precision and retained high-precision work.}
Left: format distortion versus nominal bits for shared-scale, block-scale,
and adaptive-bit representations of the causal source.  Right: analytic
bit-operation ratios for 8-bit and 4-bit matrix paths with FP32 softmax
retained.  The source-format curves use paired 95\% BCa intervals; the
right-hand curves evaluate the analytic operation-count formula.}
\label{fig:colt-precision-bridge}
\end{figure}

\begin{figure}[H]
\centering
\includegraphics[width=0.98\linewidth]{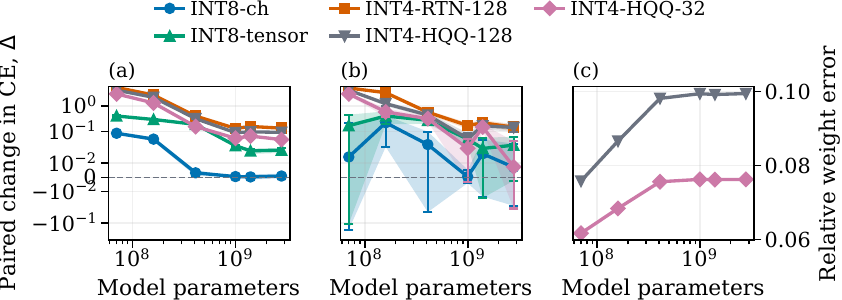}
\caption{\textbf{Finite precision across six Pythia scales.}
Left and center: paired cross-entropy changes on WikiText-103 (96 paired
windows) and LAMBADA (512 paired examples), respectively.  Right:
parameter-weighted relative Frobenius error for the two HQQ formats, averaging \(\|\widehat W-W\|_F/\|W\|_F\) over layers
with weights proportional to their parameter counts.  Blue circles,
green triangles, orange squares, gray down-triangles, and magenta diamonds
denote INT8-channel, INT8-tensor, INT4-RTN-g128, INT4-HQQ-g128, and
INT4-HQQ-g32, respectively.}
\label{fig:revision-pythia}
\end{figure}

\begin{table}[H]
\centering
\IclrPreparePair
  {{\footnotesize\setlength{\tabcolsep}{2.3pt}\input{generated/revision_pythia_bf16_compact}}}
  {{\footnotesize\setlength{\tabcolsep}{2.3pt}\input{generated/revision_spectrum_slopes}}}
\typeout{ICLR9TABLE pythia-bf16 natural=\the\wd\IclrPairA available=\the\IclrPairWidthA}
\typeout{ICLR9TABLE spectrum-slopes natural=\the\wd\IclrPairB available=\the\IclrPairWidthB}
\typeout{ICLR9PAIR secondary gap=\the\IclrPairGap total=\the\linewidth}
\caption{\textbf{Reference losses and spectral slopes.}
Left: Pythia BF16 CE over 96 WikiText-103 windows or 512 LAMBADA items;
bold/underlined marks the lowest/second-lowest value per dataset.
Right: slope against the spectral predictor (reference one), from eight
unbinned seed fits. Each \(\pm\) is the larger deviation to the 95\% BCa endpoints.}
\begin{subtable}[t]{\IclrPairWidthA}
\centering
\caption{Pythia BF16 baseline; lower is better.}
\label{tab:revision-pythia}
\resizebox{\IclrPairWidthA}{!}{\usebox{\IclrPairA}}
\end{subtable}\hspace{\IclrPairGap}%
\begin{subtable}[t]{\IclrPairWidthB}
\centering
\caption{Construction and trained-learner slopes.}
\label{tab:spectrum-slopes}
\resizebox{\IclrPairWidthB}{!}{\usebox{\IclrPairB}}
\end{subtable}
\end{table}

\input{generated/joint_scale_prediction_results}
\FloatBarrier

%% file: generated/revision_unified_results.tex
\begin{tabular}{lccc}
\toprule
Method & Aug. loss & Route+ & Format+ \\
\midrule
Two-stage & 0.0191 {\scriptsize $\pm 0.0002$} & 0.0000 {\scriptsize $\pm 0.0000$} & 0.0000 {\scriptsize $\pm 0.0000$} \\
Construct. & 0.0170 {\scriptsize $\pm 0.0002$} & 0.0002 {\scriptsize $\pm 0.0000$} & 0.0002 {\scriptsize $\pm 0.0001$} \\
Known route & 0.0192 {\scriptsize $\pm 0.0002$} & 0.0000 {\scriptsize $\pm 0.0000$} & 0.0000 {\scriptsize $\pm 0.0000$} \\
Frozen route & 0.0605 {\scriptsize $\pm 0.0008$} & 0.0339 {\scriptsize $\pm 0.0017$} & 0.0000 {\scriptsize $\pm 0.0000$} \\
No precond. & 0.0230 {\scriptsize $\pm 0.0003$} & 0.0003 {\scriptsize $\pm 0.0000$} & 0.0000 {\scriptsize $\pm 0.0000$} \\
Uniform bits & 0.0191 {\scriptsize $\pm 0.0004$} & 0.0000 {\scriptsize $\pm 0.0000$} & 0.0000 {\scriptsize $\pm 0.0000$} \\
\bottomrule
\end{tabular}

%% file: generated/revision_pythia_bf16_compact.tex
\begin{tabular}{lcc}
\toprule
Model & WikiText & LAMBADA \\
\midrule
70M & 6.201 {\scriptsize $\pm 0.047$} & 6.781 {\scriptsize $\pm 0.323$} \\
160M & 4.388 {\scriptsize $\pm 0.056$} & 4.965 {\scriptsize $\pm 0.339$} \\
410M & 2.971 {\scriptsize $\pm 0.059$} & 2.283 {\scriptsize $\pm 0.216$} \\
1B & 2.580 {\scriptsize $\pm 0.059$} & 1.504 {\scriptsize $\pm 0.179$} \\
1.4B & \underline{2.548} {\scriptsize $\pm 0.061$} & \underline{1.470} {\scriptsize $\pm 0.186$} \\
2.8B & \textbf{2.390} {\scriptsize $\pm 0.060$} & \textbf{1.346} {\scriptsize $\pm 0.170$} \\
\bottomrule
\end{tabular}

%% file: generated/revision_spectrum_slopes.tex
\begin{tabular}{lcc}
\toprule
Profile & Construct. & Trained \\
\midrule
$(0.5,0)$ & 0.971 {\scriptsize $\pm 0.040$} & 0.955 {\scriptsize $\pm 0.033$} \\
$(0.5,0.5)$ & 0.933 {\scriptsize $\pm 0.017$} & 0.855 {\scriptsize $\pm 0.028$} \\
$(1,0)$ & 0.969 {\scriptsize $\pm 0.029$} & 0.859 {\scriptsize $\pm 0.018$} \\
$(1,0.5)$ & 0.952 {\scriptsize $\pm 0.023$} & 0.879 {\scriptsize $\pm 0.026$} \\
\bottomrule
\end{tabular}

%% file: generated/joint_scale_prediction_results.tex
\paragraph{Predicting unseen data and storage budgets.}
With 4/8-bit and BF16 calibration, the additive model lowers prediction
error at the largest data budget for every scale
(\cref{tab:pythia-higher-bit-extrapolation}). For 6-bit interpolation, it
raises error at 70M/160M and lowers it at the four larger scales
(\cref{tab:pythia-higher-bit-interpolation}). Each paired 95\% BCa interval
for these additive-minus-data differences excludes zero. An interaction
further improves extrapolation at 160M, 410M and 1.4B; the corresponding
intervals at the other three scales include zero. The benefit of modeling
both resources therefore depends on scale and on which resource is extrapolated.
\Cref{tab:pythia-all-format-extrapolation,tab:pythia-all-format-interpolation}
give the predictions when calibration also includes 2-bit weights.

\begin{table}[htbp]
\centering
\caption{\textbf{Predicting the largest data budget from 4/8-bit and BF16 calibration.}
Root-mean-square error (RMSE, nats; lower is better) over 4/6/8-bit and BF16
test CE at \(2^{26}\) additional tokens. Each response function in
\cref{app:joint-resource-evaluation} is fitted on validation CE at the first
three data budgets. Eight paired training seeds and 96 fixed test windows
are used; 5,000 seed-level BCa resamples refit each function. Smaller
\(\pm\) values are rounded-up symmetric envelopes of the 95\% intervals.
Bold/underline mark the lowest/second-lowest displayed RMSE per row.}
\label{tab:pythia-higher-bit-extrapolation}
\small\setlength{\tabcolsep}{4pt}\begin{tabular}{lrrrr}\toprule Model & Data only & Storage only & Additive & Interaction \\\midrule

70M & \(2.060\,{\scriptstyle\pm\,0.061}\) & \(1.293\,{\scriptstyle\pm\,0.023}\) & \(\underline{1.151}\,{\scriptstyle\pm\,0.080}\) & \(\mathbf{1.149}\,{\scriptstyle\pm\,0.087}\) \\

160M & \(0.805\,{\scriptstyle\pm\,0.016}\) & \(0.619\,{\scriptstyle\pm\,0.034}\) & \(\underline{0.425}\,{\scriptstyle\pm\,0.037}\) & \(\mathbf{0.392}\,{\scriptstyle\pm\,0.036}\) \\

410M & \(0.086\,{\scriptstyle\pm\,0.004}\) & \(0.156\,{\scriptstyle\pm\,0.005}\) & \(\underline{0.058}\,{\scriptstyle\pm\,0.006}\) & \(\mathbf{0.054}\,{\scriptstyle\pm\,0.006}\) \\

1B & \(\underline{0.061}\,{\scriptstyle\pm\,0.003}\) & \(0.096\,{\scriptstyle\pm\,0.002}\) & \(\mathbf{0.041}\,{\scriptstyle\pm\,0.003}\) & \(\mathbf{0.041}\,{\scriptstyle\pm\,0.003}\) \\

1.4B & \(\underline{0.057}\,{\scriptstyle\pm\,0.003}\) & \(0.085\,{\scriptstyle\pm\,0.001}\) & \(\mathbf{0.033}\,{\scriptstyle\pm\,0.004}\) & \(\mathbf{0.033}\,{\scriptstyle\pm\,0.004}\) \\

2.8B & \(\underline{0.055}\,{\scriptstyle\pm\,0.003}\) & \(0.074\,{\scriptstyle\pm\,0.002}\) & \(\mathbf{0.034}\,{\scriptstyle\pm\,0.004}\) & \(\mathbf{0.034}\,{\scriptstyle\pm\,0.004}\) \\

\bottomrule\end{tabular}
\end{table}

\begin{table}[htbp]
\centering
\caption{\textbf{Predicting 6-bit loss from 4/8-bit and BF16 calibration.}
RMSE (nats; lower is better) over the first three data budgets, holding out
6-bit weights from calibration. Response functions, eight paired training
seeds, 96 fixed test windows, refitted 95\% BCa envelopes, and ranking
conventions match \cref{tab:pythia-higher-bit-extrapolation}.}
\label{tab:pythia-higher-bit-interpolation}
\small\setlength{\tabcolsep}{4pt}\begin{tabular}{lrrrr}\toprule Model & Data only & Storage only & Additive & Interaction \\\midrule

70M & \(\mathbf{1.304}\,{\scriptstyle\pm\,0.015}\) & \(1.845\,{\scriptstyle\pm\,0.021}\) & \(\underline{1.842}\,{\scriptstyle\pm\,0.023}\) & \(\underline{1.842}\,{\scriptstyle\pm\,0.023}\) \\

160M & \(\mathbf{0.649}\,{\scriptstyle\pm\,0.016}\) & \(0.732\,{\scriptstyle\pm\,0.017}\) & \(\underline{0.731}\,{\scriptstyle\pm\,0.017}\) & \(\underline{0.731}\,{\scriptstyle\pm\,0.017}\) \\

410M & \(\underline{0.093}\,{\scriptstyle\pm\,0.003}\) & \(0.094\,{\scriptstyle\pm\,0.003}\) & \(\mathbf{0.081}\,{\scriptstyle\pm\,0.003}\) & \(\mathbf{0.081}\,{\scriptstyle\pm\,0.003}\) \\

1B & \(\underline{0.054}\,{\scriptstyle\pm\,0.002}\) & \(0.055\,{\scriptstyle\pm\,0.002}\) & \(\mathbf{0.046}\,{\scriptstyle\pm\,0.002}\) & \(\mathbf{0.046}\,{\scriptstyle\pm\,0.002}\) \\

1.4B & \(0.053\,{\scriptstyle\pm\,0.002}\) & \(\underline{0.047}\,{\scriptstyle\pm\,0.001}\) & \(\mathbf{0.041}\,{\scriptstyle\pm\,0.002}\) & \(\mathbf{0.041}\,{\scriptstyle\pm\,0.002}\) \\

2.8B & \(0.051\,{\scriptstyle\pm\,0.002}\) & \(\underline{0.043}\,{\scriptstyle\pm\,0.002}\) & \(\mathbf{0.039}\,{\scriptstyle\pm\,0.002}\) & \(\mathbf{0.039}\,{\scriptstyle\pm\,0.002}\) \\

\bottomrule\end{tabular}
\end{table}

\begin{table}[htbp]
\centering
\caption{\textbf{Predicting the largest data budget with 2-bit weights included in calibration.}
RMSE (nats; lower is better) over all six deployment formats at \(2^{26}\)
additional tokens, fitted using 2/4/8-bit and BF16 validation CE at the
first three budgets. This bounded-response comparison covers 410M--2.8B.
The 70M/160M 2-bit calibration losses exceed its \([0,20]\)-nat range;
\cref{fig:joint-pythia-full} shows their measured quantization losses.
Eight paired seeds, 96 test windows, 95\% BCa envelopes and ranking
conventions match \cref{tab:pythia-higher-bit-extrapolation}.}
\label{tab:pythia-all-format-extrapolation}
\small\setlength{\tabcolsep}{4pt}\begin{tabular}{lrrrr}\toprule Model & Data only & Storage only & Additive & Interaction \\\midrule

410M & \(3.776\,{\scriptstyle\pm\,0.044}\) & \(1.354\,{\scriptstyle\pm\,0.008}\) & \(\underline{1.328}\,{\scriptstyle\pm\,0.030}\) & \(\mathbf{1.327}\,{\scriptstyle\pm\,0.036}\) \\

1B & \(3.426\,{\scriptstyle\pm\,0.032}\) & \(1.088\,{\scriptstyle\pm\,0.010}\) & \(\underline{1.050}\,{\scriptstyle\pm\,0.019}\) & \(\mathbf{1.045}\,{\scriptstyle\pm\,0.022}\) \\

1.4B & \(3.724\,{\scriptstyle\pm\,0.031}\) & \(1.094\,{\scriptstyle\pm\,0.011}\) & \(\underline{1.063}\,{\scriptstyle\pm\,0.020}\) & \(\mathbf{1.060}\,{\scriptstyle\pm\,0.028}\) \\

2.8B & \(3.255\,{\scriptstyle\pm\,0.024}\) & \(0.797\,{\scriptstyle\pm\,0.005}\) & \(\underline{0.772}\,{\scriptstyle\pm\,0.006}\) & \(\mathbf{0.771}\,{\scriptstyle\pm\,0.007}\) \\

\bottomrule\end{tabular}
\end{table}

\begin{table}[htbp]
\centering
\caption{\textbf{Predicting 3/6-bit loss with 2-bit weights included in calibration.}
RMSE (nats; lower is better) at the first three data budgets, holding out
3/6-bit weights from 2/4/8-bit/BF16 calibration. This comparison covers
410M--2.8B, with the calibration range and scale coverage described in
\cref{tab:pythia-all-format-extrapolation}. Eight paired training seeds,
96 test windows, 95\% BCa envelopes and ranking conventions match
\cref{tab:pythia-higher-bit-extrapolation}.}
\label{tab:pythia-all-format-interpolation}
\small\setlength{\tabcolsep}{4pt}\begin{tabular}{lrrrr}\toprule Model & Data only & Storage only & Additive & Interaction \\\midrule

410M & \(1.976\,{\scriptstyle\pm\,0.016}\) & \(1.573\,{\scriptstyle\pm\,0.032}\) & \(\mathbf{1.568}\,{\scriptstyle\pm\,0.033}\) & \(\underline{1.569}\,{\scriptstyle\pm\,0.033}\) \\

1B & \(1.870\,{\scriptstyle\pm\,0.008}\) & \(\underline{1.512}\,{\scriptstyle\pm\,0.014}\) & \(\mathbf{1.511}\,{\scriptstyle\pm\,0.014}\) & \(\mathbf{1.511}\,{\scriptstyle\pm\,0.014}\) \\

1.4B & \(2.152\,{\scriptstyle\pm\,0.014}\) & \(\underline{1.585}\,{\scriptstyle\pm\,0.012}\) & \(\mathbf{1.584}\,{\scriptstyle\pm\,0.012}\) & \(\mathbf{1.584}\,{\scriptstyle\pm\,0.012}\) \\

2.8B & \(1.874\,{\scriptstyle\pm\,0.003}\) & \(\underline{1.145}\,{\scriptstyle\pm\,0.003}\) & \(\mathbf{1.144}\,{\scriptstyle\pm\,0.003}\) & \(\mathbf{1.144}\,{\scriptstyle\pm\,0.003}\) \\

\bottomrule\end{tabular}
\end{table}

%% file: arxiv/sections/10_experiment_appendix.tex
\section{Experiment and Reproducibility Details}
\label{app:experiment-details}

We retain the scientific configuration and seed- or item-level measurements
for every reported result. This appendix specifies the sampling scheme,
numerical formats, software environment, and aggregation procedure; the
accompanying code instantiates the complete grids.

\paragraph{Software and execution framework.}
The local central-processing-unit (CPU) runs used Python 3.11.15 with NumPy
2.0.2, SciPy 1.13.1, Pandas 2.3.3, PyArrow 21.0.0, and Matplotlib 3.9.4.
The graphics-processing-unit (GPU) runs used Python 3.12.3, PyTorch 2.13.0
with CUDA 13.0, TorchAO 0.18.0, Transformers 5.16.1, Datasets 5.0.1, and
Hugging Face Hub 1.29.0. The \texttt{finitebit\_exp} 0.1.0 package defines the
scientific grids in Python. PyTorch automatic differentiation supplies the
Stage A and Stage V gradients in the mini-batch implementation. The
weighted-average FP64 reference implements the same explicit scalar gradient
on its theorem-specified active set. Independent worker processes run the
two-stage training jobs. Pythia models are
loaded once per model--format bundle and evaluated in inference mode. CPU
experiments ran on an Apple M5 system; GPU experiments used an NVIDIA RTX PRO
6000 Blackwell Server Edition. Raw seed- and item-level measurements are
stored in Parquet files.

\paragraph{Pythia items and operation count.}
Tokenization is performed once per model tokenizer on the raw WikiText-103
test text, after which contiguous 2,049-token windows provide 2,048 next-token
losses per item.  All numerical formats receive identical item indices.  The analytic
operation count includes the query, key, value, and output projections; the
two attention matrix products; and both feed-forward products.

\paragraph{Reproducible result generation.}
Empirical curves are generated from seed- or item-level records using the
uncertainty procedure specified in each caption.

The data--memory study crosses six energy--dimension profiles, seven model
cutoffs from 16 to 1,024, and three normalized data and state budgets. It
evaluates population excess CE over each finite source, including coordinates
predicted with zero logit. The mini-batch two-stage study additionally uses
the analytic tail proxy described in \cref{app:joint-resource-evaluation};
the fixed-source weighted-SGD study measures its omitted-coordinate loss directly. The
pretrained-model comparisons use identical items for every format, including
96 paired WikiText-103 windows per model--format configuration.

\subsection{Additional coupling, robustness, and pretrained-model experiments}

The additional experiments comprise a dyadic coupling test with 16 paired
seeds, a two-stage study with eight paired seeds, robustness tests for three
departures from exact power laws, and six Pythia sizes under six deployment
formats on two datasets. The two-stage learner differentiates
the Bernoulli loss through the attention-weighted network output and updates
the stored value parameters; the direct finite-sample parameter construction
is included as a separate likelihood-based comparator.

\paragraph{Six-scale weight-only post-training quantization.}
All Pythia models use the fixed \texttt{step143000} revision.  WikiText-103
uses 96 common non-overlapping 2,048-token windows, and LAMBADA uses 512 common
examples with last-token cross-entropy.  The formats are bfloat16 (BF16),
8-bit integer (INT8) with one tensor scale, INT8 with one scale per output
channel, 4-bit integer (INT4) round-to-nearest with groups of 128 weights, and
half-quadratic quantization (HQQ) for INT4 groups of 128 or 32
weights.  Every reported difference is paired with the BF16 result on the
same item.  Eager attention receives BF16 scores and returns FP32 softmax
probabilities in the measured software stack. The evaluation microbatch is
one for both datasets.

The item-level
records retain model, format, dataset, item index, cross-entropy, represented
weight error, quantized-parameter fraction, and output-logit divergence.
Appendix~\ref{app:extended-results} reports the detailed figures and tables.

%% file: sections/joint_resource_methods.tex
\section{Joint Resource Evaluation}
\label{app:joint-resource-evaluation}

\paragraph{Fixed-source comparisons.}
We vary data and value-state budgets while holding the generating distribution
fixed. The same-marginal experiment samples the infinite dyadic source at
\(\chi\in\{1,2\}\), \(\gamma=1\), with a common normalization for both
pairings. A sparse clipped Bernoulli estimator uses unbounded storage, isolating
the data exponent. Unobserved coordinates contribute their actual probability
mass times the zero-prediction excess cross-entropy. The seven sample budgets
are \(2^8,2^{10},\ldots,2^{20}\); the primary slope uses the last four budgets
for every seed and pairing.

\paragraph{The theorem's learning algorithm.}
For \((\chi,\gamma)=(0.5,0),(0.5,0.5),(1,0),(1,0.5)\), the source
contains 256 energy groups, each with \(d_j=\lceil j^\gamma\rceil\)
scalar coordinates. This source size is distinct from the number of sampled
regeneration blocks. We compare \(K=8,32\) candidates with eight independent seeds. At each
\(n\in\{2^8,2^{10},\ldots,2^{16}\}\), we restart the projected,
preconditioned SGD algorithm of \cref{full:thm:causal-realization} from zero,
using its active set, step sizes and weighted average. We implement the
numerical reference in FP64. Calibration uses
\(S_A=\lceil4K[\log(eK)+\sqrt n]\rceil\) independent blocks. Each fitted
state is encoded at \(B_V=8,16,32,64,128,256\) bits using the theorem's
midpoint scalar format. Inactive fitted coordinates are zero before encoding;
positive-width midpoint decoding also acts on those zero entries. An
oracle-coordinate clipped estimator uses the same labeled blocks with the
selected coordinate revealed.
We report both value and routing bits, calibration and prediction samples,
and the losses on 65,536 shared independent test contexts. Oracle and learned
routes, before and after quantization, define signed counterfactual differences
on these identical contexts.

\paragraph{Additional training and deployed storage.}
Starting from the Pythia 70M, 160M, 410M, 1B, 1.4B, and 2.8B checkpoints at
step143000, each of eight seeds follows one continued-training trajectory with checkpoints after
\(2^{20},2^{22},2^{24},2^{26}\) additional WikiText-103 tokens. Training uses
2048-token contexts, an effective batch of 32,768 tokens, FP32 parameters and
optimizer states, and BF16 computation. AdamW uses learning rate \(10^{-5}\),
\(\beta=(0.9,0.95)\), weight decay 0.1, gradient clipping at 1, and an
eight-step warm-up followed by a constant learning rate. Deployment uses BF16
or group-128 symmetric 2/3/4/6/8-bit round-to-nearest linear weights. Integer
payloads, scales, padding, retained BF16 parameters and metadata are all counted.
Evaluation uses 96 fixed test windows and 96 fixed validation windows.

\paragraph{Prediction on held-out resource configurations.}
We fit data-only, storage-only, additive and interacting response models on the
first three training budgets at 2/4/8-bit and BF16. The 3/6-bit formats test
storage interpolation, while the largest training budget tests data
extrapolation. Fits use validation losses; test-window losses remain untouched
until evaluation. Comparisons use paired training-seed uncertainty; the common
initial pretrained checkpoint is evaluated once and its uncertainty uses windows.
Let \(t=T/2^{20}\) denote additional tokens in units of the smallest budget,
and let \(s\) be deployment-package bytes divided by the BF16 package size.
Writing \(x=t^{-\alpha}\) and \(y=s^{-\beta}-1\), the four fitted CE models are
\[
\underbrace{c+ax}_{\text{data only}},\qquad
\underbrace{c+dy}_{\text{storage only}},\qquad
\underbrace{c+ax+dy}_{\text{additive}},\qquad
\underbrace{c+ax+dy+hxy}_{\text{interaction}}.
\]
We fit each model scale separately by least squares on validation CE, with
\(c,a,d\in[0,20]\), \(\alpha,\beta\in[0.01,4]\), and \(h\in[-20,20]\).
The fitting procedure requires convergence and calibration predictions in
the pre-specified range \([0,20]\) nats; held-out predictions are checked
against the same range. After observing the large 2-bit losses at 70M and
160M, we also analyze 4/8-bit and BF16 calibration, with 6-bit interpolation
and 4/6/8-bit/BF16 extrapolation. This sensitivity analysis uses the same
response functions and parameter bounds. Root-mean-square error (RMSE)
compares predicted and measured test CE across the held-out resource
configurations, after averaging the 96 windows within each configuration.

\paragraph{Route-adjusted and complete-source losses.}
Let \(L_{\mathrm{lr},q},L_{\mathrm{lr},u},L_{\mathrm{or},u}\) denote
finite-source excess CE for learned-route quantized, learned-route
unquantized, and known-route unquantized predictions. All subtract the same
source Bayes loss. The
route-adjusted analysis defines \(E=L_{\mathrm{lr},q}+M^{-\chi}\), \(R_+=(L_{\mathrm{lr},u}-L_{\mathrm{or},u})_+\), and
\(F_+=(L_{\mathrm{lr},q}-L_{\mathrm{lr},u})_+\). Its spectrum comparison uses \(E-R_+\); its component sum is
an upper envelope of \(E\). Here \(M\) is the energy-group cutoff and
\(M^{-\chi}\) is an analytic tail proxy. Our fixed-source study instead
evaluates the generating distribution without adding that proxy and retains all four
counterfactual losses. The oracle loss splits over actual held-out targets
inside and outside \(A_n\); adding the signed differences \(L_{\mathrm{lr},u}-L_{\mathrm{or},u}\) and
\(L_{\mathrm{lr},q}-L_{\mathrm{lr},u}\)
gives exactly the complete-source excess loss \(L_{\mathrm{lr},q}\). The tail term is the actual
zero-prediction KL on omitted targets. This identity holds per test context,
before any averaging, by \cref{prop:fixed-source-counterfactual}.

\paragraph{Empirical and known-energy allocation.}
The robustness study's adaptive allocator uses the count-corrected score
\[
s_i=\left[p_i\widehat\theta_i^2-\frac{p_i}{\max\{1,N_i\}}\right]_+,
\]
where \(N_i\) is the observed count and \(\widehat\theta_i\) the bounded
Bernoulli maximum-likelihood estimate. Both this rule and the known-energy
reference water-fill \(\sum_i s_i4^{-b_i}\), floor widths, and assign
remaining bits by marginal gains; the reference substitutes \(s_i=q_i\).
The empirical subtraction prioritizes resolved coordinates, whereas the
known-energy reference minimizes the true-energy distortion objective. These distinct
objectives allow different finite-sample CE even at the same state budget.
The unknown-exponent schedule in \cref{thm:adaptive} is a separate,
deterministic construction.

\begin{figure}[htbp]
\centering
\includegraphics[width=\linewidth]{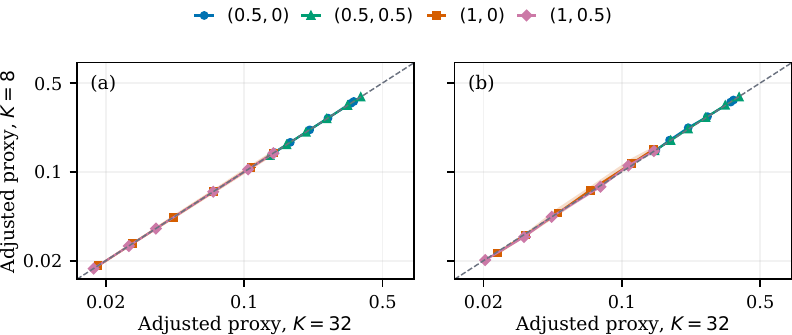}
\caption{\textbf{Candidate-count replication under route-adjusted loss.}
(a) Direct finite-sample construction; (b) mini-batch two-stage implementation.
Horizontal and vertical axes show \(E-R_+\) at \(K=32\) and \(K=8\),
respectively, using identical spectral bins.
Legends give \((\chi,\gamma)\); the dashed identity line is deterministic.
Intervals on both axes are 95\% BCa intervals over eight paired seeds.}
\label{fig:joint-legacy-replication}
\end{figure}

\begin{figure}[htbp]
\centering
\includegraphics[width=\linewidth]{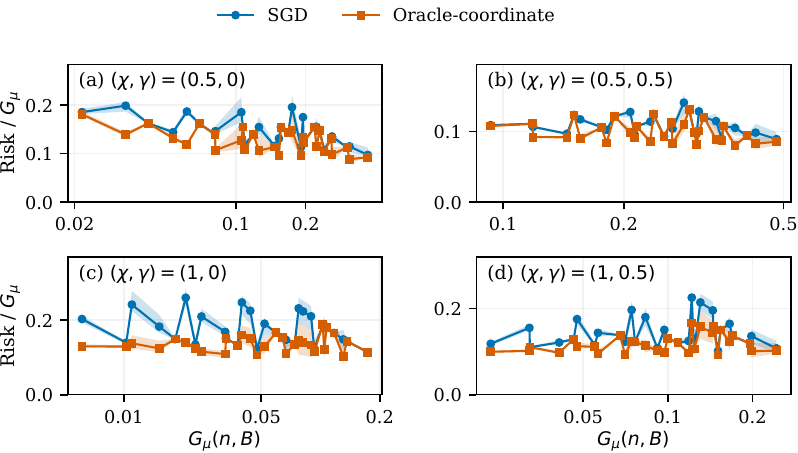}
\caption{\textbf{Unbinned risk-to-spectrum ratios on fixed sources.}
Panels (a)--(d) use the four profiles of \cref{fig:spectrum-collapse}, at \(K=32\).
The horizontal axis is \(G_\mu\); the vertical axis is measured excess CE
 divided by \(G_\mu\). Each point represents a distinct sample--state budget pair;
lines connect budget pairs ordered by
\(G_\mu\), rather than a fitted response. Blue/orange denote weighted SGD and
the oracle-coordinate estimator. Bands retain the raw 95\% BCa intervals
over eight paired seeds.}
\label{fig:joint-fixed-diagnostics}
\end{figure}

%% file: sections/joint_resource_proofs.tex
\section{Total Resources and Fixed-Source Evaluation}
\label{app:joint-resource-proofs}

The following consequence accounts jointly for the value-learning and routing
stages. Write \(q_{(r)}\) for the decreasing coordinate energies and \(\xi\)
for their decay exponent.

\begin{proposition}[Total sample and learned-state requirements]
\label{prop:total-resources}
Fix an integer \(K\ge1\), \(R>0\), and \(0<\eta\le1\). Use the
iid-candidate source, public frequencies and calibration supervision of
\cref{full:thm:causal-realization}, with unit midpoint formats. Suppose
\(cr^{-(1+\xi)}\le q_{(r)}\le Cr^{-(1+\xi)}\), where \(\xi>0\)
and \(0<c\le C<\infty\) are fixed. Let \(C_L\ge1\) be a constant in
its four-term risk upper bound. Set
\[
C_D=2+C/\xi,\quad a=\max\{1,(1+\xi)/(2\ln2)\},\quad
C_B=C(1+1/\xi)(2a)^\xi.
\]
Let \(C_A(\eta)\) be an upper-bound constant in the routing hitting-time
bound of \cref{full:prop:online-routing} for thresholds at most \(1/16\).
For \(0<\varepsilon\le1\), choose
\begin{align*}
h&=\min\{1/16,\varepsilon/(4C_L)\},\\
n&=\max\{1,\lceil(4C_LC_D/\varepsilon)^{(1+\xi)/\xi}\rceil\},\\
B_V&=\max\{1,\lceil(4C_LC_B/\varepsilon)^{1/\xi}\rceil\}.
\end{align*}
For \(K\ge2\), take
\begin{align*}
S_A&=\lceil C_A(\eta)K\{h^{-1/2}+\ln(eK)\}\rceil,\\
B_A&=K\{\lceil\log_2(S_A+1)\rceil+\lceil\log_2(K+1)\rceil\};
\end{align*}
for \(K=1\), take \(S_A=B_A=0\). If deployment arithmetic has mean
squared logit error at most \(\varepsilon/(4C_L)\), the weighted-SGD
learner has expected excess risk at most \(\varepsilon\), uses
\(N=n+S_A\) blocks and retains at most \(B=B_V+B_A\) learned bits.
At fixed source and routing constants,
\[
N=O(\varepsilon^{-(1+\xi)/\xi}),\quad B=O(\varepsilon^{-1/\xi}),
\qquad S_A/n\to0,\quad B_A/B_V\to0.
\]
For \(\chi>0\), \(\gamma\ge0\), block energies
\(a_j\asymp j^{-(1+\chi)}\), dimensions \(d_j\asymp j^\gamma\)
and equal coordinate energies \(q_{j,\ell}=a_j/d_j\) within each block,
the coordinate exponent is \(\xi=\chi/(1+\gamma)\).
\end{proposition}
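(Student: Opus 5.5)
The plan is to start from the four-term bound \cref{full:eq:causal-realization} of \cref{full:thm:causal-realization} with unit formats, in the form
\[
\E[\mathcal R_\theta(\widehat F)-\Linf]\le C_L\{\Phi_\mu(n^{-1})+\Phi_\mu(\tau_{B_V})+\mathfrak r_K(S_A)+\mathcal E_{\rm arithmetic}\}.
\]
I would show that each of the four terms is at most \(\varepsilon/(4C_L)\) and then sum them. The arithmetic term holds by hypothesis. For routing with \(K\ge2\), the upper-bound half of \cref{full:prop:online-routing} applies at threshold \(h\le1/16\). Since \(S_A\ge C_A(\eta)K\{h^{-1/2}+\ln(eK)\}\), it gives \(\mathfrak r_K(S_A)\le h\le\varepsilon/(4C_L)\). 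For \(K=1\) the leakage is zero. The route bit count \(B_A\) is exactly the one exhibited at the end of that theorem's proof, so the retained state is at most \(B_V+B_A\) bits.

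For the data term I would use the ranked comparison directly rather than \cref{full:cor:spectrum-rates}, because the constants must be explicit. Put \(m=n^{1/(1+\xi)}\) and split \(\Phi_\mu(n^{-1})=\sum_r\min\{q_{(r)},n^{-1}\}\) at \(r\le m\) and \(r>m\). The head contributes at most \((m+1)/n\le2m/n=2n^{-\xi/(1+\xi)}\). The tail contributes at most \(C\sum_{r>m}r^{-(1+\xi)}\le (C/\xi)m^{-\xi}\), using an integral comparison that needs a little care when \(m<1\). Together these give \(\Phi_\mu(n^{-1})\le C_Dn^{-\xi/(1+\xi)}\), and the choice of \(n\) makes this at most \(\varepsilon/(4C_L)\).

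For the storage term, \(\Phi_\mu(\tau_{B_V})=D_q(B_V)\) by \cref{full:thm:spectrum}, so any feasible allocation gives an upper bound. I would take a prefix water-filling allocation against the envelope \(Cr^{-(1+\xi)}\) on the top \(M\) ranks, with \(b_r=\frac{1+\xi}{2}\log_2(M/r)\) plus a constant. Its total is \(\frac{1+\xi}{2\ln2}\ln(M^M/M!)\le\frac{1+\xi}{2\ln2}M\le aM\). Choosing \(M=\lfloor B_V/(2a)\rfloor\) or a similar value keeps this feasible, and the represented distortion is then of order \(M\cdot CM^{-(1+\xi)}\). The omitted tail is at most \((C/\xi)M^{-\xi}\). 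Matching these pieces to the stated \(C_B=C(1+1/\xi)(2a)^\xi\), so that \(D_q(B_V)\le C_BB_V^{-\xi}\), is the main bookkeeping obstacle. I would first try \(M=B_V/(2a)\) with real-valued bits, which is allowed since \(D_q\) is a continuous infimum, and the integer case \(M<1\) then needs a separate check. If the constant does not match exactly, the weaker statement with some \(C_B\) still gives all the asymptotic claims.

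Finally, \(n\asymp\varepsilon^{-(1+\xi)/\xi}\) and \(B_V\asymp\varepsilon^{-1/\xi}\). On the other hand, \(S_A=O(h^{-1/2})=O(\varepsilon^{-1/2})\), and \((1+\xi)/\xi>1>1/2\), so \(S_A/n\to0\). Also \(B_A=O(\log(1/\varepsilon))\), so \(B_A/B_V\to0\), and the stated orders of \(N\) and \(B\) follow. The block-profile exponent \(\xi=\chi/(1+\gamma)\) comes from the counting in \cref{full:cor:spectrum-rates}. There \(\mathcal N_\mu(t)\asymp t^{-\rho}\) with \(\rho=(1+\gamma)/(1+\chi+\gamma)\), and inverting this count gives \(q_{(r)}\asymp r^{-1/\rho}=r^{-(1+\xi)}\).
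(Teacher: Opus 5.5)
Your outline follows the paper's proof almost step for step: four terms each at most \(\varepsilon/(4C_L)\), a cut near \(n^{1/(1+\xi)}\), the logarithmic prefix allocation with \(\sum_{r\le m}\ln(m/r)\le m\), the route code from the realization theorem, the same resource ratios, and the same block-profile inversion. However, the step you call bookkeeping is not optional. The proposition claims risk at most \(\varepsilon\) for the particular \(B_V\) defined through \(C_B\), so you need \(D_q(b)\le C_Bb^{-\xi}\) for all \(b\ge1\) with exactly this constant. A bound \(C_B'b^{-\xi}\) with \(C_B'>C_B\) only gives risk at most \(\varepsilon(3+C_B'/C_B)/4>\varepsilon\), so your fallback proves only the asymptotic part.

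Your candidate \(M=\lfloor B_V/(2a)\rfloor\) yields \(C(1+1/\xi)M^{-\xi}\), which exceeds \(C_BB_V^{-\xi}\) whenever \(B_V/(2a)\notin\mathbb N\), by a factor approaching \(2^\xi\). The real-valued choice \(M=B_V/(2a)\) can be rescued, but not with the separate tail bound you state. For non-integer \(M\), the inequality \(\sum_{r>M}r^{-(1+\xi)}\le M^{-\xi}/\xi\) is false: for \(\xi=1\) and \(M=1.9\) the left side is \(\pi^2/6-1\approx0.645>1/1.9\). You would have to bound head and tail jointly. With \(k=\lfloor M\rfloor\), the difference \((1+1/\xi)M^{-\xi}-kM^{-1-\xi}-\sum_{r>k}r^{-1-\xi}\) is nonincreasing on \([k,k+1)\), and its limit is \((k+1)^{-\xi}/\xi-\sum_{r>k+1}r^{-1-\xi}\ge0\).

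The paper's route is simpler: spend the \emph{whole} budget on the integer prefix \(m=\lfloor b/a\rfloor\). This is feasible because the unfloored widths \(b_r=\frac{1+\xi}{2}\log_2(m/r)\) cost at most \(\frac{1+\xi}{2\ln2}m\le am\le b\), so drop your ``plus a constant''. Retained and omitted energy are then at most \(Cm^{-\xi}\) and \((C/\xi)m^{-\xi}\) by a valid integral comparison. Since \(\lfloor b/a\rfloor\ge b/(2a)\) for \(b\ge2a\), this gives exactly \(C_Bb^{-\xi}\). The factor \(2\) in \(C_B\) pays for this floor, not for halving the budget. For \(1\le b<2a\), the zero allocation gives \(\sum_rq_{(r)}\le C(1+1/\xi)\le C_Bb^{-\xi}\) because \(2a/b>1\).

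Two smaller repairs are also needed. In the data term the same integer issue appears. Your head bound \((m+1)/n\) already allows cutting at the integer \(j=\lfloor m\rfloor+1\) (the paper uses \(\lceil n^{1/(1+\xi)}\rceil\)). The integral comparison then bounds the tail by \((C/\xi)j^{-\xi}\le(C/\xi)m^{-\xi}\), giving exactly \(C_D\); the case \(m<1\) never occurs because \(n\ge1\). For routing, \(C_A(\eta)\) bounds a \emph{hitting time}, and your \(S_A\) may exceed that time. To conclude \(\mathfrak r_K(S_A)\le h\) you therefore need \(\mathfrak r_K\) to be nonincreasing in \(S_A\). This holds because \(\delta_t\) decreases in \(t\) and \(N\sim\operatorname{Binomial}(S_A,1/K)\) is stochastically increasing in \(S_A\), which is the monotone coupling the paper invokes. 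With these fixes the rest goes through as you describe.
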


\begin{proof}
For any integer \(m\ge1\), the energy envelope and integral comparison give
\[
\Phi_\mu(n^{-1})\le m/n+C\sum_{r>m}r^{-1-\xi}
\le m/n+(C/\xi)m^{-\xi}.
\]
Choose \(m=\lceil n^{1/(1+\xi)}\rceil\). Since
\(n^{1/(1+\xi)}\le m\le2n^{1/(1+\xi)}\), this is at most
\(C_Dn^{-\xi/(1+\xi)}\), including \(n=1\).

For a storage budget \(b\ge2a\), put \(m=\lfloor b/a\rfloor\) and
allocate \(b_r=(1+\xi)\log_2(m/r)/2\) for \(1\le r\le m\), with
zero widths thereafter. The inequality
\[
\sum_{r=1}^m\ln(m/r)\le\int_0^m\ln(m/x)\,dx=m
\]
shows that total width is at most \((1+\xi)m/(2\ln2)\le am\le b\).
Each retained distortion term is at most \(Cm^{-1-\xi}\), and the tail
is at most \((C/\xi)m^{-\xi}\). Since \(m\ge b/(2a)\),
\[
D_q(b)\le C(1+1/\xi)m^{-\xi}\le C_Bb^{-\xi}.
\]
For \(1\le b<2a\), zero allocation has distortion at most
\(\sum_rq_{(r)}\le C(1+1/\xi)\le C_Bb^{-\xi}\), proving the
same bound for all \(b\ge1\). Summability controls every countable tail.
The theorem's scalar implementation floors public continuous widths; this
cannot exceed the bit budget and multiplies each distortion term by at most
four. Its constant is already included in \(C_L\). A public allocation
needs no additional learned index code.

The ceilings in \(n,B_V\) can only decrease their nonincreasing error
functionals, so each is at most \(\varepsilon/(4C_L)\). For \(K\ge2\),
the positive threshold \(h\le1/16\) lies in the routing theorem's range.
The prescribed \(S_A\) exceeds its hitting-time bound. Margin increases
with row visits, and counts couple monotonically as calibration blocks are
added; leakage therefore remains below \(h\le\varepsilon/(4C_L)\).
For \(K=1\), leakage is zero. The arithmetic premise bounds the fourth
term. Substitution in \cref{full:eq:causal-realization} now gives risk at
most \(C_L\cdot4\varepsilon/(4C_L)=\varepsilon\).

The independent stages consume distinct blocks, totaling \(n+S_A\).
Each routing row is reconstructed from its count in \(\{0,\ldots,S_A\}\)
and its destination or unseen symbol. The displayed integer-width fields
encode all these possibilities. Concatenating their \(B_A\)-bit code with
the value code gives at most \(2^{B_A+B_V}\) retained states. Numerical
reconstruction is charged to the arithmetic term.

For sufficiently small \(\varepsilon\), \(h=\varepsilon/(4C_L)\).
The explicit resource choices then yield
\begin{align*}
S_A/n&=O_K(\varepsilon^{1/2+1/\xi}+\varepsilon^{1+1/\xi}),\\
B_A/B_V&=O_K(\varepsilon^{1/\xi}\ln(1/\varepsilon)).
\end{align*}
Both tend to zero since \(\xi>0\). Adding the routing costs gives the
two total-resource orders.

Finally, every coordinate in block \(j\) has energy comparable to
\(j^{-(1+\chi+\gamma)}\), uniformly over its \(d_j\asymp j^\gamma\)
coordinates. Upper and lower threshold comparisons therefore give
\[
\#\{q_i\ge t\}\asymp
\sum_{j\le c_0t^{-1/(1+\chi+\gamma)}}j^\gamma
\asymp t^{-(1+\gamma)/(1+\chi+\gamma)},
\]
with separate positive \(c_0\) for the upper and lower comparisons.
Inversion proves \(q_{(r)}\asymp r^{-(1+\chi+\gamma)/(1+\gamma)}\)
and the asserted \(\xi\). The argument allows bounded nonmonotonicity
in the original block labels.
\end{proof}

\begin{proposition}[Fixed-source loss and paired counterfactuals]
\label{prop:fixed-source-counterfactual}
Let \(p\) be a fixed probability distribution on a finite or countable
coordinate set, \(|\theta_i|\le\Delta_i\le R\),
\(q_i=p_i\Delta_i^2\), and
\(d(\theta,z)=D_{\rm KL}(\operatorname{Ber}(\sigma(\theta))
\Vert\operatorname{Ber}(\sigma(z)))\).
Conditional on a fitted table \(|w_i|\le R\) supported on \(S\), a fresh
oracle-routed example has excess risk
\begin{align*}
L(w,\mathrm{oracle})
&=\sum_{i\in S}p_i d(\theta_i,w_i)+\sum_{i\notin S}p_i d(\theta_i,0),\\
0\le\sum_{i\notin S}p_i d(\theta_i,0)&\le\tfrac18\sum_{i\notin S}q_i.
\end{align*}
If all \(|\theta_i|=a\), the omitted term equals \(p(S^c)d(a,0)\),
also for a data-dependent \(S\).

On the same full candidate context, let \(L_{rv}\) be excess loss with
route \(r\in\{0,1\}\) (oracle, learned) and value state
\(v\in\{0,1\}\) (unquantized, quantized), with all four predicted
logits in \([-R,R]\). Pointwise and under every
common empirical or population average,
\begin{align*}
L_{11}&=L_{00}+(L_{10}-L_{00})+(L_{11}-L_{10})\\
      &=L_{00}+(L_{01}-L_{00})+(L_{11}-L_{01}).
\end{align*}
The two format contrasts differ by \(L_{11}-L_{10}-L_{01}+L_{00}\);
individual contrasts may be negative. If two predictions in \([-R,R]\)
agree outside an event \(A\), their expected CE differs by at most
\(2R\Pr(A)\). If \(A\) is that some candidate lies in \(T\), then
\(\Pr(A)\le Kp(T)\).
\end{proposition}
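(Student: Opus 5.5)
The statement has two independent parts, and I would handle them separately: the oracle-routed risk formula with its tail bound, and the purely algebraic counterfactual identities together with the event-perturbation bound.

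\emph{Oracle-routed risk.} Condition on the fitted table \(w\). Under oracle routing, a fresh example selects coordinate \(I\) with law \(p\) and predicts \(w_I\), with \(w_I=0\) for \(I\notin S\). The excess cross-entropy given \(I=i\) is the Bernoulli KL \(d(\theta_i,w_i)\), because held-out cross-entropy minus Bayes loss is the Bregman divergence of \(A(t)=\log(1+e^t)\) (Step 1 of \cref{full:thm:finite-bit}). Averaging over \(I\) gives the displayed decomposition.

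For the tail bound I would use \(A''\le 1/4\) everywhere, so Taylor's theorem gives \(d(\theta,0)\le\theta^2/8\le\Delta_i^2/8\). Multiplying by \(p_i\) yields \(\sum_{i\notin S}p_i d(\theta_i,0)\le\frac18\sum_{i\notin S}q_i\). If all \(|\theta_i|=a\), then \(d(\theta_i,0)=d(a,0)\) by symmetry of the Bernoulli KL under \(\theta\mapsto-\theta\), \(z\mapsto -z\), since \(z=0\) is fixed. The term therefore equals \(p(S^c)d(a,0)\). This still holds when \(S\) is data-dependent, because we condition on the fitted state and the test example is independent of it.

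\emph{Counterfactual identities.} Both telescoping identities are trivial algebra. They hold for each fixed context and so survive any common linear average. The difference between the two format contrasts, \((L_{11}-L_{10})-(L_{01}-L_{00})\), is the stated interaction term. Negativity of individual contrasts is worth a remark only: a quantization error can happen to move a logit toward \(\theta\).

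\emph{Event-perturbation bound.} Suppose two predictions \(z,z'\in[-R,R]\) agree off \(A\). The loss difference \(\ell(Y,z)-\ell(Y,z')\) vanishes off \(A\). On \(A\) it is bounded by \(|z-z'|\sup|\partial_z\ell|\le 2R\cdot 1\), since \(\partial_z\ell=\sigma(z)-Y\in[-1,1]\). Taking expectations gives at most \(2R\Pr(A)\). Finally, if \(A\) is the event that some candidate lies in \(T\), a union bound over the \(K\) iid candidates gives \(\Pr(A)\le Kp(T)\).

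I expect no real obstacle here. The only point needing care is that the identities use \emph{common} averages over identical contexts, so that the cancellations hold pointwise before any expectation is taken.
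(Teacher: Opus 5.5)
Your proposal is correct and follows the paper's proof essentially step for step. The shared steps are: the Bregman/KL identity averaged over a fresh target with law \(p\); the bound \(A''\le 1/4\) with Taylor's remainder and the outcome-exchange symmetry \(d(-a,0)=d(a,0)\); pointwise telescoping preserved by common averages; and the \(1\)-Lipschitz per-label loss plus a union bound, which, as the paper notes, needs only the candidate marginals and not independence.

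The one place to firm up is the existence claim that a format contrast can be negative, which the paper makes concrete with an example. The one-bit midpoint format on \([-1,1]\) sends \(w=0.1\) to \(0.5\), so at \(\theta=0.5\) the contrast is \(-d(0.5,0.1)<0\), while at \(\theta=0.1\) it is \(d(0.1,0.5)>0\).
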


\begin{proof}
For target coordinate \(i\), conditional expected CE minus Bayes CE is
exactly \(d(\theta_i,z)\). Conditional on training, a fresh target retains
law \(p\), giving \(\sum_i p_i d(\theta_i,w_i)\). Nonnegative terms
permit countable summation by monotone convergence. Splitting at \(S\),
where the fitted table is zero on its complement, proves the oracle identity.

Put \(A(z)=\ln(1+e^z)\). Direct substitution gives
\(d(\theta,z)=A(z)-A(\theta)-A'(\theta)(z-\theta)\).
Since \(0<A''(z)=\sigma(z)(1-\sigma(z))\le1/4\), Taylor's integral
remainder implies \(0\le d(\theta,0)\le\theta^2/8\).
Multiply by \(p_i\), use \(\theta_i^2\le\Delta_i^2\), and sum to
obtain the tail bound. Exchanging Bernoulli outcomes gives
\(d(-a,0)=d(a,0)\), proving the constant-magnitude formula conditional
on each fitted support.

For learned routing, \(\sum_s a_{Qs}w_{C_s}\) can be nonzero even when
the target is outside \(S\), because a distractor may lie in \(S\).
The four losses therefore evaluate the full context with a common target
law. Cancelling intermediate terms proves both identities pointwise;
linearity preserves them under averaging. Subtracting the two format
contrasts gives their stated interaction. For a concrete sign example, the
one-bit midpoint format on \([-1,1]\) sends \(w=0.1\) to \(0.5\).
At \(\theta=0.5\), the contrast is \(-d(0.5,0.1)<0\); at
\(\theta=0.1\), it is \(d(0.1,0.5)>0\).

Finally, per-label CE \(\ell_y(z)=A(z)-yz\) has derivative
\(\sigma(z)-y\) of absolute value at most one. Hence
\(|\ell_y(z)-\ell_y(z')|\le|z-z'|\le2R\). Multiplying by the
indicator of \(A\) and averaging proves the approximation bound, also
after subtracting the common Bayes risk. The union bound gives
\(\Pr(A)\le\sum_{s=1}^K\Pr(C_s\in T)=Kp(T)\); it needs only
the candidate marginals, not independence.
\end{proof}